\def\eqref#1{equation~\ref{#1}}
\def\1{\bm{1}}
\DeclareMathAlphabet{\mathsfit}{\encodingdefault}{\sfdefault}{m}{sl}
\SetMathAlphabet{\mathsfit}{bold}{\encodingdefault}{\sfdefault}{bx}{n}
\newcommand{\KL}{D_{\mathrm{KL}}}
\newcommand{\ie}{\textit{i.e.}}
\newcommand{\eg}{\textit{e.g.}}
\newcommand{\myparagraph}[1]{\vspace{-0.12in}\paragraph{#1}}
\newtheorem{theorem}{Theorem}
\newtheorem{definition}{Definition}
\newtheorem{corollary}{Corollary}
\newtheorem{assumption}{Assumption}
\newtheorem{proposition}{Proposition}
\Crefname{table}{Tab.}{Tabs.}
\Crefname{figure}{Fig.}{Figs.}
\Crefname{equation}{Eq.}{Eqs.}
\Crefname{thm}{Theorem.}{Theorems.}
\definecolor{Red}{rgb}{0.768, 0.054, 0.054}
\definecolor{Blue}{rgb}{0.152, 0.294, 0.925}
\definecolor{Green}{rgb}{0,0.4,0.7}
\newcolumntype{L}{>{\columncolor{gray!10}}c} % middle columns
\newcolumntype{G}{>{\columncolor{yellow!15}}c} % middle columns
\newcolumntype{H}{>{\columncolor{green!15}}c} % right columns
\title{\textbf{\texttt{PCoreSet}}: Effective Active Learning through Knowledge Distillation from Vision-Langauge Models}
\author{%
    Seongjae Kang$^{\spadesuit,\dagger}$ \quad
    Dong Bok Lee$^{\clubsuit,\dagger}$ \quad
    Hyungjoon Jang$^{\spadesuit}$ \\
    \textbf{Dongseop Kim}$^{\spadesuit}$ \quad
    \textbf{Sung Ju Hwang}$^{\clubsuit,\diamondsuit}$ \\[0.1em]
    $^{\spadesuit}$VUNO Inc. \quad
    $^{\clubsuit}$KAIST \quad
    $^{\diamondsuit}$DeepAuto.ai \\[0.1em]
    \texttt{\{seongjae.kang, hyungjoon.jang, dongseop.kim\}@vuno.co} \\
    \texttt{\{markhi, sjhwang\}@kaist.ac.kr} \\[0.1em]
    \footnotesize{$^{\dagger}$Equal contribution}
}
\begin{document}

\maketitle

\vspace{-0.05in}

\definecolor{linkcolor}{RGB}{5, 70, 170}

\begin{abstract}

Knowledge distillation (KD) is a widely used framework for training compact, task-specific models by transferring the knowledge from teacher models.
However, its application to \textit{active learning} (AL), which aims to minimize annotation costs through iterative sample selection, remains underexplored.
This gap stems from the fact that KD typically assumes access to sufficient labeled data, whereas AL operates in data-scarce scenarios where task-specific teacher models are often unavailable.
In this paper, we first introduce \textbf{ActiveKD}, a framework that integrates AL with KD by leveraging the zero- and few-shot capabilities of large vision-language models (VLMs).
A key aspect of ActiveKD is the \textit{structured prediction bias} of VLMs---\ie, their predictions form clusters in the probability space.
We regard this structure as an inductive bias of the teacher model, capturing generalizable output patterns beneficial to student learning.
To exploit this bias, we propose \textbf{\texttt{P}}robabilistic \textbf{\texttt{CoreSet}} (\textbf{\texttt{PCoreSet}}), a selection strategy that maximizes coverage in the probability space rather than the feature space.
\textbf{\texttt{PCoreSet}} strategically selects probabilistically diverse unlabeled samples, facilitating more efficient transfer of teacher knowledge under limited annotation budgets.
Extensive evaluations on 11 datasets show that ActiveKD consistently improves performance across selection methods (\eg, +29.07\% on ImageNet, averaged over methods).
% Under ActiveKD, \textbf{\texttt{PCoreSet}} ranks first in 64/73 settings ($\approx$87.7\%) across 5 student and 3 teacher architectures, losing only early round (2, 3).
Under ActiveKD, \textbf{\texttt{PCoreSet}} ranks first in 64/73 settings ($\approx$87.7\%) across 5 student and 3 teacher networks, always achieving the best performance except for first 2 AL rounds.
Our code is available at \href{https://github.com/erjui/PCoreSet}{\textcolor{linkcolor}{https://github.com/erjui/PCoreSet}}.

% \textbf{\texttt{PCoreSet}} consistently outperforms existing selection methods within the ActiveKD framework, advancing research at the intersection of AL and KD.

\end{abstract}

%%%%%%%%%%%%%%%%%%%%%%%%%%%%%%%%%%%%%%%%%%%%%%%%
%%%%%%%%%%%%%%%%%%%%%%%%%%%%%%%%%%%%%%%%%%%%%%%%
%%%%%%%%%%%%%%%%%%%%%%%%%%%%%%%%%%%%%%%%%%%%%%%%
    
% This paper addresses the challenge of training compact task-specific models with limited labeled data
% by integrating three key research areas:
% 1) Generalist models - leveraging foundation models and VLMs with zero-shot capabilities as teachers
% 2) Active learning - selecting the most informative samples to maximize learning with minimal annotation
% 3) Knowledge distillation - transferring knowledge from large teacher models to compact student models

% We identify a critical challenge in this integration: teacher prediction bias that propagates through
% distillation and affects active selection. Our contributions include:
% - A novel framework combining knowledge distillation with active learning
% - Probabilistic CoreSet (PCoreSet), a new selection method that maximizes probability space diversity
% - Extensive validation across 11 datasets demonstrating consistent performance improvements

% Our approach effectively addresses the practical constraints of human annotation while mitigating
% teacher bias to enable more efficient and robust active learning with knowledge distillation

%%%%%%%%%%%%%%%%%%%%%%%%%%%%%%%%%%%%%%%%%%%%%%%%
%%%%%%%%%%%%%%%%%%%%%%%%%%%%%%%%%%%%%%%%%%%%%%%%
%%%%%%%%%%%%%%%%%%%%%%%%%%%%%%%%%%%%%%%%%%%%%%%%

\vspace{-0.05in}
\section{Introduction}
\vspace{-0.05in}

% Recent advances in deep neural networks have focused on developing powerful generalist models capable of solving diverse tasks \citep{achiam2023gpt, chatgpt, liu2023visual, liu2024improved} or creating robust transferable models through intensive pretraining \citep{koroteev2021bert, radford2019language, he2016deep, dosovitskiy2020image, radford2021learning, jia2021scaling}.
% However, real-world applications often necessitate training compact task-specific models, with a significant challenge of obtaining task-specific data labeling due to high annotation costs.
% Active learning (AL) addresses this by iteratively selecting the most informative samples for oracle annotation \citep{ren2021survey, li2024survey}, particularly in pool-based scenarios where informative samples can be identified from large unlabeled data.
% Semi-supervised learning (SSL) naturally complements AL by leveraging extensive unlabeled data during training \citep{assran2021semi, cai2022semi, zheng2023simmatchv2, DHO}, enabling selection strategies tailored to the SSL framework \citep{gao2020consistency, lim2023active, rangnekar2023semantic, singh2024semi}.

Recent advances in deep neural networks have focused on developing powerful generalist models capable of solving diverse tasks~\citep{achiam2023gpt, chatgpt, liu2023visual, liu2024improved} or creating robust transferable models through intensive pretraining~\citep{koroteev2021bert, radford2019language, he2016deep, dosovitskiy2020image, radford2021learning, jia2021scaling}.
However, real-world applications often necessitate training compact task-specific models, with a significant challenge of obtaining task-specific labeled data due to high annotation costs.
Active learning (AL) addresses this by iteratively selecting the most informative samples for oracle annotation~\citep{ren2021survey, li2024survey}, particularly in pool-based scenarios where informative samples can be identified from large unlabeled data.
The availability of unlabeled data allows semi-supervised learning (SSL) to be applied to AL settings during training~\citep{assran2021semi, cai2022semi, zheng2023simmatchv2, DHO}, accompanying researches at the intersection of AL and SSL~\citep{gao2020consistency, lim2023active, rangnekar2023semantic, singh2024semi}.

Knowledge distillation (KD), meanwhile, is a widely used framework for training models by transferring knowledge from large pretrained teacher models~\citep{hinton2015distilling}.
Beyond its original purpose, KD can be naturally integrated into SSL framework as teacher models can provide its knowledge in the form of soft labels for unlabeled data~\citep{chen2020big, he2021semi, du2023semi, yang2025knowledge}.
However, despite its effectiveness, the application of KD in AL remains \textit{underexplored}.
% One of the major reasons is the mismatch in their assumptions: AL assumes that labels must be manually acquired for task-specific data through selective annotation, while KD typically assumes the existence of powerful task-specific teacher models trained on sufficient labeled data.
One major reason is the mismatch in their assumptions: AL assumes that labels must be manually acquired through selective annotation, while KD typically assumes the existence of powerful task-specific teacher models trained on sufficient labeled data.
However, considering the data-scarce settings common in AL, such teacher models are rarely available or practical to obtain.
% As a result, applying KD directly in AL scenarios is nontrivial, and calls for rethinking how teacher knowledge can be leveraged without relying on fully trained, task-specific teachers.

% \vspace{-0.3in}

\begin{figure}[t]
    \vspace{-0.25in}
    \centering
    \begin{subfigure}[b]{0.56\textwidth}
        \centering
        \includegraphics[width=\textwidth]{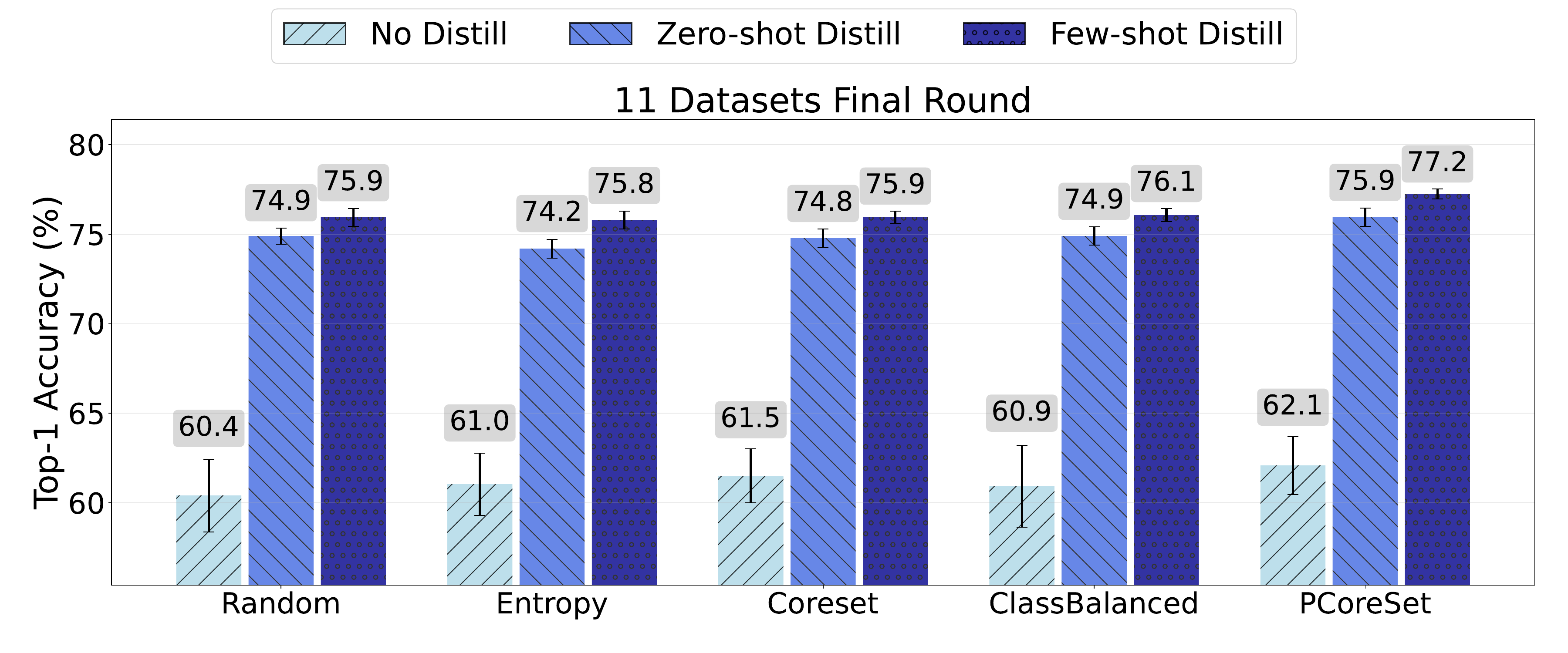}
    \end{subfigure}
    \hfill
    \begin{subfigure}[b]{0.42\textwidth}
        \centering
        \includegraphics[width=\textwidth]{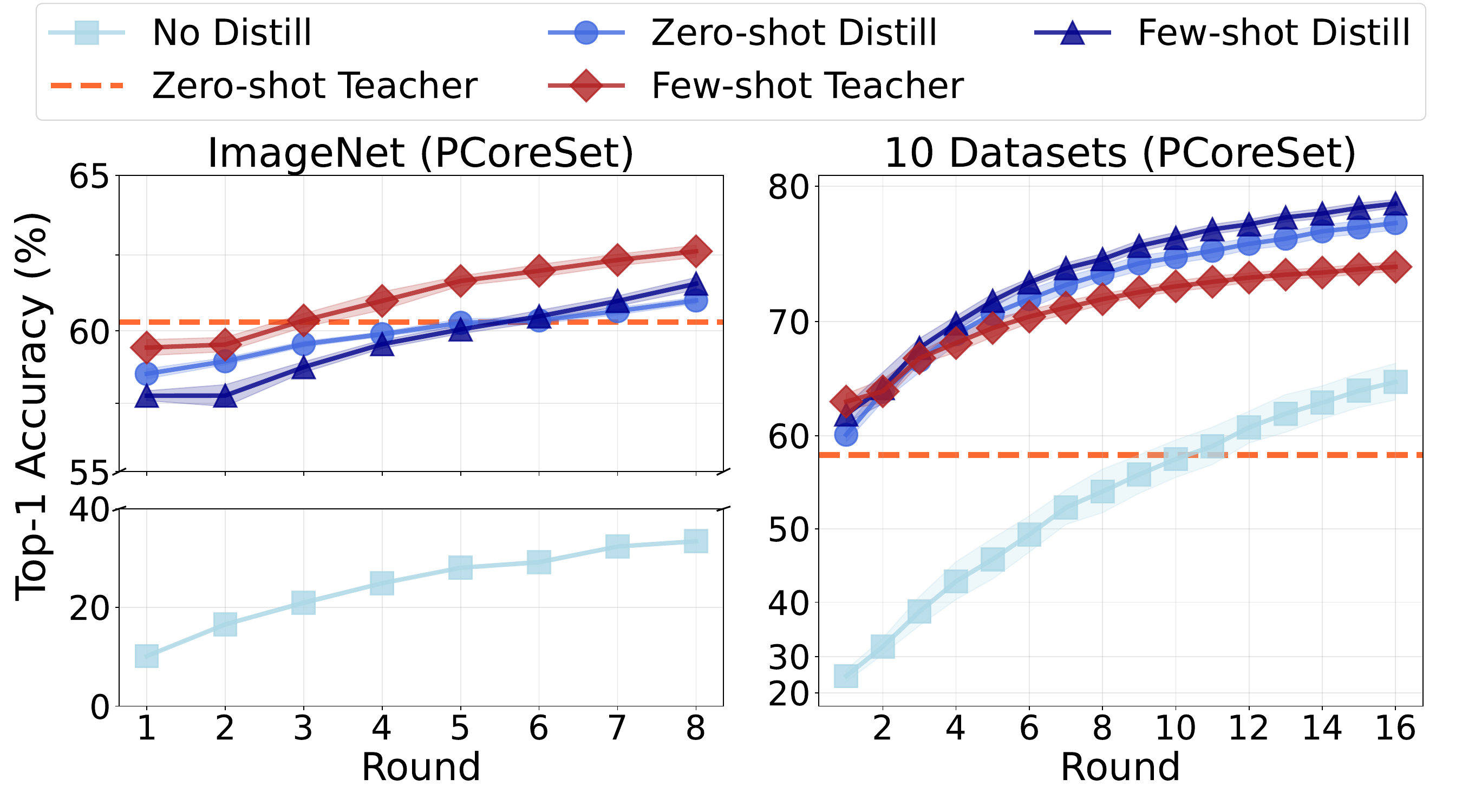}
    \end{subfigure}
    \vspace{-0.1in}
    % \vspace{-0.15in}
    \caption{
        \small
        %Experimental results across 11 datasets, including ImageNet.
        \textbf{(Left):} The proposed \textbf{ActiveKD framework consistently improves the final-round accuracy of all selection methods}, while \textbf{\texttt{PCoreSet}} \textbf{further outperforms baselines} when combined with ActiveKD.
        \textbf{(Right):} ActiveKD consistently improves the performance of \textbf{\texttt{PCoreSet}} across active learning rounds (No Distill vs. Zero-shot Distill), with further gains when using few-shot teachers (Few-shot Distill).
    }
    % \vspace{-0.15in}
    \vspace{-0.2in}
    \label{fig:combined}
\end{figure}

\begin{wrapfigure}{r}{0.4\textwidth}
    %\vspace{-0.2in}
    \centering
    \includegraphics[width=0.4\textwidth]{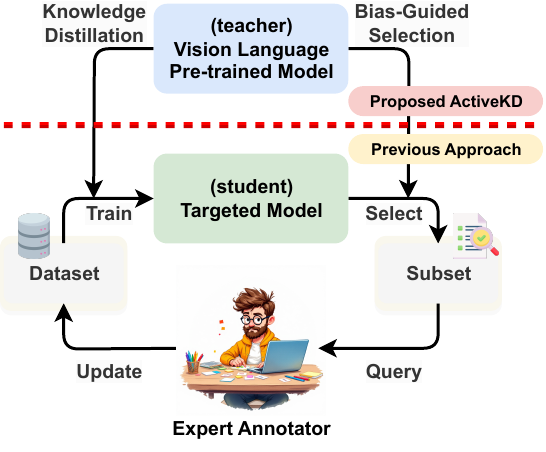}
    \vspace{-0.2in}
    \caption{\small
        An overview of ActiveKD.
    }
    \vspace{-0.1in}
    \label{fig:problem}
\end{wrapfigure}

Recently, vision-language models (VLMs) have emerged as powerful teachers in KD frameworks, as their rich representations---acquired through large-scale, unsupervised pretraining---enable effective knowledge transfer across both modalities and tasks~\citep{vemulapalli2023knowledge, wu2024cascade, mistretta2024improving}.
% Recent work~\citep{DHO} has also shown that distilling the knowledge of VLMs into compact, task-specific models is effective in label-scarce scenarios, such as few-shot or low-shot semi-supervised settings.
Recent work~\citep{DHO} has further demonstrated that transferring the knowledge of VLMs into task-specific models is effective in label-scarce scenarios, such as few-shot or low-shot semi-supervised settings.
% Motivated by these findings, we introduce \textbf{ActiveKD}, a framework that integrates AL with KD by leveraging the zero- and few-shot prediction capabilities of VLMs.
Motivated by these, we introduce \textbf{ActiveKD}, a framework that integrates AL with KD by leveraging the zero- and few-shot capabilities of VLMs.
% Specifically, in each AL round, we train a compact, task-specific student model using both the labeled data acquired up to that round and unlabeled data with soft predictions generated by the VLM teacher, as illustrated in \Cref{fig:problem}.
Specifically, in each AL round, we train a task-specific student model using both labeled data acquired up to that round and unlabeled data with soft predictions generated by the VLM teacher, as illustrated in \Cref{fig:problem}.

% In ActiveKD settings, we identify a key aspect of zero- or few-shot VLM teachers:
% these models exhibit \textit{structured prediction biases} acquired during pretraining and shaped by language prompts~\citep{bang2024active}, resulting in predictions that form distinct clusters in the probability space.
% We observe that this teacher bias propagates through distillation, subsequently influencing both the active selection mechanism and student model performance.
% Rather than viewing this as a limitation, we interpret this structure as an \textit{inductive bias} of the teacher model---capturing generalizable patterns in the output space that, when complemented by labeled data, can benefit student learning.
% To exploit this, we propose a \textit{simple and effective} method, \texttt{P}robabilistic \texttt{CoreSet} (\textbf{\texttt{PCoreSet}}), which selects coresets in the probability space rather than the feature space~\citep{sener2017active}.
% Specifically, it maximizes diversity in the categorical probability space by targeting underrepresented regions.
% In doing so, \textbf{\texttt{PCoreSet}} identifies samples that deviate from densely populated regions in the prediction space, effectively selecting underrepresented samples with limited annotation budgets.
In ActiveKD settings, we identify an aspect of VLM teachers:
they exhibit \textit{structured prediction biases} acquired during pretraining and shaped by language prompts~\citep{bang2024active}, resulting in predictions that form distinct clusters in the probability space as illustrated in \Cref{fig:activeKD}.
We observe that this bias propagates through distillation, influencing both the active selection mechanism and student model performance.
Rather than viewing this as a limitation, we interpret this structure as an \textit{inductive bias} of the teacher model---capturing generalizable patterns in the output space that, when complemented by labeled data, can benefit student learning.
To exploit this, we propose a \textit{simple and effective} method, \textbf{\texttt{P}}robabilistic \textbf{\texttt{CoreSet}} (\textbf{\texttt{PCoreSet}}), which selects coresets in the probability space rather than the feature space~\citep{sener2017active}.
Specifically, it maximizes diversity in the categorical probability space by targeting underrepresented regions with limited budgets. 

% To evaluate the empirical effectiveness of \textbf{ActiveKD} and \textbf{\texttt{PCoreSet}}, we conduct extensive experiments on 11 datasets spanning generic object recognition~\citep{russakovsky2015imagenet,fei2004learning}, fine-grained classification~\citep{nilsback2008automated,maji2013fine,parkhi2012cats}, domain-specific recognition~\citep{bossard2014food}, scene understanding~\citep{xiao2010sun}, texture analysis~\citep{cimpoi2014describing}, satellite imagery~\citep{helber2019eurosat}, and human action recognition~\citep{soomro2012ucf101}.
% As summarized in \Cref{fig:combined}, \textbf{ActiveKD consistently improves} performance across selection methods on all datasets (\eg, \textbf{+29.07\%} on ImageNet, averaged over methods), with further gains when using few-shot teachers.
% Moreover, under ActiveKD, \textbf{\texttt{PCoreSet}} \textbf{outperforms alternative selection strategies}, ranking first in \textbf{64/73} settings ($\approx$ \textbf{87.67\%}) across 5 student and 3 teacher architectures, with either 8 or 15 AL rounds.

To evaluate the empirical effectiveness of \textbf{ActiveKD} and \textbf{\texttt{PCoreSet}}, we conduct extensive experiments on 11 datasets spanning diverse domains and tasks detailed in \Cref{sec:datasets}.
As summarized in \Cref{fig:combined}, \textbf{ActiveKD consistently improves} performance across selection methods on all datasets (\eg, \textbf{+29.07\%} on ImageNet, averaged over methods), with further gains when using few-shot teachers.
Moreover, under ActiveKD, \textbf{\texttt{PCoreSet}} \textbf{outperforms alternative selection strategies}, ranking first in \textbf{64/73} settings ($\approx$ \textbf{87.67\%}) across 5 student and 3 teacher architectures, with either 8 or 16 rounds.

Our contributions and findings are summarized as follows:

\begin{itemize}
[itemsep=1mm,parsep=1pt,topsep=2pt,leftmargin=*]
\vspace{-0.05in}

% \item Leveraging recent advances in vision-language models (VLMs), we introduce Active Knowledge Distillation (ActiveKD), a framework that integrates KD into the active learning (AL) setting by utilizing the zero- or few-shot capabilities of VLMs.
% \item We first introduce active knowledge distillation (\textbf{ActiveKD}), a framework that leverages the zero- and few-shot capabilities of Vision-Language Models (VLMs) to integrate KD into AL, enabling efficient training of compact models with limited labeled data.
\item We first introduce \textit{active knowledge distillation} (\textbf{ActiveKD}), a framework that integrates knowledge distillation into active learning by leveraging the zero- and few-shot capabilities of vision-language models to train efficient task-specific models with limited labeled data.

% \item We identify \textit{structured prediction biases} in VLM teachers as an \textit{inductive bias} that captures generalizable output patterns. We propose \textbf{\texttt{P}}robabilistic \textbf{\texttt{CoreSet}} (\textbf{\texttt{PCoreSet}}), which selects underrepresented samples in the probability space, enabling effective knowledge transfer.
\item We identify \textit{structured prediction biases} in VLM teachers that are propagated to the student model after KD.
We thus propose \textbf{\texttt{P}}robabilistic \textbf{\texttt{CoreSet}} (\textbf{\texttt{PCoreSet}}), which utilizes this \textit{inductive bias} to select samples in the underrepresented regions of the probability space.

\item Extensive experiments on 11 datasets demonstrate the effectiveness of the proposed ActiveKD and \textbf{\texttt{PCoreSet}}:
% ActiveKD consistently improves performance across all selection methods on all datasets (\eg, \textbf{+29.07\% on ImageNet}, averaged over methods), and under ActiveKD \textbf{\texttt{PCoreSet}} \textbf{ranks first in 64/73} settings ($\approx$ \textbf{87.67\%}).
ActiveKD consistently improves performance across all selection methods on all datasets (\eg, \textbf{+29.07\% on ImageNet}, averaged over methods).
\textbf{\texttt{PCoreSet}} also consistently outperforms alternative selection strategies under the ActiveKD framework, \textbf{ranks first in 64/73} settings ($\approx$ \textbf{87.67\%}) across 5 student and 3 teacher architectures.
 
\vspace{-0.05in}
\end{itemize}

\vspace{0.3in}

%%%%%%%%%%%%%%%%%%%%%%%%%%%%%%%%%%%%%%%%%%%%%%%%
%%%%%%%%%%%%%%%%%%%%%%%%%%%%%%%%%%%%%%%%%%%%%%%%
%%%%%%%%%%%%%%%%%%%%%%%%%%%%%%%%%%%%%%%%%%%%%%%%

% This section reviews related literature in three key areas:
% 1) Generalist models - focusing on foundation models and VLMs with zero-shot capabilities
% 2) Active learning - examining strategies for selecting informative samples with limited annotation budgets
% 3) Knowledge distillation from VLMs - exploring methods to transfer knowledge from large teacher models

% We examine how these areas intersect in current research:
% - How foundation models provide task-agnostic knowledge that can be leveraged
% - How active learning addresses class imbalance and sample selection challenges
% - How knowledge distillation enables efficient transfer from large to compact models

% Our work uniquely combines these approaches to address the practical constraints of human annotation
% while leveraging knowledge distillation to maximize learning efficiency in active learning scenarios

%%%%%%%%%%%%%%%%%%%%%%%%%%%%%%%%%%%%%%%%%%%%%%%%
%%%%%%%%%%%%%%%%%%%%%%%%%%%%%%%%%%%%%%%%%%%%%%%%
%%%%%%%%%%%%%%%%%%%%%%%%%%%%%%%%%%%%%%%%%%%%%%%%

\vspace{-0.3in}
\section{Related Works}
\vspace{-0.05in}

\paragraph{Vision-Language Models (VLMs).}
Recent advances in machine learning have focused on training large foundation models that either transfer pretrained knowledge to specific tasks~\citep{he2016deep, dosovitskiy2020image, radford2021learning, jia2021scaling, radford2019language, koroteev2021bert} or directly apply to target tasks~\citep{wei2021finetuned, liu2023visual, liu2024improved, radford2021learning, jia2021scaling} via task instructions or zero-shot prediction using language prompts.
In this paper, we utilize VLMs~\citep{radford2021learning, jia2021scaling, silva2024closer} as teachers, which learn a shared embedding space for images and texts through a contrastive objective.
A key advantage of such language-aligned training is the ability to perform zero-shot predictions~\citep{radford2021learning, jia2021scaling} \textit{without task-specific training} and few-shot predictions~\citep{zhou2022learning, zhou2022conditional, khattak2023maple, zhu2023prompt, khattak2023self, zhao2024learning, roy2023consistency, zhang2024dept, lafon2025gallop, gao2024clip, zhang2021tip, yu2023task} \textit{with few examples}, qualifying VLMs as generalist models capable of handling diverse visual recognition tasks.
Despite their strong task-agnostic capabilities, effectively leveraging task-specific datasets remains crucial for developing compact models that perform well on downstream tasks, especially under labeled data scarcity---a common challenge in real-world applications.

\vspace{-0.02in}
\textbf{Active learning} \citep[\textbf{AL};][]{ren2021survey, zhan2022comparative, li2024survey} is a framework specifically designed to address such labeled data scarcity issues. It assumes that acquiring domain-specific datasets through manual annotation can be prohibitively expensive in practical scenarios.
Specifically, AL addresses this by strategically selecting the most informative samples from unlabeled data on the target task for annotation, maximizing model performance with minimal labeled data.
Previous approaches include uncertainty-based methods~\citep{lewis1995sequential, joshi2009multi, holub2008entropy, houlsby2011bayesian, gal2017deep, kirsch2019batchbald, rakesh2021efficacy}, diversity-based techniques~\citep{sener2017active, parvaneh2022active, yehuda2022active, hacohen2022active}, and hybrid approaches~\citep{ash2019deep, kirsch2019batchbald, hacohen2023select, giouroukis2025dual} that combine multiple criteria for sample selection.
Several studies have focused on the class imbalance problem~\citep{aggarwal2020active, bengar2022class, huang2024class}, where unbalanced datasets can lead selection algorithms to pick class-imbalanced samples.
Recent research has integrated AL with prompt tuning~\citep{lester2021power, jia2022visual, zhou2022conditional} of VLMs, leveraging foundational knowledge with efficient parameter updates~\citep{bang2024active, safaei2024active, kim2024active}.
PCB~\citep{bang2024active} addressed skewed predictions of VLMs by balancing class distributions, while CB+SQ~\citep{kim2024active} enhanced this with class-guided clustering and adaptive thresholds.
In contrast, we regard structured predictions as an \emph{inductive bias}, and propose to leverage it to select samples to annotate.

% Our method uniquely addresses teacher-to-student bias propagation by balancing probability distributions, rather than just balancing class distributions~\citep{bang2024active}, or selecting samples based on feature diversity~\citep{sener2017active}.

%%%%%%%%%%%%%%%%%%%%%%%%%%%%%%%%%%%%%%%%%%%%%%%%
%%%%%%%%%%%%%%%%%%%%%%%%%%%%%%%%%%%%%%%%%%%%%%%%
%%%%%%%%%%%%%%%%%%%%%%%%%%%%%%%%%%%%%%%%%%%%%%%%

\myparagraph{Knowledge Distillation from VLMs.}
Knowledge Distillation (KD) \citep{hinton2015distilling} transfers knowledge from large teachers to compact students.
% Conventional KD approaches typically train student models using identical datasets as their teachers, requiring substantial data to adequately train high-capacity teacher models before distillation.
% With the emergence of vision foundation models through self-supervised learning \citep{chen2020simple, he2020momentum, grill2020bootstrap, chen2021exploring, caron2021emerging, he2022masked} and vision-language pretraining \citep{radford2021learning, jia2021scaling}, researchers have increasingly explored KD methods to leverage knowledge embedded within these models.
With the emergence of vision foundation models through self-supervised learning \citep{chen2020simple, he2020momentum, grill2020bootstrap, chen2021exploring, caron2021emerging, he2022masked} and vision-language pretraining \citep{radford2021learning, jia2021scaling}, researchers have explored KD methods to leverage knowledge embedded within these models, moving beyond conventional KD that require training student models on identical datasets with substantial data.
Early work focused on self-supervised models \citep{fang2021seed, abbasi2020compress, xu2021bag, wang2022attention, navaneet2022simreg, singh2024simple}, while the advent of VLMs inspired further research, including distillation of compact VLMs from larger counterparts \citep{wu2023tinyclip, sun2023dime, udandarao2024active, vasu2024mobileclip, yang2024clip}, unsupervised distillation from VLM predictions \citep{vemulapalli2023knowledge, wu2024cascade, mistretta2024improving}, and few-shot semi-supervised distillation of VLMs~\citep{DHO}.
Some research has explored using teacher models instead of human annotation to address incorrect predictions \citep{baykal2022robust} or generate data efficiently \citep{wang2020neural, liu2024evolving}.
Our work focuses on human-in-the-loop scenarios, aiming to identify the most informative samples for annotation within a KD framework, by leveraging the zero- or few-shot capabilities of VLMs, to maximize learning efficiency under practical constraints of AL settings.

\vspace{-0.05in}
\section{Method}
\label{sec:method}
\vspace{-0.05in}
\subsection{Preliminaries}
\vspace{-0.05in}
\paragraph{Backgrounds on VLMs.}
We utilize Vision-Language Models (VLMs) like CLIP \citep{radford2021learning} and ALIGN \citep{jia2021scaling} as teacher models. These models jointly optimize an image encoder $f_\mathcal{X}: \mathcal{X} \rightarrow \mathbb{R}^d$ and a text encoder $f_\mathcal{T}: \mathcal{T} \rightarrow \mathbb{R}^d$ to map corresponding image-text pairs into a shared embedding space $\mathbb{R}^d$.
This cross-modal alignment enables zero-shot transfer through natural language supervision. For $C$-way classification tasks, we create textual prompts (\eg, ``a photo of a [CLASS]'') to generate class-specific text descriptions $\{t_1, t_2, \ldots, t_C\}$. The output probability vector of categorical distribution over $C$ classes is computed as the following:
\begin{equation}
f(x) = \sigma\left(\frac{1}{\zeta}[\mathtt{CosSim}(f_\mathcal{X}(x), f_\mathcal{T}(t_1)), \ldots, \mathtt{CosSim}(f_\mathcal{X}(x), f_\mathcal{T}(t_C))]^\top\right) \in \Delta^{C-1}.
\label{eq:teacher}
\end{equation}
Here, $\Delta^{C-1}$ is the $(C{-}1)$-dimensional probability simplex, $\sigma: \mathbb{R}^C \to \Delta^{C-1}$ denotes the softmax function, and $\zeta \in \mathbb{R}_{>0}$ is the temperature scaling factor~\citep{hinton2015distilling}. Cosine similarity is defined as $\mathtt{CosSim}(x, y) = \frac{x^\top y}{\|x\|_2 \|y\|_2}$.
The final predicted class is given by $\arg\max_{c \in \{1,\ldots,C\}} [f(x)]_c$.

\vspace{-0.03in}

\noindent
\begin{minipage}[t]{0.51\textwidth}\vspace{0pt}%
  {\setlength{\intextsep}{0pt}\setlength{\textfloatsep}{0pt}%
   \begin{algorithm}[H]
     \captionsetup{skip=3pt}
     \small
     \caption{Active\underline{KD} Framework}\label{alg:activekd}
     \begin{algorithmic}[1]
       \Require Initial labeled dataset $\mathcal{D}^{(l)}$, unlabeled pool $\mathcal{D}^{(u)}$, \underline{teacher VLM $f$}, a selection algorithm $A$, query size $Q$, number of rounds $R$
       \Ensure Final model $f_r$
       \For{$r = 1$ to $R$}
         \State Initialize the $r$-th round model $f_r$
         \State \underline{(Optional for few-shot teacher) adapt $f$ on $\mathcal{D}^{(l)}$}
         \State Train $f_r$ with $\mathcal{L}_{\text{CE}}$ in \Cref{eq:ce_loss} and \underline{$\mathcal{L}_{\text{KD}}$ in \Cref{eq:kd_loss}}
         \State $\{x_q^*\}_{q=1}^Q \leftarrow A(\mathcal{D}^{(l)}, \mathcal{D}^{(u)}, f_r; Q)$
         \State Obtain labels $\{y_q^*\}_{q=1}^Q$ for $\{x_q^*\}_{q=1}^Q$
         \State $\mathcal{D}^{(l)} \leftarrow \mathcal{D}^{(l)} \cup \{(x_q^*, y_q^*)\}_{q=1}^Q$
         \State $\mathcal{D}^{(u)} \leftarrow \mathcal{D}^{(u)} \setminus \{x_q^*\}_{q=1}^Q$
       \EndFor
       \State \Return $f_r$
     \end{algorithmic}
   \end{algorithm}}%
\end{minipage}\hfill
\begin{minipage}[t]{0.46\textwidth}\vspace{0pt}%
  {\setlength{\intextsep}{0pt}\setlength{\textfloatsep}{0pt}%
   \begin{algorithm}[H]
     \captionsetup{skip=3pt}
     \small
     \caption{\textbf{\texttt{PCoreSet}} Algorithm}\label{alg:pcoreset}
     \begin{algorithmic}[1]
       \Require Labeled dataset $\mathcal{D}^{(l)}$, unlabeled pool $\mathcal{D}^{(u)}$, model $f_r$, query size $Q$, distance $d(x,x'):=\lVert f_r(x)-f_r(x')\rVert_2$
       \Ensure Coreset $\mathcal{S}\subset\mathcal{D}^{(u)}$ with $\lvert\mathcal{S}\rvert=Q$
       \State Initialize $\mathcal{S}\gets \mathcal{D}^{(l)}$
       \State Initialize $D[x]\gets \min_{s\in\mathcal{S}} d(x,s)$ for all $x\in\mathcal{D}^{(u)}$
       \While{$\lvert\mathcal{S}\rvert<Q$}
         \State $x^* \gets \arg\max_{x\in \mathcal{D}^{(u)}\setminus\mathcal{S}} D[x]$
         \State $\mathcal{S}\gets \mathcal{S}\cup\{x^*\}$
         \For{each $x\in \mathcal{D}^{(u)}\setminus\mathcal{S}$}
           \State $D[x]\gets \min\!\big(D[x],\, d(x,x^*)\big)$
         \EndFor
       \EndWhile
       \State \Return $\mathcal{S}$
     \end{algorithmic}
   \end{algorithm}}%
\end{minipage}

\begin{figure}[t]
    \vspace{-0.1in}
    \centering
    \hspace{-0.1in}
    \includegraphics[width=0.9\textwidth]{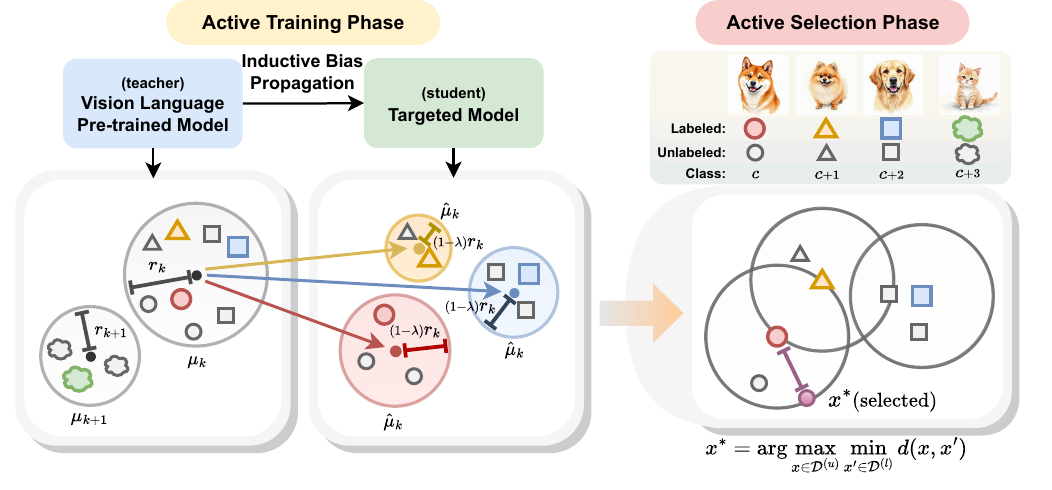}
    \vspace{-0.16in}
    \caption{
        \textbf{(Left):} Teacher model \textit{prediction biases} ($(\mu_1, r_1), \ldots, (\mu_k, r_k)$) are transferred to student models via distillation, where $\hat{\mu}_k=\mu_k+y_c$ and $\hat{r}_k=(1-\lambda)r_k$ ($\mu$ denotes centroids, $r$ denotes radii).
        \textbf{(Right):} \textbf{\texttt{PCoreSet}} selects samples maximizing distance to labeled points in probability simplex $\Delta^{C-1}$, uncovering underrepresented regions.
    }
    \vspace{-0.17in}
    \label{fig:activeKD}
\end{figure}

\vspace{-0.03in}
\myparagraph{Problem formulation.}
In this paper, we consider a pool-based active learning (AL), a framework for building a $C$-way classifier $f_r:\mathcal{X}\mapsto\Delta^{C-1}$ for each round $r\in\{1,\ldots,R\}$ while minimizing annotation costs.
We start with a labeled dataset $\mathcal{D}^{(l)} = \{(x_n^{(l)}, y_n)\}_{n=1}^{N}$ and an unlabeled dataset $\mathcal{D}^{(u)} = \{x_m^{(u)}\}_{m=1}^{M}$, where $y_n\in\{0,1\}^C$ is the one-hot encoding label of $x_n$ and typically $N \ll M$.
We first train a model $f_r$ using the current labeled dataset $\mathcal{D}^{(l)}$ for each round $r$.
Then, we select subset of unlabeled data that will be requested for annotation, \ie, $\{x_q^*\}_{q=1}^{Q} \leftarrow A(\mathcal{D}^{(l)},\mathcal{D}^{(u)}, f_r;Q)\subset\mathcal{D}^{(u)}$, where $Q$ is the number of query datapoints and $A$ is an algorithm for selection.
These selected queries are then annotated by an oracle (typically human experts) to obtain their true labels $\{y_q^*\}_{q=1}^Q$.
The labeled dataset is updated as $\mathcal{D}^{(l)} \leftarrow \mathcal{D}^{(l)} \cup \{(x_q^*, y_q^*)\}_{q=1}^Q$, and the unlabeled pool is reduced by $\mathcal{D}^{(u)} \leftarrow \mathcal{D}^{(u)} \setminus \{x_q^*\}_{q=1}^Q$. See \Cref{alg:activekd} without \underline{underlines} for an overview.
%This iterative selection of informative samples yields higher accuracy with fewer labels compared to random sampling.
%When a teacher model is available, knowledge distillation can further enhance performance by leveraging both labeled data and teacher knowledge.

%%%%%%%%%%%%%%%%%%%%%%%%%%%%%%%%%%%%%%%%%%%%%%%%
%%%%%%%%%%%%%%%%%%%%%%%%%%%%%%%%%%%%%%%%%%%%%%%%
%%%%%%%%%%%%%%%%%%%%%%%%%%%%%%%%%%%%%%%%%%%%%%%%

% \subsection{\texorpdfstring{\underline{ActiveKD}}{} with \texorpdfstring{\underline{\textbf{\texttt{PCoreSet}}}}{}}
\subsection{ActiveKD}

\vspace{-0.05in}

\paragraph{Active knowledge distillation (ActiveKD) framework.}
% Previous works deployed semi-supervised learning~\citep[SSL;][]{assran2021semi, cai2022semi, zheng2023simmatchv2} to leverage the large unlabeled dataset $\mathcal{D}^{(u)} = \{x_m^{(u)}\}_{m=1}^{M}$ at each AL round.
In this paper, we propose the \textbf{ActiveKD} framework that leverages knowledge distillation into active learning.
Our framework consists of two key components:
1) training the student model with zero-/few-shot teachers, and 2) performing sample selection using the distilled student model, potentially in collaboration with the teacher model.
Specifically, we train the student model for each round using both supervised learning and knowledge distillation~\citep{hinton2015distilling, DHO}:
\begin{align}
\mathcal{L}_{\text{CE}} &= \frac{1}{N}\sum_{n} \ell\left(f_r(x_n), y_n\right), \label{eq:ce_loss}\\
\mathcal{L}_{\text{KD}} &= \frac{1}{N}\sum_{n} \KL\left[ f(x_n^{(l)}) \Vert f_r(x_n^{(l)})\right] + \frac{1}{M} \sum_{m} \KL\left[ f(x_m^{(u)}) \Vert  f_r(x_m^{(u)})\right],\label{eq:kd_loss}
\end{align}
where the final loss is \( \lambda\mathcal{L}_{\text{CE}} + (1 - \lambda)\mathcal{L}_{\text{KD}} \), $\ell$ denotes $\KL$ represent 
cross-entropy and Kullback-Leibler divergence, respectively.
For each round $r$, we use a mini-batch version of the above objective with stochastic gradient descent to optimize the parameters of $f_r$ by leveraging the teacher prediction $f(\cdot)$.
See \underline{underlines} in \Cref{alg:activekd} for additional parts of ActiveKD.

%%%%%%%%%%%%%%%%%%%%%%%%%%%%%%%%%%%%%%%%%%%%%%%%
%%%%%%%%%%%%%%%%%%%%%%%%%%%%%%%%%%%%%%%%%%%%%%%%
%%%%%%%%%%%%%%%%%%%%%%%%%%%%%%%%%%%%%%%%%%%%%%%%

\myparagraph{Structured prediction bias propagation.}

\begin{figure}[t]
    \centering
    \begin{subfigure}[b]{0.25\textwidth}
        \centering
        \includegraphics[width=\textwidth]{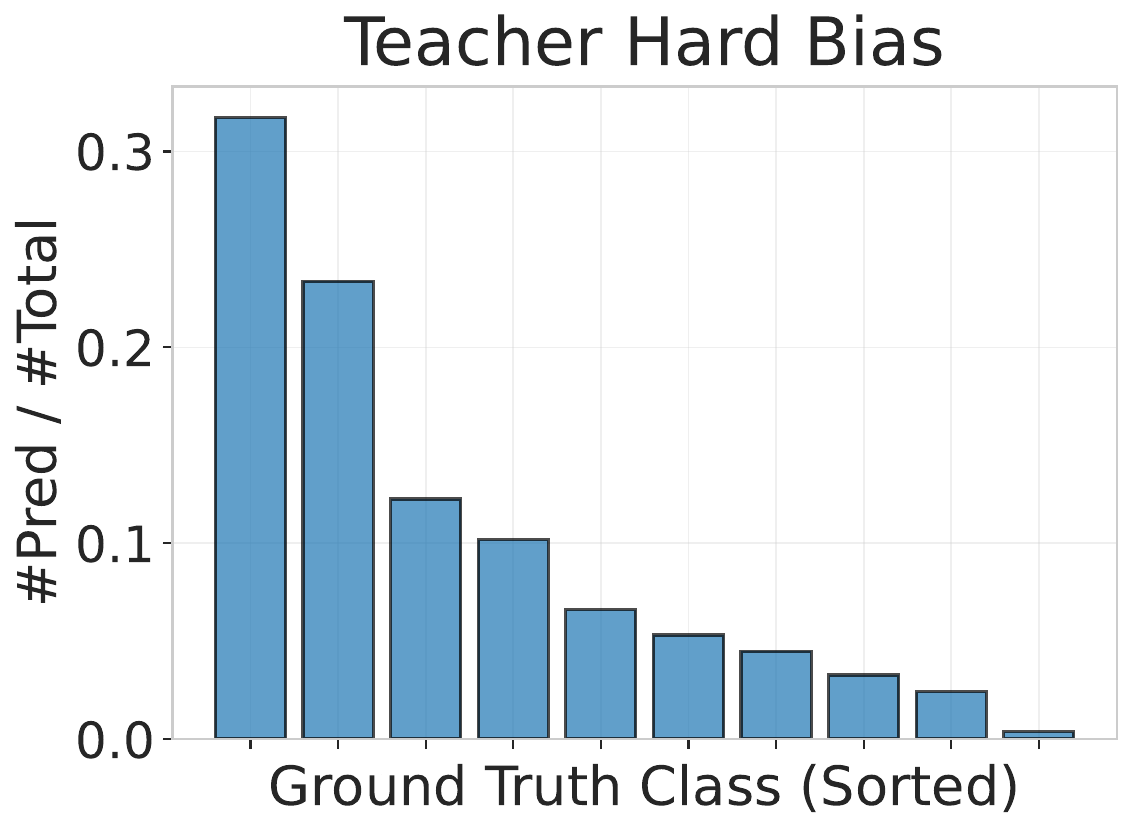}
        \vspace{-0.2in}
        \caption{}
        \label{fig:teacher_hard_bias}
    \end{subfigure}
    \hfill
    \begin{subfigure}[b]{0.455\textwidth}
        \centering
        \includegraphics[width=\textwidth]{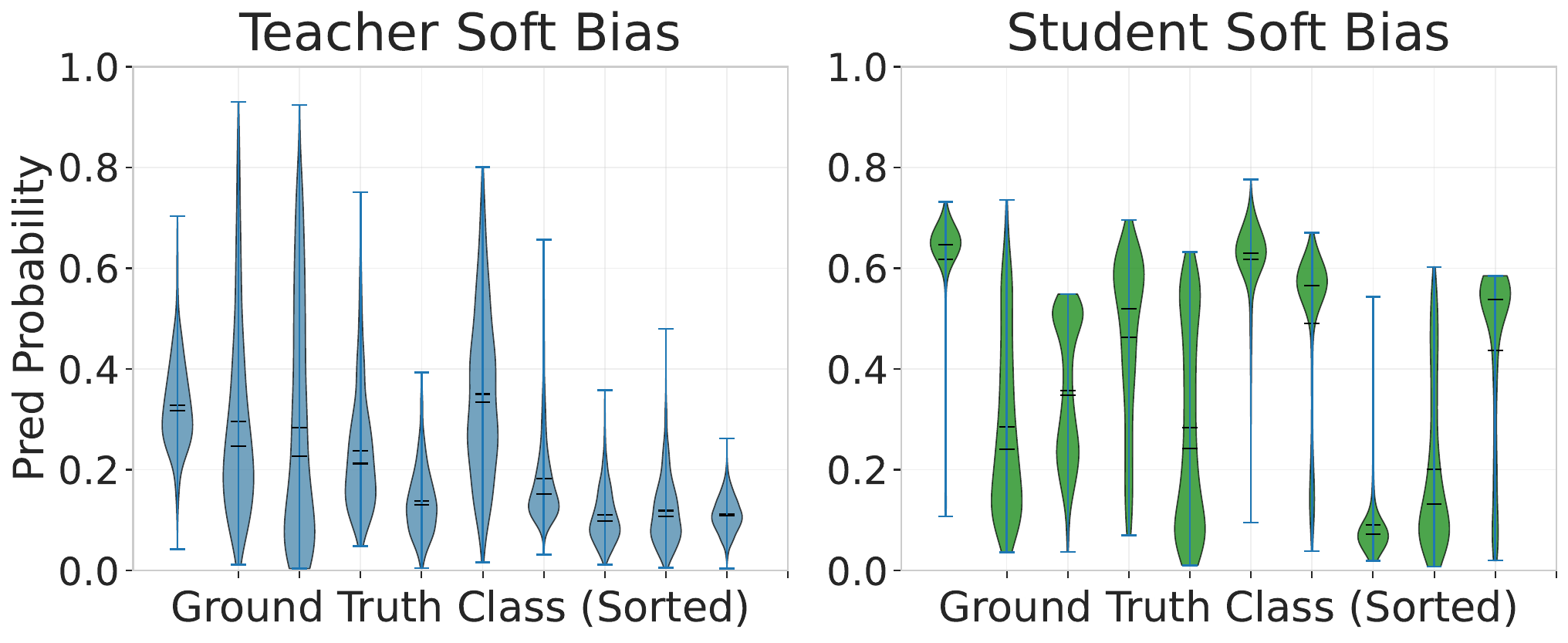}
        \vspace{-0.2in}
        \caption{}
        \label{fig:soft_bias}
    \end{subfigure}
    \hfill
    \begin{subfigure}[b]{0.27\textwidth}
        \centering
        \includegraphics[width=\textwidth]{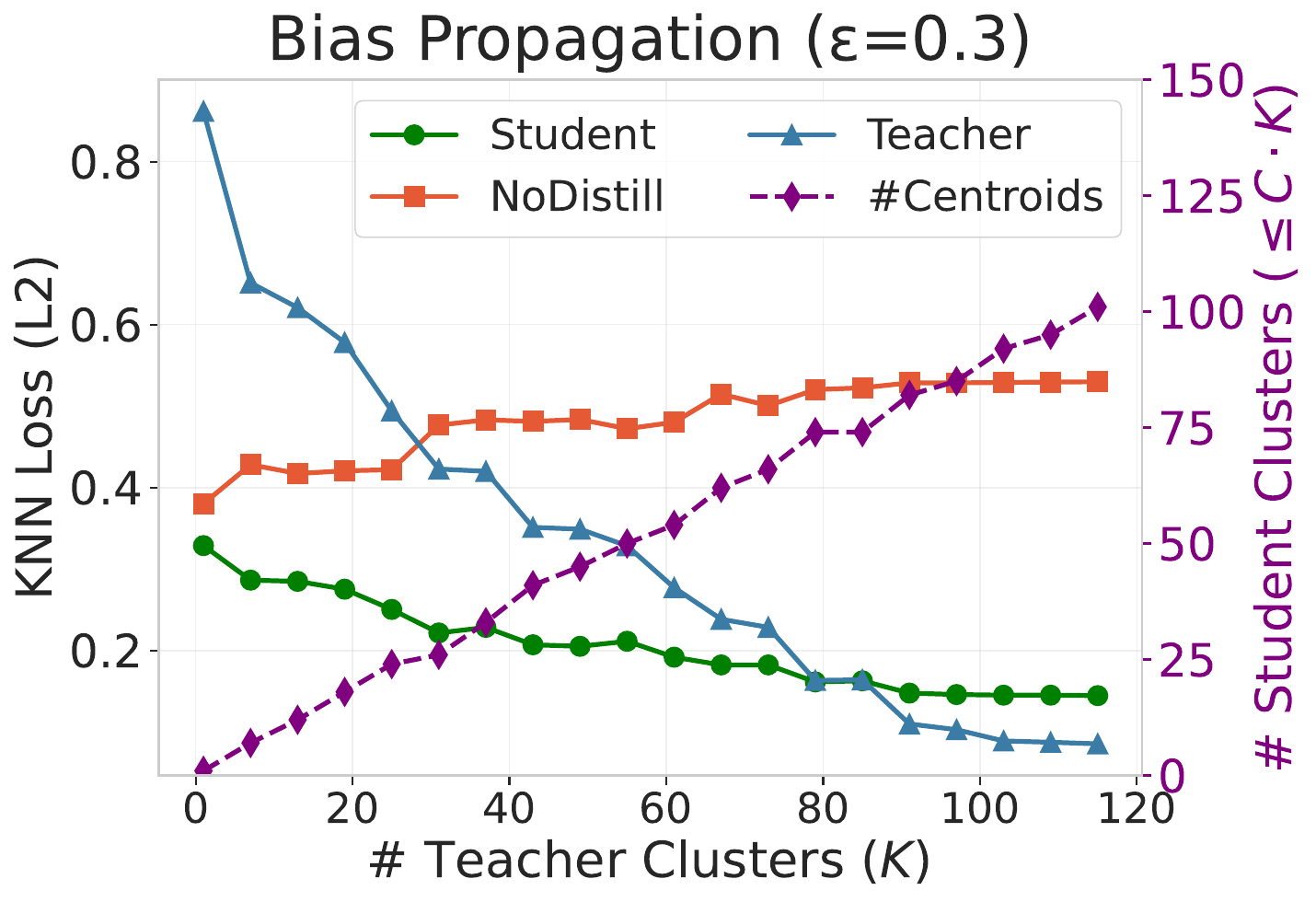}
        \vspace{-0.2in}
        \caption{}
        \label{fig:propagation}
    \end{subfigure}
    \vspace{-0.05in}
    \caption{
        \textbf{Visualization} of prediction bias and its propagation.
        % \textbf{(a)} Teacher hard prediction bias, \textbf{(b)} combined teacher and student soft prediction distributions after distillation in EuroSAT dataset, showing existence of prediction bias.
        % \textbf{(c)} Bias propagation measured by KNN loss (L2) across different numbers of clusters (k) in Caltech101 dataset.
        \textbf{(a)}: \textbf{hard prediction bias} of the teacher; \textbf{(b)}: \textbf{soft prediction bias} of the \textbf{teacher (left)} and the \textbf{student (right)}  after distillation; and
        \textbf{(c)}: \textbf{the bias propagated from the teacher to the student}, quantified by KNN loss ($\ell_2$) across different numbers of clusters ($K$).
    }
    \label{fig:teacher_bias}
    \vspace{-0.1in}
\end{figure}

VLMs such as CLIP ResNet-50~\citep{radford2021learning} inherently exhibit a class imbalance~\citep{bang2024active} due to their pre-training data distribution and language prompt design, as shown in \Cref{fig:teacher_hard_bias}.
Beyond class imbalance, we observe that \textbf{teacher predictions form distinct clusters that occupy constrained regions of the probability simplex}.
As visualized in \Cref{fig:soft_bias}-\textbf{(left)}, the violin plots show probability distributions of samples for given ground-truth (GT) classes, revealing how teacher predictions cluster in specific regions rather than uniformly covering the probability space, creating \emph{blind spots} in the predictive capability of the model.
This \textbf{clustering behavior propagates to student models during knowledge distillation}, as we observe that students inherit similar biased prediction patterns as shown in \Cref{fig:soft_bias}-\textbf{(right)}.
To formalize this, we first define the teacher's \textit{structured prediction bias} as follows:

\begin{definition}[Structured prediction bias]\label{def:bias}
Teacher predictions exhibit \textit{structured prediction bias} if there exist $K \in \mathbb{N}$, centroids $\{\mu_k\}_{k=1}^{K} \subset \Delta^{C-1}$, and radii $\{r_k\}_{k=1}^{K} \subset \mathbb{R}_{>0}$, such that:
\begin{equation}
\forall x \in \mathcal{X}, \quad f(x) \in \bigcup_{k=1}^{K} \left\{ p \in \Delta^{C-1} : \|p - \mu_k\|_2 \leq r_k \right\}.
\end{equation}
\end{definition}
\vspace{-0.07in}

In other words, all teacher predictions lie within a finite union of $\ell_2$-balls in the probability simplex.
We now show that \textbf{student predictions trained via KD also exhibit similar structured bias}:

\begin{proposition}[Bias propagation through KD]\label{prop:bias-propagation}
Let the teacher model $f$ exhibit \textit{structured prediction bias} as defined in \Cref{def:bias}, with $\{\mu_k\}_{k=1}^K$ and $\{r_k\}_{k=1}^K$.
Assume the student model $f_r$ is trained via KD from $f$ using the loss
$\lambda \mathcal{L}_{\text{CE}} + (1 - \lambda) \mathcal{L}_{\text{KD}}$, and satisfies
$\|f_r(x) - f^*(x)\|_1 \leq \epsilon$ for all $x \in \mathcal{X}$, where $f^*(x) = \lambda y + (1-\lambda) f(x)$ and $y \in \{0,1\}^C$ denotes the label of $x$.
Then student predictions $f_r(x)$ also exhibit \textit{structured prediction bias} defined in \Cref{def:bias}.
Specifically, $\forall x \in \mathcal{X}$, there exists $k \in \{1,\ldots,K\}$ such that:
\begin{equation}
f_r(x) \in \left\{ p \in \Delta^{C-1} : \|p - \hat{\mu}_k(x)\|_2 \leq \hat{r}_k \right\},
\end{equation}
where the propagated centriod is defined as $\hat{\mu}_k(x) = \lambda y + (1 - \lambda)\mu_k$,
and the adjusted radius is $\hat{r}_k = (1 - \lambda)r_k + \epsilon$.
Since $y$ is one-hot and $\mu_k$ is fixed, the set of possible centers $\{\hat{\mu}_k(x)\}$ is finite and contained in $\Delta^{C-1}$,
thus satisfying the condition of \Cref{def:bias} with at most $C \cdot K$ clusters.
\end{proposition}

\vspace{-0.05in}

We defer the proof of \Cref{prop:bias-propagation} to \Cref{sec:theoretical_analysis_bias}. \Cref{prop:bias-propagation} is also illustrated in \Cref{fig:activeKD}-(left), where one of the teacher cluster, characterized by $(\mu_k, r_k)$, propagates to corresponding active student clusters with $(\hat{\mu}_k, \hat{r}_k)$ across three different class labels.

\vspace{-0.02in}
\myparagraph{Empirical validation of \Cref{prop:bias-propagation}.}
We train two models under a 1-shot setting (\ie, one labeled image per class):
1) a baseline model using $\mathcal{L}_{\text{CE}}$ on $\mathcal{D}^{(l)}$, and
2) a student model trained via KD with loss $\lambda \mathcal{L}_{\text{CE}} + (1 - \lambda) \mathcal{L}_{\text{KD}}$ on $\mathcal{D}^{(l)}$ and $\mathcal{D}^{(u)}$.
To assess bias propagation, we cluster teacher predictions on $\mathcal{D}^{(u)}$ via $k$-means~\citep{kmeans}, yielding centroids $\{\mu_k\}_{k=1}^K$.
For each $x \in \mathcal{D}^{(u)}$, we identify its assigned teacher centroid $\mu_k$ and compute the corresponding propagated student centroid $\hat{\mu}_k = \lambda y + (1 - \lambda)\mu_k$, where $y$ is the one-hot GT label of $x$.
We then measure the average $\ell_2$ distance between $f_r(x)$ and $\hat{\mu}_k$ for both models, using a threshold $\epsilon = 0.3$.
As shown in \Cref{fig:propagation}, the \textbf{distilled student shows consistently lower distances} than the baseline across varying $K$, \textbf{validating \Cref{prop:bias-propagation}}.
The number of active propagated centroids also closely matches that of the teacher, further supporting the existence of \textit{structured prediction bias} of student model.

%%%%%%%%%%%%%%%%%%%%%%%%%%%%%%%%%%%%%%%%%%%%%%%%
%%%%%%%%%%%%%%%%%%%%%%%%%%%%%%%%%%%%%%%%%%%%%%%%
%%%%%%%%%%%%%%%%%%%%%%%%%%%%%%%%%%%%%%%%%%%%%%%%
\vspace{-0.1in}
\subsection{\textbf{\texttt{P}}robabilistic \textbf{\texttt{C}}oreset (\textbf{\texttt{PCoreSet}})}

\vspace{-0.07in}
\paragraph{Motivation.}
%In this work, we interpret the \textit{structured prediction bias}es of teacher models as an \textit{inductive bias} in the ActiveKD setting. 
During KD, the student model leverages both labeled data and teacher guidance, learning to generalize rather than merely mimicking teacher outputs.
As a result, the \textbf{prediction structure of the teacher acts as a prior} that shapes the hypothesis space of the student.
Our key insight is that when KD is effective (with bounded error $\epsilon$), the student inherits the clustered structure of teacher in the probability space (\Cref{prop:bias-propagation}).
Samples that deviate from this structure are particularly informative, as they fall outside regions captured by the \textit{inductive bias}.
Rather than discarding this structure, we \textbf{leverage it to guide active learning}: selecting samples that deviate from the established structure and expand the student's predictive capacity.

\vspace{-0.02in}

\myparagraph{\texttt{PCoreSet} selection.} 

To this end, we propose \textbf{\texttt{P}}robabilistic \textbf{\texttt{C}}oreset (\textbf{\texttt{PCoreSet}}), a \textit{simple yet effective} selection strategy inspired by coreset selection~\citep{sener2017active}.
While conventional coreset methods aim to maximize coverage in the feature space, \textbf{\texttt{PCoreSet}} instead maximizes coverage in the probability simplex $\Delta^{C-1}$ by targeting underrepresented probability regions.
This enables the student to inherit the teacher's \textit{inductive bias} more completely while actively exploring beyond it.
Formally, given a labeled dataset $\mathcal{D}^{(l)}$, we greedily select $x^* \in \mathcal{D}^{(u)}$ as:
\begin{equation}
x^* = \underset{x \in \mathcal{D}^{(u)}}{\arg\max} \; \underset{x' \in \mathcal{D}^{(l)}}{\min} \; d(x, x'),
\end{equation} where $d(x, x') := \|f_r(x) - f_r(x')\|_2$ measures distance in the probability space. We then request the label $y^*$ for $x^*$ and update the labeled set: $\mathcal{D}^{(l)} \leftarrow \mathcal{D}^{(l)} \cup \{(x^*, y^*)\}$. See \Cref{alg:pcoreset} for details.

\vspace{-0.02in}
\myparagraph{Computational complexity.}
Note that the computational complexity of \textbf{\texttt{PCoreSet}} is \(\mathcal{O}(C\cdot M\cdot N)\), whereas the feature-space coreset~\citep{sener2017active} has complexity \(\mathcal{O}(H\cdot M\cdot N)\), where \(H\) is the feature dimensionality. Thus, in most cases (\(C \ll H\)), \textbf{\texttt{PCoreSet}} is more efficient.

\vspace{-0.12in}
\section{Experiments}
\vspace{-0.07in}
\label{sec:experiments}

% In this section, we evaluate the empirical effectiveness of our proposed methods.
% We present our experimental setups in \Cref{sec:experiments_setup} and the empirical findings in \Cref{sec:experimental_results}.

% \begin{itemize}
% [itemsep=1mm,parsep=1pt,topsep=2pt,leftmargin=*,label={}]

% \item \textbf{[F1]}: \textit{ActiveKD significantly enhances active learning performance by leveraging the knowledge of both zero- and few-shot VLM teachers.}

% \item \textbf{[F2]}: \textbf{\texttt{PCoreSet}} \textit{promotes diverse sampling in the probability space, allowing the student to both inherit and extend the \textit{inductive biases} of teacher models, leading to improved performance under the proposed ActiveKD framework}.

% \end{itemize}

%%%%%%%%%%%%%%%%%%%%%%%%%%%%%%%%%%%%%%%%%%%%%%%%
%%%%%%%%%%%%%%%%%%%%%%%%%%%%%%%%%%%%%%%%%%%%%%%%
%%%%%%%%%%%%%%%%%%%%%%%%%%%%%%%%%%%%%%%%%%%%%%%%

\subsection{Experimental Setup}
\label{sec:experiments_setup}
\vspace{-0.07in}

\paragraph{Datasets.}
We evaluate our approach across 11 diverse datasets: generic image recognition benchmarks~\citep{russakovsky2015imagenet,fei2004learning}, and fine-grained and domain-specific benchmarks~\citep{krause20133d,nilsback2008automated,maji2013fine,bossard2014food,xiao2010sun,cimpoi2014describing,helber2019eurosat,soomro2012ucf101}. To reduce computational cost, we subsample the unlabeled ImageNet~\citep{russakovsky2015imagenet} pool to 100,000 images. See an overview of datasets in \cref{sec:datasets}.
% We evaluate our approach across 11 diverse datasets: generic image recognition benchmarks such as \textbf{ImageNet} \citep{russakovsky2015imagenet} and \textbf{Caltech101} \citep{fei2004learning}, and fine-grained and domain-specific benchmarks such as \textbf{StanfordCars} \citep{krause20133d}, \textbf{Flowers102} \citep{nilsback2008automated}, \textbf{FGVCAircraft} \citep{maji2013fine}, \textbf{Food101} \citep{bossard2014food}, \textbf{SUN397} \citep{xiao2010sun}, \textbf{DTD} \citep{cimpoi2014describing}, \textbf{EuroSAT} \citep{helber2019eurosat}, and \textbf{UCF101} \citep{soomro2012ucf101}. To reduce computational cost, we subsample the unlabeled ImageNet pool to 100,000 images. See an overview of datasets in \cref{sec:datasets}.

\myparagraph{Active learning frameworks.}
1) \textbf{No Distill}: standard AL without knowledge distillation, where the model is trained only on labeled data $\mathcal{D}^{(l)}$ using cross-entropy loss $\mathcal{L}_{\text{CE}}$;
2) \textbf{ActiveKD (Zero-Shot)}: our proposed \textbf{ActiveKD} framework with a fixed zero-shot VLM teacher~\citep{radford2021learning} that provides soft targets for distillation on both $\mathcal{D}^{(l)}$ and $\mathcal{D}^{(u)}$; and
3) \textbf{ActiveKD (Few-Shot)}: as in 2) \textbf{ActiveKD (Zero-Shot)} but with a few-shot VLM teacher~\citep{silva2024closer} fine-tuned on $\mathcal{D}^{(l)}$ using both the newly selected samples and existing labeled data.

% \begin{enumerate}[itemsep=1mm,parsep=1pt,topsep=0pt,leftmargin=*]
%     % oracle
%     \vspace{-0.05in}
%     \item \textbf{No Distill}: standard AL without knowledge distillation, where the model is trained solely on labeled data $\mathcal{D}^{(l)}$ using the cross-entropy loss $\mathcal{L}_\text{CE}$.

%     \item \textbf{Zero-Shot}: our proposed \textbf{ActiveKD} framework with a fixed zero-shot VLM teacher model~\citep{radford2021learning} that provides soft targets for distillation on both $\mathcal{D}^{(l)}$ and $\mathcal{D}^{(u)}$.

%     \item \textbf{Few-Shot}: same as 2. \textbf{Zero-Shot} but with a few-shot VLM teacher model~\citep{silva2024closer} that is fine-tuned on $\mathcal{D}^{(l)}$ with newly selected samples at each round.
%     \vspace{-0.05in}
% \end{enumerate}

\myparagraph{Active selection baselines.}
We compare \textbf{\texttt{PCoreSet}} with five baselines. 1) \textbf{Random}: uniform sampling from the unlabeled pool $\mathcal{D}^{(u)}$; 2) \textbf{Entropy}~\citep{holub2008entropy}: selects points with highest predictive entropy; 3) \textbf{CoreSet}~\citep{sener2017active}: maximizes diversity in feature space; 4) \textbf{BADGE}~\citep{ash2019deep}: combines uncertainty and diversity via gradient embeddings; and 5) \textbf{Class-Balanced}~\citep{bang2024active}: promotes class diversity in the queried set.

\myparagraph{Implementation details.}
All AL methods begin from a 1-shot setting, where one labeled example is provided per class across $C$ classes. We use a query size of $Q=C$ per round, with $R=16$ rounds for all datasets except ImageNet, which uses $R=8$.
We adopt DHO~\citep{DHO} as the KD method in all experiments to leverage the unlabeled pool.
For ImageNet, we use a self-supervised ResNet-50~\citep{caron2021emerging} as the student model. For the other 10 datasets, we use ResNet-18~\citep{he2016deep}, MobileNetV2~\citep{sandler2018mobilenetv2}, TinyViT~\citep{wu2022tinyvit}, and ViT-B/16~\citep{dosovitskiy2020image}, all pretrained on ImageNet.
We use CLIP ResNet-50~\citep{radford2021learning}, OpenCLIP~\citep{cherti2023reproducible}, and ViT-L/14 as the zero-shot teacher and CLAP~\citep{silva2024closer} as the few-shot teacher.
We exclude BADGE~\citep{ash2019deep} on ImageNet due to memory limitations when computing gradients over its 1,000 classes.
We report the mean performance with 95\% confidence intervals across 5 random seeds.
We defer further details to \cref{sec:implementation}.%, \eg, pseudocode for KD, training hyperparameters, and baseline implementations.

%We further validate our approach's generalizability across diverse architecture families, including MobileNetV2~\citep{sandler2018mobilenetv2} and TinyViT~\citep{wu2022tinyvit}, with comprehensive results presented in \cref{sec:architectures}.

%%%%%%%%%%%%%%%%%%%%%%%%%%%%%%%%%%%%%%%%%%%%%%%%
%%%%%%%%%%%%%%%%%%%%%%%%%%%%%%%%%%%%%%%%%%%%%%%%
%%%%%%%%%%%%%%%%%%%%%%%%%%%%%%%%%%%%%%%%%%%%%%%%

\subsection{Main Results}
\label{sec:experimental_results}

\begin{table}[t!]
    \centering
    \vspace{-0.25in}
    \caption{\textbf{Results on ImageNet and the average over 10 datasets under different AL frameworks} at the final AL round. We report the mean and 95\% CI over five runs; values in parentheses denote the gain over No Distill. \textbf{\underline{ActiveKD} consistently improves over No Distill} across all strategies and datasets.}
    \label{tab:performance_comparison}
    \vspace{-0.12in}
    \scriptsize
    \setlength{\tabcolsep}{4pt}
    \resizebox{\textwidth}{!}{%
    \begin{tabular}{l L G H | L G H}
        \toprule
        & \multicolumn{3}{c}{\textit{ImageNet (ResNet-50 student \& ResNet-50 teacher)}} & \multicolumn{3}{c}{\textit{Avg. over 10 Datasets (ResNet-18 student \& ResNet-50 teacher)} } \\
        \cmidrule(lr){2-4} \cmidrule(lr){5-7}
        \textbf{Methods} & \textbf{No Distill} & \textbf{\underline{ActiveKD} (Zero-Shot)} & \textbf{\underline{ActiveKD} (Few-Shot)} & \textbf{No Distill} & \textbf{\underline{ActiveKD} (Zero-Shot)} & \textbf{\underline{ActiveKD} (Few-Shot)} \\
        \midrule
        Random & 33.36$\pm${\tiny0.45} & 60.69$\pm${\tiny0.16} (\textbf{+27.33}) & 60.49$\pm${\tiny0.12} (\textbf{+27.13}) &
                  63.10$\pm${\tiny4.33} & 76.31$\pm${\tiny0.88} (\textbf{+13.21}) & 77.48$\pm${\tiny0.86} (\textbf{+14.38}) \\
        Entropy & 30.76$\pm${\tiny0.24} & 60.43$\pm${\tiny0.16} (\textbf{+29.67}) & 58.87$\pm${\tiny0.14} (\textbf{+28.11}) &
                  64.06$\pm${\tiny3.47} & 75.58$\pm${\tiny0.86} (\textbf{+11.52}) & 77.47$\pm${\tiny0.90} (\textbf{+13.41}) \\
        Coreset & 26.61$\pm${\tiny0.90} & 60.58$\pm${\tiny0.07} (\textbf{+33.97}) & 59.01$\pm${\tiny0.42} (\textbf{+32.40}) &
                  64.99$\pm${\tiny3.01} & 76.20$\pm${\tiny0.91} (\textbf{+11.21}) & 77.63$\pm${\tiny0.57} (\textbf{+12.64}) \\
        Badge   & - & - & - &
                  63.14$\pm${\tiny4.66} & 76.18$\pm${\tiny0.74} (\textbf{+13.04}) & 77.50$\pm${\tiny0.67} (\textbf{+14.36}) \\
        ClassBalanced & 34.44$\pm${\tiny0.60} & 61.07$\pm${\tiny0.11} (\textbf{+26.63}) & 61.13$\pm${\tiny0.07} (\textbf{+26.69}) &
                  63.57$\pm${\tiny4.33} & 76.28$\pm${\tiny0.87} (\textbf{+12.71}) & 77.56$\pm${\tiny0.66} (\textbf{+13.99}) \\
        \textbf{\texttt{PCoreSet}} (ours) & 33.41$\pm${\tiny0.41} & 61.16$\pm${\tiny0.14} (\textbf{+27.75}) & 61.57$\pm${\tiny0.24} (\textbf{+28.16}) &
                  64.94$\pm${\tiny3.25} & 77.44$\pm${\tiny0.84} (\textbf{+12.50}) & 78.81$\pm${\tiny0.42} (\textbf{+13.87}) \\
        \midrule
        \textit{Avg. over Methods} &
            31.72 & 60.79 (\textbf{+29.07}) & 60.21 (\textbf{+28.50}) &
            63.97 & 76.33 (\textbf{+12.37}) & 77.74 (\textbf{+13.78}) \\
        \bottomrule
    \end{tabular}
    }
    \vspace{-0.1in}
\end{table}

\begin{figure}[t]
    \centering
    \includegraphics[width=1.0\textwidth]{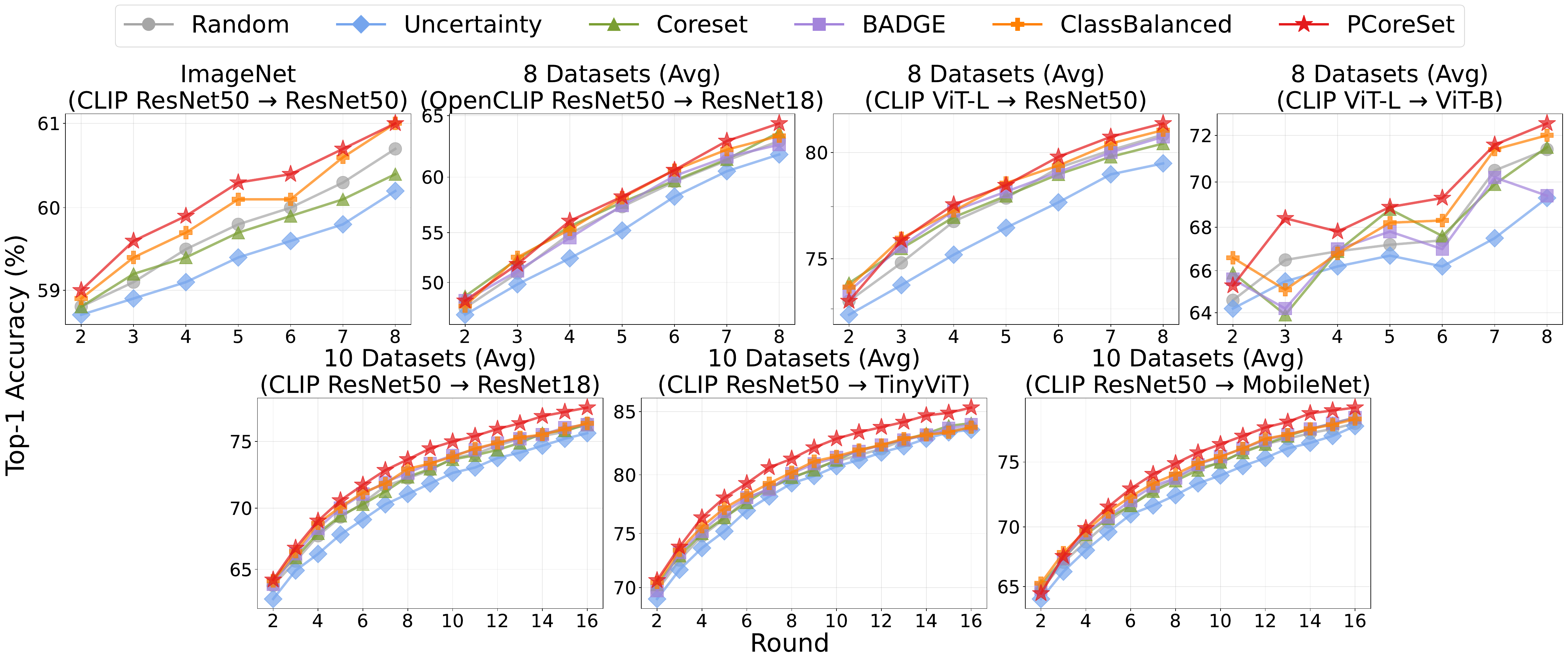}
    \vspace{-0.3in}
    \caption{
        \small\textbf{Results on ImageNet and the average over 8 or 10 datasets across 5 student and 3 teacher architectures} under ActiveKD (Zero-Shot) with either 8 or 16 rounds. We report the mean and 95\% CI over five runs. \textbf{\underline{\texttt{PCoreSet}}} achieves \textbf{the best performance in 64/73} settings ($\approx$\textbf{87.7\%})}.
    \label{fig:main_results}
    \vspace{-0.3in}
\end{figure}

\begin{figure}[t]
    \centering
    \vspace{-0.25in}
    \includegraphics[width=1\textwidth]{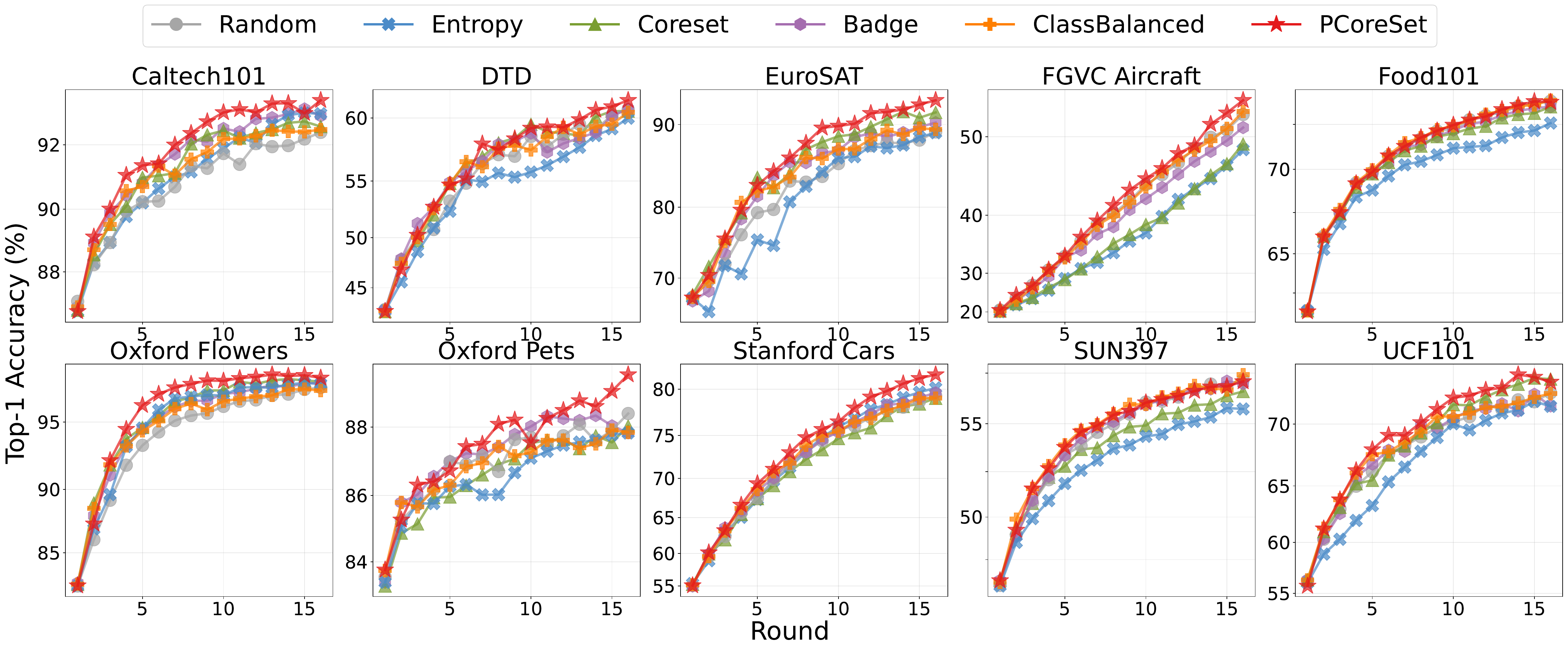}
    \vspace{-0.3in}
    \caption{
        \small
        Results on 10 datasets using \textbf{ResNet-18} with ResNet-50 teacher under ActiveKD (Zero-Shot).
    }
    \label{fig:zeroshot_10datasets}
    \vspace{-0.1in}
\end{figure}

\vspace{-0.05in}
\paragraph{Effectiveness of \underline{ActiveKD} (Zero-Shot).}
We first evaluate \textbf{ActiveKD} with ResNet-18/ResNet-50 students and a ResNet-50 zero-shot teacher.
\Cref{tab:performance_comparison} shows that \textbf{ActiveKD} (\textbf{Zero-Shot}) \textbf{consistently improves No Distill across all datasets and selection strategies}: \eg, it improves accuracy by +27.33\% on ImageNet with \textit{Random} selection (33.36\%~$\to$~60.69\%) and by +29.07\% on average across strategies.
This improvement extends across 10 datasets with an average gain of +13.21\% (63.10\%~$\to$~76.31\%) for \textit{Random} selection, and +12.37\% on average across all strategies. These substantial gains demonstrates the effectiveness of our proposed ActiveKD under limited supervision.

\myparagraph{Effectiveness of \underline{ActiveKD} (Few-Shot).}
\textbf{ActiveKD (Few-Shot)} also consistently improves over No Distill across all datasets and strategies; relative to \textbf{ActiveKD (Zero-Shot)} it yields +1.17\% for \textit{Random} and +1.41\% on average across the 10 datasets, demonstrating \textbf{the benefit of strong few-shot teachers}. However, the improvements of \textbf{ActiveKD (Few-Shot)} are not consistent on ImageNet; \eg, the gains are only +0.06\% with \textit{ClassBalanced} and \textbf{+0.41\% with \textbf{\texttt{PCoreSet}}}. We attribute this to the large number of classes in ImageNet ($C{=}1000$), which increases labeled samples per round, reducing the added value of teacher signals compared with other 10 datasets having $C\approx100$. For the same reason, \textit{Random} often outperforms \textit{Entropy}/\textit{Coreset} on ImageNet—also noted by \citet{emam2021active,bang2024active}—making \textit{Random} a strong baseline.

% \textbf{ActiveKD (Few-Shot)} also consistently improves over No Distill across all datasets and selection strategies. Compared to \textbf{ActiveKD (Zero-Shot)}, the gains are likewise consistent, \eg, +1.17\% for Random and +1.41\% on average across the 10 datasets. This demonstrates \textbf{the effectiveness of our proposed ActiveKD when combined with strong few-shot teachers}. However, the improvements of \textbf{ActiveKD (Few-Shot)} are not consistent on ImageNet; \eg, the gains are only +0.06\% with ClassBalanced and +0.41\% with \textbf{\texttt{PCoreSet}}. We attribute this to the large number of classes in ImageNet ($C{=}1000$), which increases the number of labeled samples per round; consequently, the additional signal from few-shot teachers is less impactful than on the other datasets ($C\approx100$).
% For similar reasons, Random often outperforms other strategies (\eg, Entropy, Coreset) on ImageNet—also observed by \citet{emam2021active,bang2024active}—because per-class data is already ample, making Random a strong baseline.

% For similar reasons, Random often outperforms other selection strategies (\eg, Entropy, Coreset) on ImageNet—a phenomenon also noted by \citet{emam2021active,bang2024active}—because the per-class data volume is already high, making Random a sufficiently strong baseline.

\myparagraph{Effectiveness of \underline{\textbf{\texttt{PCoreSet}}} under \textbf{ActiveKD (Zero-Shot)}.}
We next evaluate \textbf{\texttt{PCoreSet}} under ActiveKD (Zero-Shot).
\Cref{fig:main_results} reports results for different selection strategies on ImageNet and on the averages over 10 or 8 datasets across 8 or 16 AL rounds.
We evaluate across a comprehensive set of 73 experimental settings, calculated as $(\textcolor{blue!80!black}{8}-1) \times \textcolor{red!80!black}{4} + (\textcolor{blue!80!black}{16}-1) \times \textcolor{red!80!black}{3}$.
This computation accounts for \textcolor{blue!80!black}{8-round} and \textcolor{blue!80!black}{16-round} active learning experiments, where we exclude the first round since all models begin with identical training sets, making strategy comparisons meaningless at that round.
The factors \textcolor{red!80!black}{4} and \textcolor{red!80!black}{3} represent the number of experimental configurations for the 8-round and 16-round setups respectively, encompassing both ImageNet-specific results and cross-dataset averages under various conditions.
Across these 73 settings, \textbf{\texttt{PCoreSet}} \textbf{ranks first} in 64 (64/73 $\approx$ \textbf{87.7\%}), \textbf{always achieving the best performance except for first 2 AL rounds}, showing that it becomes increasingly effective as active learning progresses.

\myparagraph{In-depth comparison.} \Cref{fig:zeroshot_10datasets} presents an in-depth evaluation of selection strategies under ActiveKD (Zero-Shot) across 10 datasets, using a ResNet-18 student and a ResNet-50 teacher. We observe that \textbf{\texttt{PCoreSet}} achieves the best performance or is at least on par with the best. Similar trends appear in other in-depth evaluations across different settings; see \Cref{sec:additional_experiments} for details.

\begin{figure}[t]
    \centering
    \includegraphics[width=0.96\textwidth]{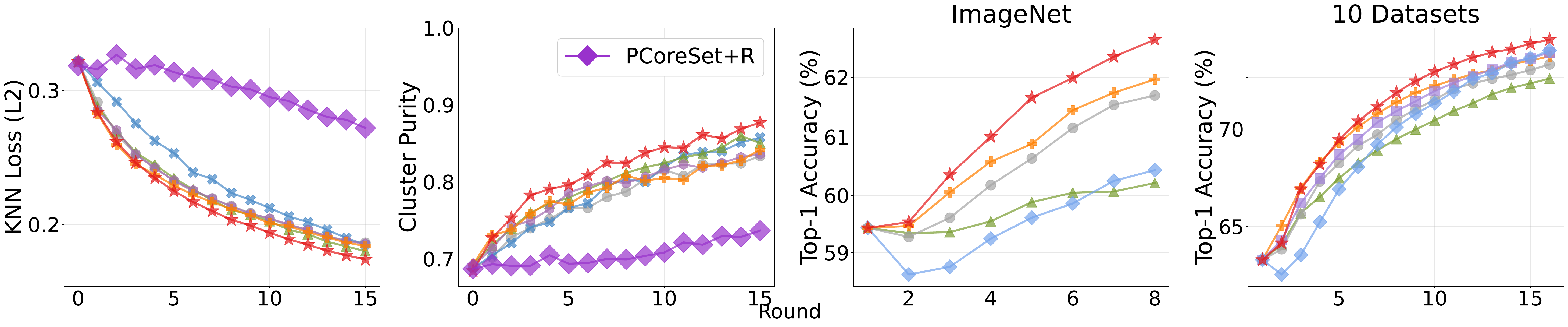}
    \vspace{-0.1in}
    \caption{\small KNN loss (L2) and cluster purity of \textbf{\texttt{PCoreSet}} and other selection strategies across 10 datasets, and results of few-shot teachers on 11 datasets across rounds.}
    \label{fig:combined_pcoreset_teacher}
    \vspace{-0.2in}
\end{figure}

\vspace{-0.05in}
\subsection{Analysis}
\vspace{-0.05in}

% \begin{wrapfigure}{r}{0.5\textwidth}
%     \centering
%     \vspace{-0.5in}
%     \includegraphics[width=\linewidth]{Figures/250516_combined_plots.pdf}
%     \vspace{-0.2in}
%     \caption{
%         \small
%         % Placeholder figure demonstrating the effectiveness of PCoreSet in maintaining low KNN loss and high purity across datasets.
%         KNN loss (L2) and cluster purity of PCoreSet and other selection strategies across 10 datasets.
%     }
%     \vspace{-0.2in}
%     \label{fig:pcoreset_knn_loss_purity}
% \end{wrapfigure}

% \begin{wrapfigure}{r}{0.45\textwidth}
%     \centering
%     \vspace{-0.4in}
%     \includegraphics[width=\linewidth]{Figures/teacher_teacher_1x2_no_ci.pdf}
%     \vspace{-0.2in}
%     \caption{
%         \small
%         Results of few-shot teachers on 11 datasets across rounds on ImageNet \textbf{(Left)} and averaged across 10 datasets \textbf{(Right)}.
%     }
%     \vspace{-0.1in}
%     \label{fig:teacher_performance}
% \end{wrapfigure}

\paragraph{The acceleration of structured bias propagation.}
To assess whether \textbf{\texttt{PCoreSet}} promotes \emph{structured bias propagation}, we track the same metric, \ie, KNN loss ($\ell_2$) and clster purity, used in \Cref{fig:propagation} across AL rounds. 
We also include a \textbf{counterfactual baseline}, \textbf{\texttt{PCoreSet}}+R (Reverse), which suppresses probability-space coverage  by replacing $\arg\max$ with $\arg\min$ in line~4 of \Cref{alg:pcoreset}. As shown \textbf{\texttt{PCoreSet}} consistently achieves \textbf{the lowest KNN loss and the highest cluster purity} among all selection strategies, confirming that it promotes \emph{structured bias propagation} as intended.
In contrast, \textbf{\texttt{PCoreSet}}+R shows the highest KLL loss and the lowest cluster purity, which corroborates that maximizing sample diversity in the probability space encourages bias propagation.

\myparagraph{A virtuous cycle with few-shot teachers.}
Beyond accelerating structured bias propagation, we evaluate the few-shot teacher under ActiveKD. Because acquisition is driven by the student predictions, we can test whether the selected samples by each selection method also benefit the teacher. As shown in the two rightmost panels of \Cref{fig:combined_pcoreset_teacher}, \textbf{\texttt{PCoreSet}} consistently outperforms competing selection strategies for the teacher. This supports that the \emph{structured prediction bias} \textbf{of teacher models propagates to the student}: by targeting regions underrepresented in the probability space of student models, \textbf{\texttt{PCoreSet}} also covers underrepresented teacher modes, \textbf{improving both models and inducing a virtuous cycle of better acquisition and stronger distillation}.

\begin{figure}[t]
    \vspace{-0.25in}
    \centering    \includegraphics[width=1\textwidth]{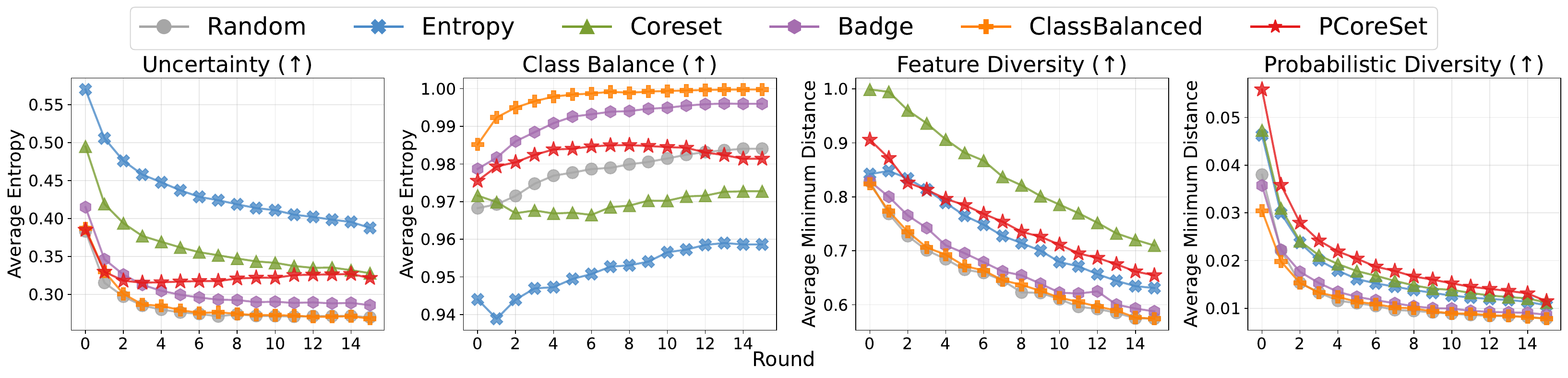}
    \vspace{-0.25in}
    \caption{
        \small
        Comparison of four selection criteria: 1) Uncertainty, 2) Class balance, 3) Feature space diversity, and 4) Probability space diversity. The results are averaged over the 10 datasets excluding ImageNet.
    }
    \label{fig:metric}
    \vspace{-0.1in}
\end{figure}

\begin{figure}[t]
    \centering
    \includegraphics[width=0.99\textwidth]{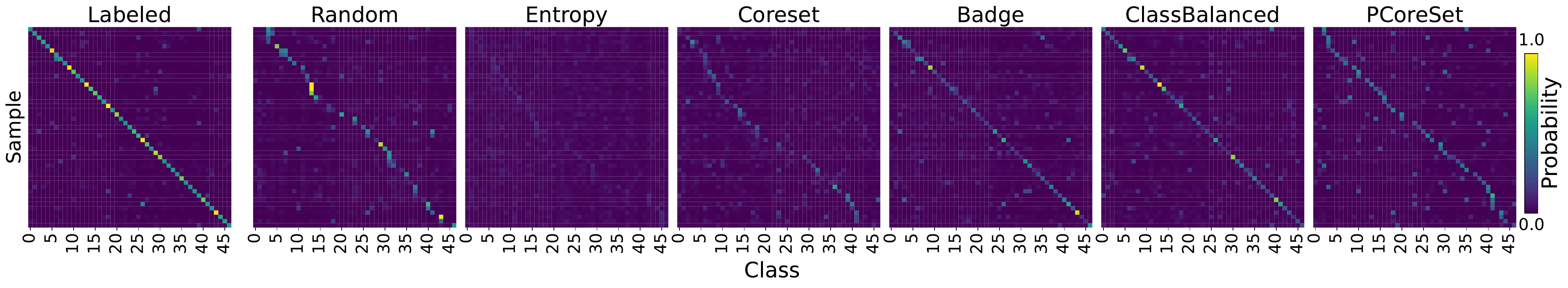}
    \vspace{-0.1in}
    \caption{
        \small
        The heatmap of output probability vectors from different selection strategies in the first active learning round using the DTD dataset.
        See \Cref{sec:heatmap_of_selected_samples} for results on other datasets.
    }
    \label{fig:heatmap}
    \vspace{-0.2in}
\end{figure}

\myparagraph{Active selection criteria.}
We analyze the progression of four strategies designed to be maximized throughout AL rounds: 
1) uncertainty, 
2) class balance, 
3) feature space diversity, and 
4) probability space diversity (\textbf{\texttt{PCoreSet}}). 
Criteria 1) and 2) are measured using normalized Shannon entropy, while 3) and 4) are evaluated using normalized average minimum distances. 
We report the average of each metric across 10 datasets, excluding ImageNet.
As shown in \Cref{fig:metric}, each method performs best on its respective objective: \textit{Entropy} maximizes uncertainty, \textit{ClassBalanced} optimizes the entropy of the class distribution, and \textit{Coreset} maximizes feature space diversity. 
As expected, our \textbf{\texttt{PCoreSet}} method \textbf{achieves the highest diversity in probability space}, while also ranking third in uncertainty and class balance, and second in feature diversity. 
Although \textit{Coreset} is effective in covering the feature space and offers moderate probabilistic diversity, it performs poorly in class balance, which may be the reason that it underperforms than \textit{ClassBalanced} baseline.
% We analyze four active selection strategies: 1) Uncertainty~\citep{holub2008entropy}, 2) Class balance~\citep{bang2024active}, 3) Feature space diversity~\citep{sener2017active}, and 4) Probability space diversity (\textbf{\texttt{PCoreSet}}).
% Criteria 1-2 use normalized Shannon entropy, while 3-4 use normalized average minimum distances.
% As shown in \Cref{fig:metric}, each method excels at its objective.
% Our \textbf{\texttt{PCoreSet}} method \textbf{achieves the highest diversity in probability space}, while ranking third in uncertainty/class balance and second in feature diversity.
% Traditional Coreset~\citep{sener2017active} covers feature space well but performs poorly in class balance, explaining its underperformance versus ClassBalanced baseline.

% F2.7 PCoreSet maximizes diversity in probability space
\begin{wrapfigure}{r}{0.3\textwidth}
    \centering
    \vspace{-0.2in}
    \includegraphics[width=0.99\linewidth,trim={0 0 0 0.3in},clip]{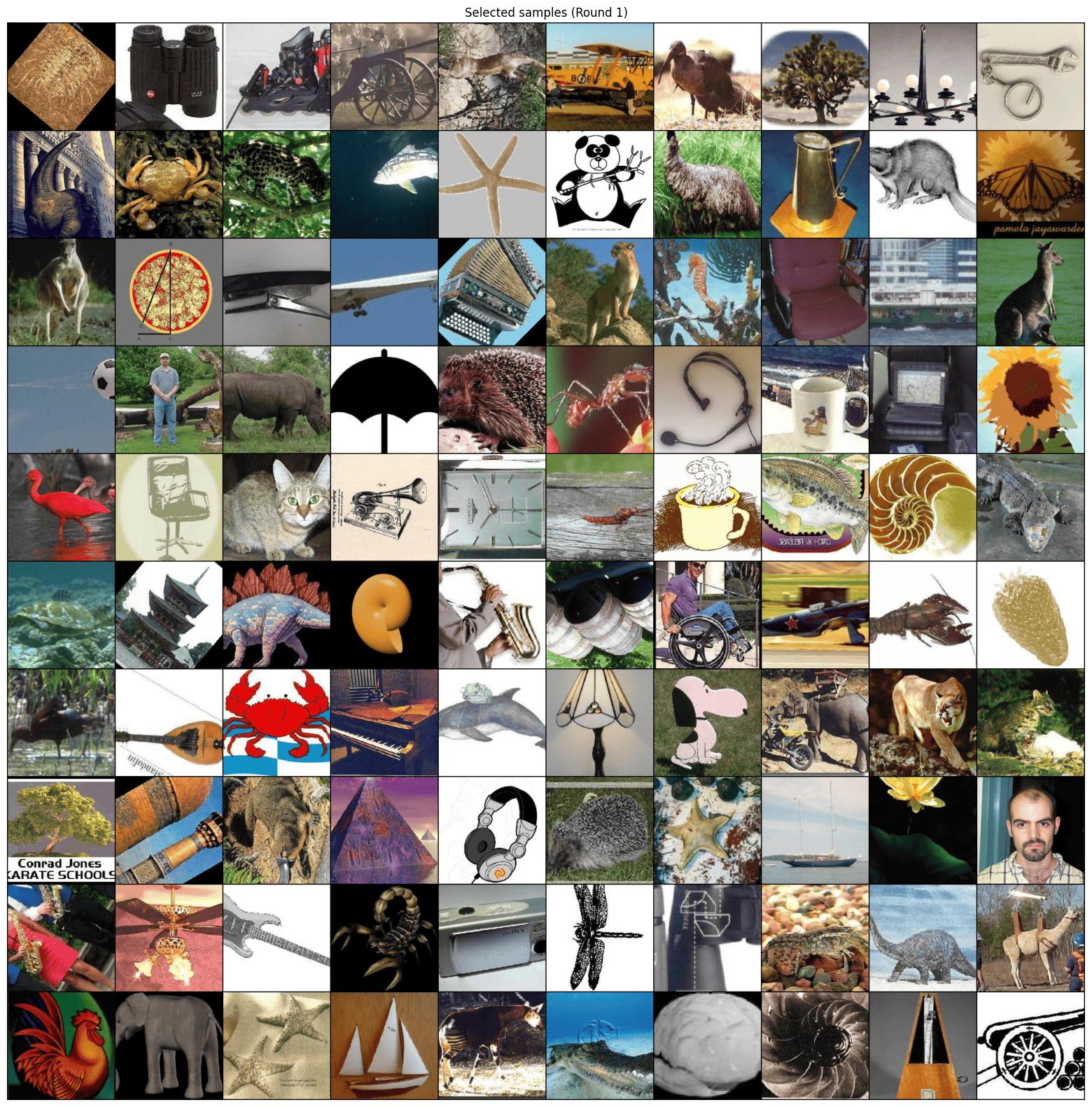}
    \vspace{-0.2in}
    \caption{Samples selected by \textbf{\texttt{PCoreSet}} on Caltech101.}
    \label{fig:main_selected_samples_caltech}
    \vspace{-0.2in}
\end{wrapfigure}

\myparagraph{Sample diversity of \textbf{\texttt{PCoreSet}}.}
To visualize the sample diversity of \textbf{\texttt{PCoreSet}}, we visualize probability heatmaps for the initial labeled set and for the samples selected after round~1 by each active-learning strategy (\Cref{fig:heatmap}).
The heatmaps show that \textbf{\texttt{PCoreSet}} selects samples with highly diverse probability profiles, maximizing distributional coverage relative to the initial set.
We also visualize selected images from Caltech101 in \Cref{fig:main_selected_samples_caltech}, where \textbf{\texttt{PCoreSet}} chooses visually distinct examples spanning many classes (see \Cref{sec:qualitative_results} for more cases).
We believe this probabilistic and visual diversity helps us \textbf{select samples underrepresented in the probability space}, thereby more efficiently propagating the teacher's \emph{structured prediction bias} during distillation.

\vspace{-0.1in}
\section{Conclusion, Limitation, and Future Work}
\vspace{-0.07in}
\label{sec:conclusion}

We introduced ActiveKD, integrating active learning with knowledge distillation by leveraging VLMs' zero- and few-shot capabilities.
We discovered VLMs exhibit \textit{structured prediction bias} that benefits student learning, and proposed \textbf{\texttt{PCoreSet}} to maximize diversity in probability space when selecting samples.
Experiments across 11 datasets show \textbf{\texttt{PCoreSet}} outperforms existing methods, especially when labeled data is scarce, enabling efficient knowledge transfer under limited annotation budgets.

\vspace{-0.12in}
\paragraph{Limitations and future work.}

While our experiments focused on VLMs, we believe AL has even greater potential by leveraging growing capabilities of generalist foundation models~\citep{wei2021finetuned,liu2023visual,bai2023qwen,achiam2023gpt,team2023gemini}.
Given that many task-specific applications face challenges with limited labeled data, leveraging generalist models could offer valuable opportunities for future exploration.
Also, our implementation is limited to visual recognition tasks, as they are the most representative in computer vision and provide an ideal testbed.
Also, VLMs' strong zero-shot and few-shot capabilities in visual recognition tasks make them natural candidates for knowledge distillation.
However, extending ours to more complex tasks such as object detection and segmentation would be a promising direction with dedicated architectural adaptations.

\section*{Reproducibility Statement}

We are committed to ensuring the reproducibility of our research.
To facilitate replication and extension of our work, we provide comprehensive implementation details throughout the paper and supplementary materials.
Specifically, we include: 1) detailed experimental setup in \Cref{sec:experiments_setup} and \Cref{sec:implementation}, covering all hyperparameters, optimization settings, and training procedures; 2) complete algorithmic descriptions of our DHO training framework (\Cref{alg:training}) and inference procedure (\Cref{alg:inference}); 3) specifications of all model architectures tested, including ResNet18, ResNet50, ViT-T/16, ViT-S/16, and MobileNetV2, along with teacher models CLIP ViT-B/32, CLIP ViT-B/16, and ALIGN; 4) comprehensive dataset information for all 11 benchmarks used in our evaluation (\Cref{sec:datasets}); and 5) statistical rigor through reporting mean performance with 95\% confidence intervals across 5 random seeds for all experiments.
Our implementation leverages publicly available pretrained models and standard datasets.
We will release our code and experimental configurations upon publication to enable full reproducibility of our results.

\section*{Ethics Statement}

Our work presents no new ethical concerns as \textbf{ActiveKD} is a purely technical contribution for active learning with knowledge distillation using existing publicly available datasets (ImageNet, Caltech101, StanfordCars, Flowers102, FGVCAircraft, Food101, SUN397, DTD, EuroSAT, UCF101, and OxfordPets) that contain no personally identifiable information.
No additional data collection, human subjects research, or sensitive information processing is involved in this work.
We acknowledge that vision-language models (CLIP and ALIGN) may contain biases from their pre-training data, which our distillation framework preserves without amplification.
The computational requirements are modest (single GPU for training student models across all benchmarks), significantly lower than training vision-language models from scratch, promoting research accessibility while minimizing environmental impact.

% \section*{Impact Statement}
% This paper presents work whose goal is to advance the field of Machine Learning. There are many potential societal consequences of our work, none which we feel must be specifically highlighted here.

\bibliographystyle{iclr2026_conference}
\bibliography{references}

\appendix
%%%%%%%%%%%%%%%%%%%%%%%%%%%%%%%%%%%%%%%%%%%%%%%%
%%%%%%%%%%%%%%%%%%%%%%%%%%%%%%%%%%%%%%%%%%%%%%%%
%%%%%%%%%%%%%%%%%%%%%%%%%%%%%%%%%%%%%%%%%%%%%%%%

\newpage
\appendix

\section*{Appendix Overview}

This appendix contains supplementary information organized as follows:

\begin{itemize}[itemsep=1mm,parsep=1pt,topsep=2pt,leftmargin=*]
    \item \textbf{Theoretical Analysis} (\Cref{sec:theoretical_analysis}): provides formal analysis of bias propagation in knowledge distillation, including optimal student prediction analysis (\Cref{sec:theoretical_analysis_dho}), formal characterization of teacher prediction bias (\Cref{sec:theoretical_analysis_bias}), and mathematical proof of how biases transfer from teacher to student models (\Cref{prop:bias-propagation_2}).

    \item \textbf{Implementation} (\Cref{sec:implementation}): detailed experimental setup including model configurations, training parameters.

    \item \textbf{Datasets} (\Cref{sec:datasets}): comprehensive description of the datasets used in our experiments, including ImageNet and other benchmarks used for evaluating bias propagation, as well as detailed analysis of class distributions across all datasets.

    \item \textbf{Additional Experiments} (\Cref{sec:additional_experiments}): supplementary experimental results including evaluations on alternative architectures (e.g., MobileNetV2 in \Cref{sec:architectures}), active learning without knowledge distillation (\Cref{sec:nodistill}), few-shot teacher distillation scenarios (\Cref{sec:fewshot}), and logit adjustment for calibrated teacher prediction (\Cref{sec:logit_adjustment}).

    \item \textbf{Additional Qualitative Results} (\Cref{sec:additional_results}): additional results of the proposed method, including heatmap visualizations of selected samples across datasets (\Cref{sec:heatmap_of_selected_samples}) and qualitative analysis of samples selected by our \textbf{\texttt{PCoreSet}} method (\Cref{sec:qualitative_results}).
\end{itemize}

%%%%%%%%%%%%%%%%%%%%%%%%%%%%%%%%%%%%%%%%%%%%%%%%
%%%%%%%%%%%%%%%%%%%%%%%%%%%%%%%%%%%%%%%%%%%%%%%%
%%%%%%%%%%%%%%%%%%%%%%%%%%%%%%%%%%%%%%%%%%%%%%%%

% This section provides a theoretical foundation for understanding bias propagation in knowledge distillation.
% We organize our analysis into two main parts:
% 1) Optimal student prediction after distillation - examining how student models converge to weighted combinations
% 2) Teacher prediction bias propagation - formally characterizing how biases transfer from teacher to student

% We first establish the theoretical optimal prediction that combines ground truth and teacher signals,
% then analyze how teacher biases in the probability space constrain student predictions to specific regions.
% This analysis provides the mathematical justification for our debiasing strategies in active learning.

%%%%%%%%%%%%%%%%%%%%%%%%%%%%%%%%%%%%%%%%%%%%%%%%
%%%%%%%%%%%%%%%%%%%%%%%%%%%%%%%%%%%%%%%%%%%%%%%%
%%%%%%%%%%%%%%%%%%%%%%%%%%%%%%%%%%%%%%%%%%%%%%%%

\section{Theoretical Analysis}
\label{sec:theoretical_analysis}

This section presents a formal analysis of bias propagation in knowledge distillation, examining how teacher model biases influence student model predictions.
We first analyze the optimal student prediction after distillation from a teacher model in Section~\ref{sec:theoretical_analysis_dho}, building upon the analytical framework established in DHO~\citep{DHO}.
Subsequently, in Section~\ref{sec:theoretical_analysis_bias}, we formally characterize the mechanisms through which teacher prediction biases propagate to and constrain the distilled student model, providing the theoretical foundation for our debiasing strategies.

%%%%%%%%%%%%%%%%%%%%%%%%%%%%%%%%%%%%%%%%%%%%%%%%
%%%%%%%%%%%%%%%%%%%%%%%%%%%%%%%%%%%%%%%%%%%%%%%%
%%%%%%%%%%%%%%%%%%%%%%%%%%%%%%%%%%%%%%%%%%%%%%%%

\subsection{Optimal Student Prediction after Distillation}
\label{sec:theoretical_analysis_dho}
Let us first establish our notation. We consider a student model with a feature extractor $h: \mathcal{X} \rightarrow \mathbb{R}^d$ that maps inputs to feature representations, and a classification head $g: \mathbb{R}^d \rightarrow \mathbb{R}^C$ that maps features to logits, with the final prediction obtained by applying the softmax function $\sigma: \mathbb{R}^C \rightarrow \Delta^{C-1}$ to these logits, resulting in $f_r(x) = \sigma(g(h(x)))$, where $r$ denotes the active learning round.
Following DHO~\citep{DHO}, we consider two target probability distributions: the ground truth label distribution $y$, typically one-hot encoded vectors where $y_c = 1$ for the true class $c$ and 0 elsewhere, and the teacher's softened distribution $f(x)$ for input $x \in \mathcal{X}$.

\begin{theorem}[Optimal Distribution for Knowledge Distillation]
\label{thm:optimal_distillation}
The distribution $f^*(x)$ that minimizes the weighted combination of cross-entropy loss with respect to $y$ and Kullback-Leibler divergence with respect to $f(x)$:
\begin{equation}
\mathcal{L}(f^*(x)) = \lambda \ell(f^*(x),y) + (1-\lambda) \KL(f(x)\|f^*(x))
\end{equation}
is the weighted arithmetic mean:
\begin{equation}
f^*(x) = \lambda y + (1-\lambda) f(x)
\end{equation}
where $\lambda \in [0,1]$ is the weighting hyperparameter.
\end{theorem}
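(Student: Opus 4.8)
The plan is to treat $f^*(x)$ as a free variable $p \in \Delta^{C-1}$ (writing $q := f(x)$ for the teacher's distribution) and reduce the objective to a single cross-entropy term, after which the minimizer follows from Gibbs' inequality. First I would expand both loss terms componentwise: the cross-entropy is $\ell(p, y) = -\sum_{c} y_c \log p_c$ and the KL divergence is $\KL(q \| p) = \sum_c q_c \log q_c - \sum_c q_c \log p_c$. Substituting these into $\mathcal{L}(p) = \lambda \ell(p,y) + (1-\lambda)\KL(q\|p)$ and collecting the terms that depend on $p$ gives
\begin{equation}
\mathcal{L}(p) = -\sum_{c} \big(\lambda y_c + (1-\lambda) q_c\big) \log p_c + (1-\lambda)\sum_c q_c \log q_c,
\end{equation}
where the final sum is the (negated) teacher entropy and is constant in $p$.

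Next I would define the combined coefficient vector $m := \lambda y + (1-\lambda) q$ and observe that it is itself a valid element of $\Delta^{C-1}$: each $m_c \geq 0$ and, since $y$ and $q$ both sum to one, $\sum_c m_c = \lambda + (1-\lambda) = 1$. Up to the additive constant, minimizing $\mathcal{L}(p)$ is therefore equivalent to minimizing the cross-entropy $H(m, p) = -\sum_c m_c \log p_c$ over $p \in \Delta^{C-1}$. By Gibbs' inequality, $H(m, p) \geq H(m, m)$ with equality if and only if $p = m$, so the unique minimizer is $p = m = \lambda y + (1-\lambda) f(x) = f^*(x)$, as claimed.

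Alternatively, I would present the same conclusion via a Lagrange-multiplier computation to make the optimization explicit: introducing a multiplier $\nu$ for the constraint $\sum_c p_c = 1$, the stationarity condition $-m_c/p_c + \nu = 0$ yields $p_c = m_c/\nu$, and enforcing the constraint forces $\nu = \sum_c m_c = 1$, hence $p_c = m_c$. Since $-\log$ is convex, $\mathcal{L}$ is convex on the simplex and this stationary point is the global minimizer. The main subtlety to handle carefully is the constraint structure: I must verify that $m$ genuinely lies in $\Delta^{C-1}$ (so that the unconstrained-looking optimum is actually feasible) and confirm that the non-negativity constraints remain inactive, which holds whenever $m_c > 0$; the degenerate coordinates where $m_c = 0$ are handled consistently by the convention $0 \log 0 = 0$ and push the corresponding $p_c$ to $0$.
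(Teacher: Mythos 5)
Your proof is correct. Note, however, that the paper itself gives no proof of Theorem~\ref{thm:optimal_distillation}: it states the result and defers to the cited DHO framework, so there is no in-paper argument to compare against --- your derivation supplies the missing one, and it is the standard one. The reduction is exactly right: writing $q:=f(x)$, both $\ell(p,y)=-\sum_c y_c\log p_c$ and $\KL(q\Vert p)$ contribute a $-(\cdot)\log p_c$ term, so the objective collapses to $H(m,p)$ plus a constant with $m=\lambda y+(1-\lambda)q\in\Delta^{C-1}$, and Gibbs' inequality (equivalently $H(m,p)=H(m)+\KL(m\Vert p)$) pins the minimizer at $p=m$. The Lagrange-multiplier variant is consistent, and your care about feasibility of $m$ and the $m_c=0$ coordinates is appropriate; one can add that since $f(x)$ is a softmax output, $q_c>0$ for every $c$, so $m_c>0$ whenever $\lambda<1$ and the degenerate case only arises at $\lambda=1$, where the minimizer is still $p=m=y$ because pinning the supported coordinates together with $\sum_c p_c=1$ forces the remaining coordinates to zero. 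No gaps.
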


This optimal prediction $f^*(x)$ represents the theoretical target that a single-head distillation approach should converge to.

\begin{assumption}[$\varepsilon$-Convergence]
\label{assump:convergence}
We assume that after sufficient training, a student model has converged to the optimal prediction with bounded error:
\begin{equation}
\sup_x \|f_r(x) - f^*(x)\|_1 \leq \varepsilon
\end{equation}
where $f_r(x) = \sigma(g(h(x)))$ is the student model's prediction, $f^*(x) = \lambda y + (1-\lambda) f(x)$ is the optimal prediction, $\|\cdot\|_1$ denotes the L1 norm, and $\varepsilon > 0$ is a small constant representing generalization error.
\end{assumption}

This assumption is reasonable for both single-head and dual-head approaches:

For the \textbf{Single-Head Approach}, a single output head $g(h(x))$ is trained directly on the combined loss $\mathcal{L}(f^*(x))$, and with sufficient capacity and training, it approximates $f^*(x)$ with error bounded by $\varepsilon$.

For the \textbf{Dual-Head Approach}, in the Dual-Head Optimization (DHO) framework~\citep{DHO}, shared features $h(x)$ are extracted from input $x$ and two specialized classification heads are employed: $g_{\text{CE}}(h(x))$, optimized for ground truth labels using cross-entropy loss, and $g_{\text{KD}}(h(x))$, optimized for teacher predictions using KL divergence. This approach addresses the constrained optimization problem:
\begin{equation}
\min_{h,g_{\text{CE}},g_{\text{KD}}} \lambda \mathbb{E}_{x,y}[\ell(\sigma(g_{\text{CE}}(h(x))), y)] + (1-\lambda) \mathbb{E}_{x}[\KL(f(x) \| \sigma(g_{\text{KD}}(h(x))))]
\end{equation}

At inference time, their outputs are combined to form the final prediction:
\begin{equation}
f_r(x)_{\text{DHO}} = \alpha \cdot \sigma(g_{\text{CE}}(h(x))) + (1-\alpha) \cdot \sigma(g_{\text{KD}}(h(x))/\beta)
\end{equation}

The DHO framework~\citep{DHO} provides a formal proof that under appropriate optimization conditions, the dual-head inference is equivalent to single-head inference when setting $\alpha=\lambda$ and $\beta=\tau$, meaning that for any choice of distillation weight $\lambda$ and temperature $\tau$, the two approaches converge to the same prediction.

\begin{theorem}[Optimal Prediction as Linear Combination]
\label{thm:approach_equiv}
Under Assumption~\ref{assump:convergence}, the optimal prediction $f^*(x) = \lambda y + (1-\lambda) f(x)$ is a linear combination of ground truth and teacher prediction, with bounded error $\varepsilon$, regardless of whether a single-head or dual-head approach is employed.
This formulation provides the basis for analyzing bias propagation in knowledge distillation.
\end{theorem}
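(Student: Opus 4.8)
The plan is to treat this theorem as a synthesis of the two preceding results rather than as an independent derivation: Theorem~\ref{thm:optimal_distillation} already pins down the functional form of the loss minimizer, Assumption~\ref{assump:convergence} supplies the error bound, and the only genuinely new content is to argue that the \emph{same} target is reached irrespective of architecture. First I would invoke Theorem~\ref{thm:optimal_distillation} to assert that the unique minimizer of $\mathcal{L}(f^*(x)) = \lambda\,\ell(f^*(x),y) + (1-\lambda)\,\KL(f(x)\,\|\,f^*(x))$ over the simplex $\Delta^{C-1}$ is the convex combination $f^*(x) = \lambda y + (1-\lambda) f(x)$. This establishes the ``linear combination'' claim at the level of the idealized optimum; it then remains to connect this optimum to the prediction actually produced by each architecture and to carry the $\varepsilon$ bound through.

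For the single-head case the argument is immediate: the head $\sigma(g(h(x)))$ is trained directly on $\mathcal{L}(f^*(x))$, so Assumption~\ref{assump:convergence} yields $\sup_x \|f_r(x) - f^*(x)\|_1 \leq \varepsilon$ with $f^*(x)$ the linear combination above, and there is nothing further to check. For the dual-head case I would rely on the DHO equivalence cited above: with the inference rule $f_r(x)_{\text{DHO}} = \alpha\,\sigma(g_{\text{CE}}(h(x))) + (1-\alpha)\,\sigma(g_{\text{KD}}(h(x))/\beta)$ and the choices $\alpha = \lambda$, $\beta = \tau$, the dual-head prediction coincides with the single-head prediction at optimality. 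The key intermediate step is to verify that each specialized head converges to its own target---minimizing the cross-entropy term drives $\sigma(g_{\text{CE}}(h(x))) \to y$ and minimizing the KL term drives $\sigma(g_{\text{KD}}(h(x))/\beta) \to f(x)$---so that their $\lambda$-weighted combination reproduces $f^*(x)$, after which the same $\varepsilon$-convergence assumption transfers the bound to $f_r(x)_{\text{DHO}}$. Combining the two cases gives the claim that $f^*(x) = \lambda y + (1-\lambda) f(x)$ is the common target with error at most $\varepsilon$, regardless of architecture.

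The main obstacle is the dual-head step, specifically justifying that two heads sharing a single backbone $h$ can \emph{simultaneously} approach $y$ and $f(x)$ closely enough that their convex combination stays within $\varepsilon$ of $f^*$. Because $y$ is one-hot, the cross-entropy head can only approach its target in the limit of diverging logits, so the error contributed by each head must be tracked and shown to aggregate---via the triangle inequality for $\|\cdot\|_1$, weighted by $\lambda$ and $1-\lambda$---into the stated bound. This is exactly where the DHO equivalence result does the heavy lifting, and I would lean on it rather than re-derive the shared-feature optimization from scratch; the remainder of the proof is then a bookkeeping exercise assembling Theorem~\ref{thm:optimal_distillation} and Assumption~\ref{assump:convergence} into the final statement.
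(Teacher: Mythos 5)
Your proposal matches the paper's (implicit) justification: the paper gives no standalone proof of this theorem, treating it exactly as you do---as a synthesis of Theorem~\ref{thm:optimal_distillation} (which fixes the form $f^*(x)=\lambda y+(1-\lambda)f(x)$), Assumption~\ref{assump:convergence} (which supplies the $\varepsilon$ bound), and the cited DHO equivalence for the dual-head case. The only simplification you could make is that Assumption~\ref{assump:convergence} is stated directly on the final prediction $f_r(x)$ for either architecture, so the per-head error tracking and triangle-inequality aggregation you flag as the ``main obstacle'' is already absorbed into the assumption and need not be carried out.
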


This theoretical foundation establishes that regardless of the architectural choice (single-head or dual-head), the optimal student prediction converges to a linear combination of the ground truth and teacher prediction, providing a principled basis for analyzing bias propagation in knowledge distillation.

%%%%%%%%%%%%%%%%%%%%%%%%%%%%%%%%%%%%%%%%%%%%%%%%
%%%%%%%%%%%%%%%%%%%%%%%%%%%%%%%%%%%%%%%%%%%%%%%%
%%%%%%%%%%%%%%%%%%%%%%%%%%%%%%%%%%%%%%%%%%%%%%%%

\subsection{Formal Definition of Teacher Prediction Bias}
\label{sec:theoretical_analysis_bias}

\begin{definition}[Structured prediction bias]\label{def:bias_2}
Teacher predictions exhibit \textit{structured prediction bias} if there exist $K \in \mathbb{N}$, centroids $\{\mu_k\}_{k=1}^{K} \subset \Delta^{C-1}$, and radii $\{r_k\}_{k=1}^{K} \subset \mathbb{R}_{>0}$, such that:
\begin{equation}
\forall x \in \mathcal{X}, \quad f(x) \in \bigcup_{k=1}^{K} \left\{ p \in \Delta^{C-1} : \|p - \mu_k\|_2 \leq r_k \right\}.
\end{equation}
\end{definition}

This formulation captures the empirical phenomenon that foundation teacher models tend to produce predictions clustered within a limited number of regions in the probability simplex $\Delta^{C-1}$ (where $C$ is the number of classes), rather than utilizing the entire space of possible probability distributions.
The number of biased regions $K$ is typically much smaller than the model's theoretical expressivity would allow, reflecting inherent biases in the teacher's pretraining data and architecture.

\begin{proposition}[Bias propagation through KD]\label{prop:bias-propagation_2}
Let the teacher model $f$ exhibit structured prediction bias as defined in \Cref{def:bias_2}, with $\{\mu_k\}_{k=1}^K$ and $\{r_k\}_{k=1}^K$.
Assume the student model $f_r$ is trained via KD from $f$ using the loss
$\lambda\mathcal{L}_{\text{CE}} + (1 - \lambda)\mathcal{L}_{\text{KD}}$, and satisfies
$\sup_x \|f_r(x) - f^*(x)\|_1 \leq \varepsilon$ for all $x \in \mathcal{X}$, where $f^*(x) = \lambda y + (1-\lambda) f(x)$ and $y \in \{0,1\}^C$ denotes the one-hot label of $x$.
Then student predictions $f_r(x)$ also exhibit structured prediction bias as in \Cref{def:bias_2}.
Specifically, for every $x \in \mathcal{X}$, there exists $k \in [K]$ such that:
\begin{equation}
f_r(x) \in \left\{ p \in \Delta^{C-1} : \|p - \hat{\mu}_k(x)\|_2 \leq \hat{r}_k \right\},
\end{equation}
where the propagated centroid is defined as $\hat{\mu}_k(x) = \lambda y + (1 - \lambda)\mu_k$,
and the adjusted radius is $\hat{r}_k = (1 - \lambda)r_k + \varepsilon$.
Since $y$ is one-hot and $\mu_k$ is fixed, the set of possible centers $\{\hat{\mu}_k(x)\}$ is finite and contained in $\Delta^{C-1}$,
thus satisfying the condition of \Cref{def:bias_2} with at most $C \cdot K$ clusters.
\end{proposition}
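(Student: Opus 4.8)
The plan is to reduce the proposition to a pointwise triangle-inequality estimate, followed by a short counting argument. First I would fix an arbitrary $x \in \mathcal{X}$ and invoke the teacher's structured prediction bias from \Cref{def:bias_2}: there exists an index $k \in [K]$ with $\|f(x) - \mu_k\|_2 \leq r_k$. This is precisely the cluster whose propagated version I will use to locate $f_r(x)$, so the index $k$ attached to the student is inherited from the teacher cluster containing $f(x)$.

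Next I would compare the optimal target $f^*(x) = \lambda y + (1-\lambda) f(x)$ against the proposed propagated centroid $\hat{\mu}_k(x) = \lambda y + (1-\lambda)\mu_k$. Since the $\lambda y$ terms cancel, $f^*(x) - \hat{\mu}_k(x) = (1-\lambda)\bigl(f(x) - \mu_k\bigr)$, and hence $\|f^*(x) - \hat{\mu}_k(x)\|_2 = (1-\lambda)\|f(x) - \mu_k\|_2 \leq (1-\lambda) r_k$. Combining this with the $\varepsilon$-convergence hypothesis via the triangle inequality gives $\|f_r(x) - \hat{\mu}_k(x)\|_2 \leq \|f_r(x) - f^*(x)\|_2 + (1-\lambda) r_k$, and plugging in the bound on the first term yields exactly $\hat{r}_k = (1-\lambda) r_k + \varepsilon$.

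The one step requiring care — and the main (if mild) obstacle — is a norm mismatch: the convergence assumption is stated in the $\ell_1$ norm, $\|f_r(x) - f^*(x)\|_1 \leq \varepsilon$, whereas the bias balls are defined in the $\ell_2$ norm. Here I would simply invoke the elementary inequality $\|v\|_2 \leq \|v\|_1$, valid for any vector, to conclude $\|f_r(x) - f^*(x)\|_2 \leq \varepsilon$. It is worth flagging that this is the only place the $\ell_1$ bound is used, and that the resulting radius is conservative; a tighter analysis could in principle exploit that $f_r(x) - f^*(x)$ lies in a bounded simplex difference, but the crude $\ell_2 \leq \ell_1$ bound suffices for the stated claim.

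Finally, for the containment and counting portion, I would observe that each $\hat{\mu}_k(x) = \lambda y + (1-\lambda)\mu_k$ is a convex combination of the vertex $y \in \Delta^{C-1}$ (a one-hot vector) and $\mu_k \in \Delta^{C-1}$, so it lies in $\Delta^{C-1}$ by convexity of the simplex. Because $y$ ranges over the $C$ one-hot labels and $k$ over $[K]$, the collection of distinct admissible centers $\{\hat{\mu}_k(x)\}$ has cardinality at most $C \cdot K$, which verifies the hypothesis of \Cref{def:bias_2} for the student with at most $CK$ clusters and radii $\{\hat{r}_k\}_{k=1}^{K}$. Since $x$ was arbitrary, the membership holds for all $x \in \mathcal{X}$, completing the argument.
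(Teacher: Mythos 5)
Your proof is correct, and it follows the same basic decomposition as the paper's — split $f_r(x)-\hat{\mu}_k(x)$ into the convergence error $f_r(x)-f^*(x)$ plus the scaled teacher deviation $(1-\lambda)(f(x)-\mu_k)$ — but you execute it more carefully. The paper argues componentwise: it converts the $\ell_1$ convergence bound into the interval bound $f^*(x)-\varepsilon\mathbf{1}\le f_r(x)\le f^*(x)+\varepsilon\mathbf{1}$, relaxes the teacher's $\ell_2$-ball membership to $\mu_k - r_k\mathbf{1}\le f(x)\le \mu_k+r_k\mathbf{1}$, combines the two, and then asserts the $\ell_2$-ball conclusion $\|f_r(x)-\hat{\mu}_k(x)\|_2\le (1-\lambda)r_k+\varepsilon$. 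That last step is actually loose as written: a per-coordinate bound of $(1-\lambda)r_k+\varepsilon$ only yields an $\ell_2$ bound of $\sqrt{C}\,\bigl((1-\lambda)r_k+\varepsilon\bigr)$ without further argument. Your route avoids this entirely by staying in the $\ell_2$ norm throughout — using the exact identity $\|f^*(x)-\hat{\mu}_k(x)\|_2=(1-\lambda)\|f(x)-\mu_k\|_2$ and the elementary inequality $\|v\|_2\le\|v\|_1$ for the error term — so the stated radius $\hat{r}_k=(1-\lambda)r_k+\varepsilon$ follows directly from the triangle inequality with no norm-conversion slack. Your convexity argument for $\hat{\mu}_k(x)\in\Delta^{C-1}$ and the $C\cdot K$ counting match the paper's Corollary on student bias inheritance. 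In short, your write-up is not just adequate but repairs a small rigor gap in the paper's own proof.
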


\begin{proof}
From Assumption~\ref{assump:convergence}, we have:
\begin{equation}
\|f_r(x) - f^*(x)\|_1 \leq \varepsilon
\end{equation}

For any L1 norm bounded by $\varepsilon$, the maximum deviation in any single component is also bounded by $\varepsilon$, yielding:
\begin{equation}
f^*(x) - \varepsilon\mathbf{1} \leq f_r(x) \leq f^*(x) + \varepsilon\mathbf{1}
\end{equation}

From Definition~\ref{def:bias_2}, for input $x$, the teacher's prediction $f(x)$ satisfies:
\begin{equation}
\mu_i - r_i\mathbf{1} \leq f(x) \leq \mu_i + r_i\mathbf{1}
\end{equation}

for some bias region center $\mu_i$. Substituting this into the formula for $f^*(x) = \lambda y + (1-\lambda) f(x)$:
\begin{equation}
\lambda y + (1-\lambda)(\mu_i - r_i\mathbf{1}) \leq f^*(x) \leq \lambda y + (1-\lambda)(\mu_i + r_i\mathbf{1})
\end{equation}

Combining these inequalities:
\begin{equation}
\lambda y + (1-\lambda)(\mu_i - r_i\mathbf{1}) - \varepsilon\mathbf{1} \leq f_r(x) \leq \lambda y + (1-\lambda)(\mu_i + r_i\mathbf{1}) + \varepsilon\mathbf{1}
\end{equation}

To show that student predictions remain constrained, we note that for each teacher bias region centered at $\mu_k$, the student's predictions are bounded by:
\begin{equation}
\hat{\mu}_k(x) - (1-\lambda)r_k\mathbf{1} - \varepsilon\mathbf{1} \leq f_r(x) \leq \hat{\mu}_k(x) + (1-\lambda)r_k\mathbf{1} + \varepsilon\mathbf{1}
\end{equation}

where $\hat{\mu}_k(x) = \lambda y + (1-\lambda)\mu_k$. This implies that student predictions are constrained within regions centered at $\hat{\mu}_k(x)$ with radius $\hat{r}_k = (1-\lambda)r_k + \varepsilon$.

While the teacher's predictions are constrained to $K$ distinct regions, the student's predictions may not maintain exactly $K$ distinct regions due to potential merging or splitting effects when combining with ground truth labels. However, the student's predictions remain bounded and cannot freely explore the entire probability simplex.

Setting $\hat{r}_k = (1-\lambda)r_k + \varepsilon$, we can express the student's prediction constraint as:
\begin{equation}
f_r(x) \in \bigcup_{k'=1}^{K'} \{p \in \Delta^{C-1} : \|p - \hat{\mu}_k(x)\|_2 \leq \hat{r}_k\}
\end{equation}
\end{proof}

This demonstrates that regardless of the exact number of distinct regions, the student model inherits constrained prediction patterns from the teacher model, with centers shifted by the ground truth component and radii scaled by $(1-\lambda)$ plus the convergence error $\varepsilon$.

\begin{corollary}[Student Bias Inheritance]
\label{cor:student_bias}
Let $K'$ denote the number of distinct prediction regions in the student model after distillation.
The student model's predictions are constrained to these $K'$ distinct regions in the probability simplex, where:
\begin{equation}
K' \leq K \cdot C
\end{equation}
where $C$ is the number of classes.
\end{corollary}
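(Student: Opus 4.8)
The plan is to obtain the bound as an immediate corollary of \Cref{prop:bias-propagation_2} via a counting argument over the admissible centers of the student's prediction regions. \Cref{prop:bias-propagation_2} already establishes that every student prediction $f_r(x)$ lies in a ball centered at $\hat{\mu}_k(x) = \lambda y + (1-\lambda)\mu_k$ of radius $\hat{r}_k = (1-\lambda)r_k + \varepsilon$, where $k \in \{1,\dots,K\}$ is the teacher cluster index selected by $x$. Hence the full set of student predictions is covered by the union of balls indexed by the distinct values the center $\hat{\mu}_k(x)$ can take, and it suffices to count those values.

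The key observation is that $\hat{\mu}_k(x)$ depends on $x$ only through two discrete quantities: the teacher cluster index $k \in \{1,\dots,K\}$ and the one-hot label $y \in \{0,1\}^C$ of $x$. First I would note that, since $y$ is one-hot over $C$ classes, it ranges over exactly the $C$ standard basis vectors $e_1,\dots,e_C$. With the teacher centroids $\mu_1,\dots,\mu_K$ held fixed, the map $(k,y) \mapsto \lambda y + (1-\lambda)\mu_k$ has domain $\{1,\dots,K\} \times \{e_1,\dots,e_C\}$ of cardinality $K \cdot C$, so its image contains at most $K \cdot C$ distinct points. Each such point is the center of one candidate region, and therefore the number of distinct student regions $K'$ covering all predictions is at most $K \cdot C$.

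The only subtlety I anticipate concerns the precise meaning of ``distinct regions.'' Two pairs $(k,y)$ and $(k',y')$ may map to the same center, and two balls with different centers may overlap and coalesce into a single connected region; in either case the number of genuinely distinct regions only \emph{decreases} relative to the crude pair count. Since the corollary asserts an upper bound, these degeneracies are harmless, and $K' \le K \cdot C$ follows directly. I do not expect a real obstacle: the entire content of the corollary is the bookkeeping that the ground-truth component $\lambda y$ can shift each of the $K$ teacher centroids into at most $C$ distinct positions, one per class, while the harder confinement claim---that the student stays within balls around $\hat{\mu}_k(x)$ at all---has already been discharged in \Cref{prop:bias-propagation_2}.
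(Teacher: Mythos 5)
Your argument is correct and matches the paper's own proof essentially verbatim: both reduce the bound to counting the image of the map $(k, e_c) \mapsto \lambda e_c + (1-\lambda)\mu_k$ over the $K \cdot C$ pairs of teacher centroids and one-hot labels, with the confinement itself already supplied by \Cref{prop:bias-propagation_2}. Your added remark that coinciding or overlapping balls can only decrease the region count is a harmless (and slightly more careful) observation than the paper makes, but it does not change the route.
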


\begin{proof}
Consider the student's optimal prediction $f^*(x) = \lambda y + (1-\lambda) f(x)$ for any input $x$. From Definition~\ref{def:bias_2}, the teacher's prediction $f(x)$ belongs to one of $K$ distinct regions in the probability simplex, each centered at some $\mu_k$ with radius $r_k$.

For each teacher bias region centered at $\mu_k$, the student's prediction center becomes $\hat{\mu}_{k,c} = \lambda e_c + (1-\lambda)\mu_k$, where $e_c$ is the one-hot vector for class $c$. This is a function of both the teacher's bias center $\mu_k$ and the ground truth label.

Since there are $K$ distinct teacher bias regions and $C$ distinct ground truth label distributions (one per class), there can be at most $K \cdot C$ distinct combinations of $(\mu_k, e_c)$. Each such combination produces a potential distinct student bias region centered at $\hat{\mu}_{k,c} = \lambda e_c + (1-\lambda)\mu_k$.

Therefore, the maximum number of distinct student prediction regions is bounded by:
\begin{equation}
K' \leq K \cdot C
\end{equation}

These student regions are convex combinations of ground truth labels and the teacher's biased regions, with an additional error margin of $\varepsilon$. Specifically, for each teacher bias region with center $\mu_k$ and radius $r_k$, and each class $c$, there corresponds a student bias region with center $\hat{\mu}_{k,c} = \lambda e_c + (1-\lambda)\mu_k$ and radius $\hat{r}_k = (1-\lambda)r_k + \varepsilon$.
\end{proof}

This establishes a comprehensive mathematical relationship between teacher bias and its propagation to the student model during knowledge distillation.
The student model inherits a structured and constrained prediction space from the teacher, which constitutes a direct and quantifiable form of bias propagation through the distillation process.

%%%%%%%%%%%%%%%%%%%%%%%%%%%%%%%%%%%%%%%%%%%%%%%%
%%%%%%%%%%%%%%%%%%%%%%%%%%%%%%%%%%%%%%%%%%%%%%%%
%%%%%%%%%%%%%%%%%%%%%%%%%%%%%%%%%%%%%%%%%%%%%%%%

%%%%%%%%%%%%%%%%%%%%%%%%%%%%%%%%%%%%%%%%%%%%%%%%
%%%%%%%%%%%%%%%%%%%%%%%%%%%%%%%%%%%%%%%%%%%%%%%%
%%%%%%%%%%%%%%%%%%%%%%%%%%%%%%%%%%%%%%%%%%%%%%%%

\section{Implementation Details}
\label{sec:implementation}

\begin{algorithm}[H]
    \caption{DHO Training with zero-shot CLIP~\citep{radford2021learning} teacher}
    \label{alg:training}
    \begin{algorithmic}[1]
        \State {\bfseries Input:} labeled set $\mathcal{D}^{(l)} = \{(x^{(l)}_i, y_i)\}_{i=1}^N$, unlabeled set $\mathcal{D}^{(u)} = \{x^{(u)}_j\}_{j=1}^M$,
        \State \phantom{\bfseries Input:} student feature extractor $g$, prediction heads $h_{\text{CE}}, h_{\text{KD}}$, teacher encoders $f_{\mathcal{X}}, f_{\mathcal{T}}$,
        \State \phantom{\bfseries Input:} prompt template ``A photo of \texttt{[CLASS]}'', temperature scaling factors $\zeta, \eta$, \State\phantom{\bfseries Input:} balancing hyperparameter $\lambda$,
        \State \phantom{\bfseries Input:} supervised mini-batch size $B$, and unsupervised mini-batch size $B'$.
        \While{not converged}
            \State Sample mini-batch $\mathcal{B}^{(l)} = \{(x_b^{(l)}, y_b)\}_{b=1}^{B}$ from $\mathcal{D}^{(l)}$, $\mathcal{B}^{(u)} = \{x_{b'}^{(u)}\}_{b'=1}^{B'}$ from $\mathcal{D}^{(l)}\cup\mathcal{D}^{(u)}$.

            \State \textcolor{gray}{// Process labeled data}
            \For{each $(x_b^{(l)}, y_b) \in \mathcal{B}^{(l)}$}
                \State $z_b^{(l)} \leftarrow g(x_b^{(l)})$
                \State $\hat{p}_{\text{CE},b}^{(l)} \leftarrow \sigma(h_{\text{CE}}(z_b^{(l)}))$
                \State $\hat{p}_{\text{KD},b}^{(l)} \leftarrow \sigma(\frac{1}{\eta}h_{\text{KD}}(z_b^{(l)}))$
                \State $p_b^{(l)} \leftarrow \sigma\left(\frac{1}{\zeta\cdot\eta}[\mathtt{CosSim}(f_\mathcal{X}(x_b^{(l)}), f_\mathcal{T}(t_1)), \ldots,\mathtt{CosSim}(f_\mathcal{X}(x_b^{(l)}), f_\mathcal{T}(t_C))]^\top\right)$
            \EndFor

            \State \textcolor{gray}{// Process unlabeled data}
            \For{each $x_{b'}^{(u)} \in \mathcal{B}^{(u)}$}
                \State $z_{b'}^{(u)} \leftarrow g(x_{b'}^{(u)})$
                \State $\hat{p}_{\text{KD},b'}^{(u)} \leftarrow \sigma(\frac{1}{\eta}h_{\text{KD}}(z_{b'}^{(u)}))$
                \State $p_{b'}^{(u)} \leftarrow \sigma\left(\frac{1}{\zeta\cdot\eta}[\mathtt{CosSim}(f_\mathcal{X}(x_{b'}^{(u)}), f_\mathcal{T}(t_1)), \ldots,\mathtt{CosSim}(f_\mathcal{X}(x_{b'}^{(u)}), f_\mathcal{T}(t_C))]^\top\right)$
            \EndFor

            \State \textcolor{gray}{// Compute losses and update}
            \State $\mathcal{L}_{\text{CE}} \leftarrow \frac{1}{B}\sum_{b=1}^B\ell(\hat{p}_{\text{CE},b}^{(l)},y_b)$
            \State $\mathcal{L}_{\text{KD}} \leftarrow \frac{1}{B}\sum_{b=1}^B \KL(\hat{p}_{\text{KD},b}^{(l)}||p_{b}^{(l)}) + \frac{1}{B'}\sum_{b'=1}^{B'} \KL(\hat{p}_{\text{KD},b'}^{(u)}||p_{b'}^{(u)})$
            \State $\mathcal{L} \leftarrow \lambda \mathcal{L}_{\text{CE}} + (1 - \lambda) \mathcal{L}_{\text{KD}}$
            \State Update parameters of $g$, $h_{\text{CE}}$, $h_{\text{KD}}$ using $\nabla \mathcal{L}$
        \EndWhile
    \end{algorithmic}
\end{algorithm}

\begin{algorithm}[H]
    \caption{DHO Inference}
    \label{alg:inference}
    \begin{algorithmic}[1]
        \State {\bfseries Input:} an image $x$, feature extractor $g$, prediction heads $h_{\text{CE}}, h_{\text{KD}}$, linear coefficient $\alpha$, temperature scaling $\beta$
        \State $z \leftarrow g(x)$
        \State $\hat{p}_{\text{CE}} \leftarrow \sigma(h_{\text{CE}}(z))$
        \State $\hat{p}_{\text{KD}} \leftarrow \sigma(h_{\text{KD}}(z)/\beta)$
        \State $\hat{p} \leftarrow \alpha \cdot \hat{p}_{\text{CE}} + (1-\alpha) \cdot \hat{p}_{\text{KD}}$
        \State $\hat{y} \leftarrow \arg\max_{c}(\hat{p}_c)$
        \State {\bfseries Return:} $\hat{y}$
    \end{algorithmic}
\end{algorithm}

We employ the DHO~\citep{DHO} framework for knowledge distillation in our approach, as it provides a principled and effective mechanism for transferring knowledge from foundation teacher models to student models.
DHO's key advantage lies in its dual-head architecture that maintains separated optimization paths for labeled data and teacher knowledge, thereby preserving the distinct learning signals throughout training.
\Cref{alg:training} details the DHO training procedure with our zero-shot CLIP teacher, while \Cref{alg:inference} outlines the inference process that effectively combines both optimization pathways.

\begin{table}[htb]
    \caption{
        Implementation details for our experiments across different settings.
    }
    \label{tab:implementation_details}
    \centering
    \small
    \resizebox{\textwidth}{!}{%
    \begin{tabular}{p{0.48\textwidth}|p{0.48\textwidth}}
        \toprule
        \rowcolor{gray!20} \multicolumn{2}{c}{\textit{Active Learning on ImageNet}} \\
        \midrule
        \textbf{Model Configuration} & \textbf{Training Details} \\
        \midrule
        \begin{itemize}[leftmargin=*,nosep]
            \item \textbf{Student:} DINO self-supervised ResNet-50 \citep{caron2021emerging}
            \item \textbf{Teacher:} CLIP ResNet-50 \citep{radford2021learning}
            \item \textbf{Active learning rounds:} 8
            \item \textbf{Initial setting:} 1-shot (single image per class)
            \item \textbf{Query size:} 1,000
            \item \textbf{Unlabeled pool:} 100,000 samples randomly selected with seed
            \item \textbf{Random seeds:} 5 different seeds
            \item \textbf{KD parameters:} $\zeta=0.01$, $\eta=2$, $\lambda=0.5$
            \item \textbf{DHO parameters:} $\alpha=0.4$, $\beta=0.5$
            \item \textbf{Validation:} No validation split used
        \end{itemize}
        &
        \begin{itemize}[leftmargin=*,nosep]
            \item \textbf{Epochs:} 20 for first 7 rounds, 50 for final round
            \item \textbf{Optimizer:} AdamW ($\beta_1=0.9$, $\beta_2=0.999$)
            \item \textbf{Learning rate:} $1\times10^{-3}$
            \item \textbf{Weight decay:} $1\times10^{-2}$
            \item \textbf{Batch size:} 512 (labeled: 256, unlabeled: 256)
        \end{itemize} \\
        \midrule
        \rowcolor{gray!20} \multicolumn{2}{c}{\textit{Active Learning on 10 Additional Datasets}} \\
        \midrule
        \textbf{Model Configuration} & \textbf{Training Details} \\
        \midrule
        \begin{itemize}[leftmargin=*,nosep]
            \item \textbf{Student:} ImageNet pre-trained ResNet-18 \citep{he2016deep}, MobileNetV2 \citep{sandler2018mobilenetv2}, and ImageNet-21k \citep{deng2009imagenet} pre-trained ViT-Tiny \citep{wu2022tinyvit}
            \item \textbf{Teacher:} CLIP ResNet-50 \citep{radford2021learning}
            \item \textbf{Active learning rounds:} 16
            \item \textbf{Initial setting:} 1-shot (single image per class)
            \item \textbf{Query size:} Equal to number of classes per round
            \item \textbf{Unlabeled pool:} All training samples except labeled set
            \item \textbf{Random seeds:} 5 different seeds
            \item \textbf{KD parameters:} $\zeta=0.01$, $\eta=2$, $\lambda=0.5$
            \item \textbf{DHO parameters:} $\alpha=0.5$, $\beta=1$
            \item \textbf{Validation:} No validation split used
        \end{itemize}
        &
        \begin{itemize}[leftmargin=*,nosep]
            \item \textbf{Epochs:} 200 for all rounds
            \item \textbf{Optimizer:} AdamW ($\beta_1=0.9$, $\beta_2=0.999$)
            \item \textbf{Learning rate:} $1\times10^{-3}$
            \item \textbf{Weight decay:} $1\times10^{-2}$
            \item \textbf{Batch size:} 128 (labeled: 64, unlabeled: 64)
        \end{itemize} \\
        \midrule
        \rowcolor{gray!20} \multicolumn{2}{c}{\textit{Few-Shot Teacher Active Distillation}} \\
        \midrule
        \textbf{Model Configuration} & \textbf{Training Details} \\
        \midrule
        \begin{itemize}[leftmargin=*,nosep]
            \item \textbf{Teacher:} CLAP \citep{silva2024closer} on CLIP ResNet-50 \citep{radford2021learning}
        \end{itemize}
        &
        \begin{itemize}[leftmargin=*,nosep]
            \item \textbf{Setup:} Following original CLAP paper \citep{silva2024closer}
            \item \textbf{Modification:} Learning rate reduced from 0.1 to 0.01 for better convergence
        \end{itemize} \\
        \bottomrule
    \end{tabular}
    }
\end{table}

Table~\ref{tab:implementation_details} presents the comprehensive implementation details for our experiments across three distinct settings.
For ImageNet experiments, we employed a DINO self-supervised ResNet-50 student model with CLIP ResNet-50 as the teacher, conducting 8 active learning rounds starting from a 1-shot setting with 1,000 queries per round, using AdamW optimization with carefully tuned hyperparameters.
Our experiments on 10 additional datasets utilized various student architectures (ResNet-18, MobileNetV2, and ViT-Tiny) with CLIP ResNet-50 as the teacher, extending to 16 active learning rounds with class-count-based query sizes and longer training epochs.
For the Few-Shot Teacher Active Distillation setting, we implemented CLAP on CLIP ResNet-50 following the original paper's methodology with a reduced learning rate (0.01) to ensure better convergence in our specific experimental context.
All experiments were conducted across 5 different random seeds without utilizing validation splits to simulate realistic low-data scenarios.

\paragraph{Few-Shot Teacher Active Distillation.}
Recognizing the potential for improving teacher network performance, particularly in low-data regimes, we explored few-shot learning techniques to enhance the distillation process.
After considering various approaches \citep{jia2022visual, zhou2022learning, zhou2022conditional, khattak2023maple, zhu2023prompt, khattak2023self, menghini2023enhancing, zhao2024learning, roy2023consistency, zhang2024dept, lafon2024gallop}, we incorporated CLAP \citep{silva2024closer}, a method that notably does not require a validation set \citep{silva2024closer, murugesan2024robust, morales2024bayesadapter}, as our teacher model.
For training the teacher, we followed the setting from the original CLAP paper \citep{silva2024closer}, with only one modification:
reducing the learning rate from 0.1 to 0.01 for better convergence in our setting.
This configuration represents a more realistic scenario wherein both teacher and student networks evolve simultaneously within the active distillation framework, better reflecting practical applications where pre-trained teacher models may not be available.

\paragraph{Implementation of BADGE under DHO.}
BADGE \citep{ash2019deep} adopts a hybrid approach to maximize uncertainty and diversity simultaneously, utilizing the gradient of the final linear classifier layer which has $H \times C$ elements, where $H$ is the hidden dimension of the backbone feature and $C$ is the number of classes in the teacher model.
However, DHO \citep{DHO} extends beyond conventional single classifier architectures to dual classifiers that follow different optimization paths for labeled ground truth and teacher distillation signals.
The dual classifier approach provides complementary information from both learning objectives, enhancing model performance through this specialized learning framework.
In order to adopt DHO into the BADGE selection method, we manually calculate the gradients of both classifiers and concatenate them to form a gradient representation with $2 \times H \times C$ elements.
This concatenated gradient naturally extends the original BADGE selection algorithm while preserving its core k-means++ clustering mechanism.

% \begin{wrapfigure}{r}{0.58\textwidth}
%     \vspace{-0.35in}
%     \begin{minipage}[h]{0.58\textwidth}
%         \begin{algorithm}[H]
%             \small
%             \caption{Class-Balanced Selection}
%             \label{alg:cb}
%             \begin{algorithmic}[1]
%                 \Require Unlabeled pool $\mathcal{D}^{(u)} = \{x_1, \ldots, x_n\} \subset \mathbb{R}^d$, model outputs $f_r = \{f_r(x_1), f_r(x_2), \ldots, f_r(x_n)\}$, labeled dataset $\mathcal{D}^{(l)}$, query size $K$, number of classes $C$
%                 \Ensure Selected set $S \subset \mathcal{D}^{(u)}$ with $|S| = K$
%                 \State Initialize $S = \emptyset$
%                 \State Set $y_i = \arg\max_{c} \, [f_r(x_i)]_c$ for all $x_i \in \mathcal{D}^{(u)}$
%                 \State Count $n_c = |\{x_i \in \mathcal{D}^{(l)} : y_i = c\}|$ for each $c \in [C]$
%                 \State Set weights $w_c = \frac{1}{n_c}$ if $n_c > 0$, else $w_c = 1$
%                 \State Compute $K_c = \text{round}(\frac{w_c}{\sum_{j=1}^C w_j} \cdot K)$ for each $c$
%                 \State Partition $\mathcal{D}^{(u)}$ into $\mathcal{D}^{(u)}_c = \{x_i \in \mathcal{D}^{(u)} : y_i = c\}$
%                 \For{each class $c \in [C]$}
%                     \State Add $\min(K_c, |\mathcal{D}^{(u)}_c|)$ random samples from $\mathcal{D}^{(u)}_c$ to $S$
%                 \EndFor
%                 \State \Return $S$
%             \end{algorithmic}
%         \end{algorithm}
%     \end{minipage}
% \end{wrapfigure}

\begin{algorithm}[H]
    \small
    \caption{Class-Balanced Selection}
    \label{alg:cb}
    \begin{algorithmic}[1]
        \Require Unlabeled pool $\mathcal{D}^{(u)} = \{x_1, \ldots, x_m\} \subset \mathbb{R}^d$, model outputs $f_r = \{f_r(x_1), f_r(x_2), \ldots, f_r(x_n)\}$, labeled dataset $\mathcal{D}^{(l)}$, query size $K$, number of classes $C$
        \Ensure Selected set $S \subset \mathcal{D}^{(u)}$ with $|S| = K$
        \State Initialize $S = \emptyset$
        \State Set $y_i = \arg\max_{c} \, [f_r(x_i)]_c$ for all $x_i \in \mathcal{D}^{(u)}$
        \State Count $n_c = |\{x_i \in \mathcal{D}^{(l)} : y_i = c\}|$ for each $c \in [C]$
        \State Set weights $w_c = \frac{1}{n_c}$ if $n_c > 0$, else $w_c = 1$
        \State Compute $K_c = \text{round}(\frac{w_c}{\sum_{j=1}^C w_j} \cdot K)$ for each $c$
        \State Partition $\mathcal{D}^{(u)}$ into $\mathcal{D}^{(u)}_c = \{x_i \in \mathcal{D}^{(u)} : y_i = c\}$
        \For{each class $c \in [C]$}
            \State Add $\min(K_c, |\mathcal{D}^{(u)}_c|)$ random samples from $\mathcal{D}^{(u)}_c$ to $S$
        \EndFor
        \State \Return $S$
    \end{algorithmic}
\end{algorithm}

\paragraph{Implementation of Class-Balanced Selection.}
We implement the Class-Balanced Selection algorithm (\Cref{alg:cb}) as our baseline method to address class distribution bias.
The algorithm assigns pseudo-labels to unlabeled samples based on model predictions, analyzes class distribution in the labeled set, and computes inverse weights proportional to each class's representation—giving higher weights to underrepresented classes.
It then allocates the query budget across classes according to these weights and randomly selects samples from each class partition.
This approach effectively counteracts class imbalance by prioritizing underrepresented classes.

%%%%%%%%%%%%%%%%%%%%%%%%%%%%%%%%%%%%%%%%%%%%%%%%
%%%%%%%%%%%%%%%%%%%%%%%%%%%%%%%%%%%%%%%%%%%%%%%%
%%%%%%%%%%%%%%%%%%%%%%%%%%%%%%%%%%%%%%%%%%%%%%%%

\section{Summary of Datasets}
\label{sec:datasets}

\begin{table}[htbp]
    \caption{
        Overview of datasets used in our experiments.
        % We presents the number of classes, number of samples in each training, validation, and test splits, and domain of each dataset.
        Note that validation split is not used during the active learning process, and is only shown for completeness.
    }
    \label{tab:dataset_overview}
    \centering
    \small
    \setlength{\tabcolsep}{6pt}
    \renewcommand{\arraystretch}{1.2}
    \begin{tabular}{l|ccccc}
        \toprule
        \rowcolor{gray!10} \textbf{Dataset} & \textbf{\#Classes} & \textbf{\#Train} & \textbf{\#Val} & \textbf{\#Test} & \textbf{Domain} \\
        \midrule
        Caltech101 \citep{fei2004learning} & 100 & 4,128 & 1,649 & 2,465 & General objects \\
        \rowcolor{gray!5} DTD \citep{cimpoi2014describing} & 47 & 2,820 & 1,128 & 1,692 & Textures \\
        EuroSAT \citep{helber2019eurosat} & 10 & 13,500 & 5,400 & 8,100 & Satellite \\
        \rowcolor{gray!5} FGVCAircraft \citep{maji2013fine} & 100 & 3,334 & 3,333 & 3,333 & Aircraft \\
        Food101 \citep{bossard2014food} & 101 & 50,500 & 20,200 & 30,300 & Food \\
        \rowcolor{gray!5} Flowers102 \citep{nilsback2008automated} & 102 & 4,093 & 1,633 & 2,463 & Plants \\
        OxfordPets \citep{parkhi2012cats} & 37 & 2,944 & 736 & 3,669 & Animals \\
        \rowcolor{gray!5} StanfordCars \citep{krause20133d} & 196 & 6,509 & 1,635 & 8,041 & Vehicles \\
        SUN397 \citep{xiao2010sun} & 397 & 15,880 & 3,970 & 19,850 & Scenes \\
        \rowcolor{gray!5} UCF101 \citep{soomro2012ucf101} & 101 & 7,639 & 1,898 & 3,783 & Actions \\
        ImageNet \citep{russakovsky2015imagenet} & 1,000 & 1.28M & - & 50,000 & General objects \\
        \bottomrule
    \end{tabular}
\end{table}

We provide the details of datasets we used in our experiments in this section.
Our experimental evaluation encompasses 11 diverse datasets, including ImageNet \citep{russakovsky2015imagenet} and 10 additional datasets spanning various domains and classification challenges.
These datasets represent a broad spectrum of visual recognition tasks including fine-grained classification across multiple domains: generic object recognition \citep{fei2004learning}, automobile classification \citep{krause20133d}, flower species identification \citep{nilsback2008automated}, aircraft categorization \citep{maji2013fine}, pet breed classification \citep{parkhi2012cats}, food recognition \citep{bossard2014food}, scene understanding \citep{xiao2010sun}, texture classification \citep{cimpoi2014describing}, satellite imagery analysis \citep{helber2019eurosat}, and human action recognition \citep{soomro2012ucf101}.
Details of number of samples in each training, validation and test splits are illustrated in the \cref{tab:dataset_overview}.

We adhere to the train, validation, and test splits established in prior work \citep{DHO}.
Our experimental protocol begins with a 1-shot setting, where only a single labeled example per class is available, with all remaining images in the training set treated as unlabeled.
This approach extends to subsequent active learning rounds, where the unlabeled set consists of all training images not currently in the labeled set.
To further simulate realistic constraints where validation data may be inaccessible, particularly in scenarios beginning with minimal labeled examples and progressively acquiring annotations, we conduct our experiments without utilizing the validation sets.

\begin{table}[htbp]
    \caption{
        Class balance of datasets used in our experiments.
        % We presents the mean, minimum, maximum and standard deviation of class counts of training split of each dataset.
        The total number specified in the table is the number of samples of training splits as we do active selection on them.
    }
    \label{tab:dataset_statistics}
    \centering
    \small
    \setlength{\tabcolsep}{6pt}
    \renewcommand{\arraystretch}{1.2}
    \begin{tabular}{l|ccccc}
        \toprule
        \rowcolor{gray!10} \textbf{Dataset} & \textbf{Mean} & \textbf{Min} & \textbf{Max} & \textbf{Std} & \textbf{Total} \\
        \midrule
        Caltech101 \citep{fei2004learning} & 41.28 & 16 & 400 & 56.81 & 4,128 \\
        \rowcolor{gray!5} DTD \citep{cimpoi2014describing} & 60.00 & 60 & 60 & 0.00 & 2,820 \\
        EuroSAT \citep{helber2019eurosat} & 1,350.00 & 1,000 & 1,500 & 174.80 & 13,500 \\
        \rowcolor{gray!5} FGVCAircraft \citep{maji2013fine} & 33.34 & 33 & 34 & 0.48 & 3,334 \\
        Food101 \citep{bossard2014food} & 500.00 & 500 & 500 & 0.00 & 50,500 \\
        \rowcolor{gray!5} Flowers102 \citep{nilsback2008automated} & 40.13 & 20 & 129 & 22.20 & 4,093 \\
        OxfordPets \citep{parkhi2012cats} & 79.57 & 74 & 80 & 1.26 & 2,944 \\
        \rowcolor{gray!5} StanfordCars \citep{krause20133d} & 33.21 & 19 & 54 & 3.47 & 6,509 \\
        SUN397 \citep{xiao2010sun} & 40.00 & 40 & 40 & 0.00 & 15,880 \\
        \rowcolor{gray!5} UCF101 \citep{soomro2012ucf101} & 75.63 & 58 & 97 & 10.72 & 7,639 \\
        ImageNet \citep{russakovsky2015imagenet} & 1,281.17 & 732 & 1,300 & 70.22 & 1,281,167 \\
        \bottomrule
    \end{tabular}
\end{table}

We further presents class balance statistics for each dataset in the \cref{tab:dataset_statistics} to provide additional insight into inherent class imbalances, a phenomenon that several researchers have addressed within active learning selection processes \citep{aggarwal2020active, bengar2022class, huang2024class}.
It is important to note that our research addresses a distinct problem formulation; we primarily investigate teacher model bias rather than dataset bias.
Notably, our findings demonstrate that the teacher bias emerges even in balanced datasets, showing the effectiveness of our approach in active knowledge distillation scenario.

%%%%%%%%%%%%%%%%%%%%%%%%%%%%%%%%%%%%%%%%%%%%%%%%
%%%%%%%%%%%%%%%%%%%%%%%%%%%%%%%%%%%%%%%%%%%%%%%%
%%%%%%%%%%%%%%%%%%%%%%%%%%%%%%%%%%%%%%%%%%%%%%%%

\section{Additional Experiments}
\label{sec:additional_experiments}

\subsection{Experiment on MobileNetV2}
\label{sec:architectures}

\begin{figure}[htbp]
    \centering
    \includegraphics[width=0.99\textwidth]{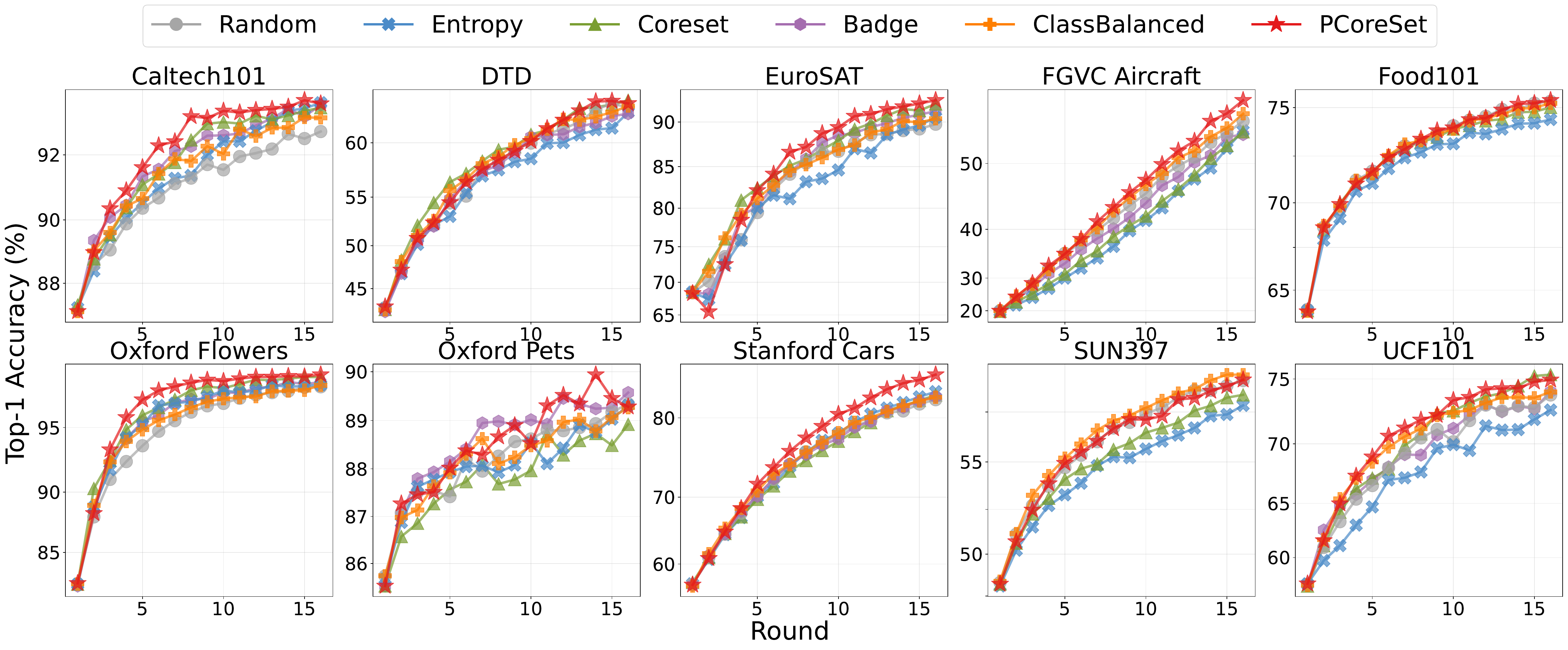}
    \caption{
    Results on 8 datasets using \textbf{MobileNetV2} under zero-shot distillation across 16 rounds.
    }
    \label{fig:mobilenet}
\end{figure}

% \begin{figure}[htbp]
%     \centering
%     \includegraphics[width=0.99\textwidth]{Figures/vit_9datasets_special_layout_no_ci_legend.pdf}
%     \caption{Accuracy (\%) of TinyViT \citep{wu2022tinyvit} on 8 datasets over 16 active learning rounds. Average performance of 8 datasets is included. All Experiments were conducted with 5 different seeds, with 95\% confidence interval.}
%     \label{fig:vit}
% \end{figure}

We further validate our methodology across diverse model architectures, including the lightweight MobileNetV2 \citep{sandler2018mobilenetv2}.
The experimental results are presented in \cref{fig:mobilenet}. For these experiments, we maintained the training configuration established for ResNet-18 \citep{he2016deep}, employing a ResNet-50 CLIP teacher \citep{radford2021learning} with identical hyperparameters.
%Due to computational constraints, we conducted our experiments on a subset of 8 datasets, excluding the larger-scale Food101 \citep{bossard2014food} and SUN397 \citep{xiao2010sun} datasets.
Our empirical results demonstrate that \textbf{\texttt{PCoreSet}} consistently outperforms alternative selection methods in the active distillation setting, exhibiting performance trends that align with our primary ResNet-18 \citep{he2016deep} experiments.

%%%%%%%%%%%%%%%%%%%%%%%%%%%%%%%%%%%%%%%%%%%%%%%%
%%%%%%%%%%%%%%%%%%%%%%%%%%%%%%%%%%%%%%%%%%%%%%%%
%%%%%%%%%%%%%%%%%%%%%%%%%%%%%%%%%%%%%%%%%%%%%%%%

\subsection{Active Learning without Knowledge Distillation}
\label{sec:nodistill}

\begin{figure}[htbp]
    \centering
    \includegraphics[width=0.99\textwidth]{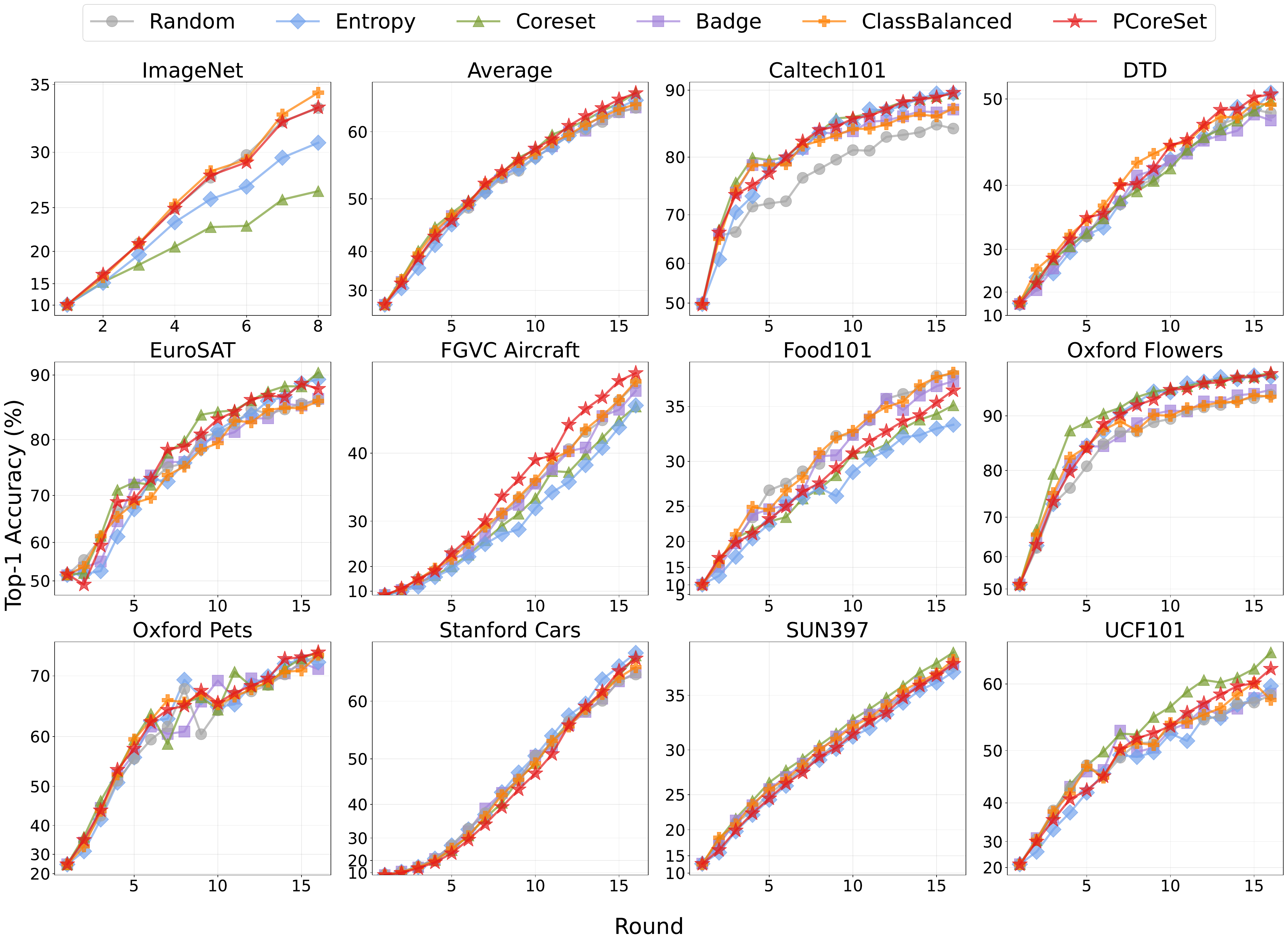}
    \caption{
        Results on 11 datasets including average of 10 datasets using \textbf{ResNet-18} without distillation.
        % Accuracy (\%) on 11 datasets including ImageNet over multiple rounds without knowledge distillation. Average performance of 10 datasets except ImageNet is illustrated. All Experiments were conducted with 5 different seeds, with 95\% confidence interval.
    }
    \label{fig:nodistill}
\end{figure}

The main purpose of our work is to propose utilizing a foundational teacher model directly in the knowledge distillation process \citep{hinton2015distilling}.
Thus, we conducted active learning experiments without knowledge distillation as our baseline to enable direct comparison.
Using ResNet-18 \citep{he2016deep} and following our main experimental protocol in \cref{sec:implementation}, we performed comparative analysis of knowledge distillation effects in the active learning process.
The results are presented in \cref{fig:nodistill}.
Interestingly, widely-adopted baselines such as uncertainty \citep{holub2008entropy}, coreset \citep{sener2017active}, and badge \citep{ash2019deep} do not consistently outperform random or class-balanced selection methods \cref{alg:cb} on datasets such as FGVC \citep{maji2013fine}, with our \textbf{\texttt{PCoreSet}} \cref{alg:pcoreset} achieving superior performance.
This occurs despite the approximately equal distribution of classes in these datasets as shown in \cref{tab:dataset_statistics} in the \cref{sec:datasets}, suggesting that maintaining probabilistic balance may be beneficial even in traditional active learning scenarios.
However, \textbf{\texttt{PCoreSet}} does not demonstrate clear effectiveness in this traditional active learning setting, indicating that maintaining probabilistic diversity is specifically effective in distillation scenarios, where it serves to leverage teacher's structured bias propagation.

%%%%%%%%%%%%%%%%%%%%%%%%%%%%%%%%%%%%%%%%%%%%%%%%
%%%%%%%%%%%%%%%%%%%%%%%%%%%%%%%%%%%%%%%%%%%%%%%%
%%%%%%%%%%%%%%%%%%%%%%%%%%%%%%%%%%%%%%%%%%%%%%%%

\subsection{Few-shot Teacher Distillation}
\label{sec:fewshot}

\begin{figure}[htbp]
    \centering
    \includegraphics[width=0.99\textwidth]{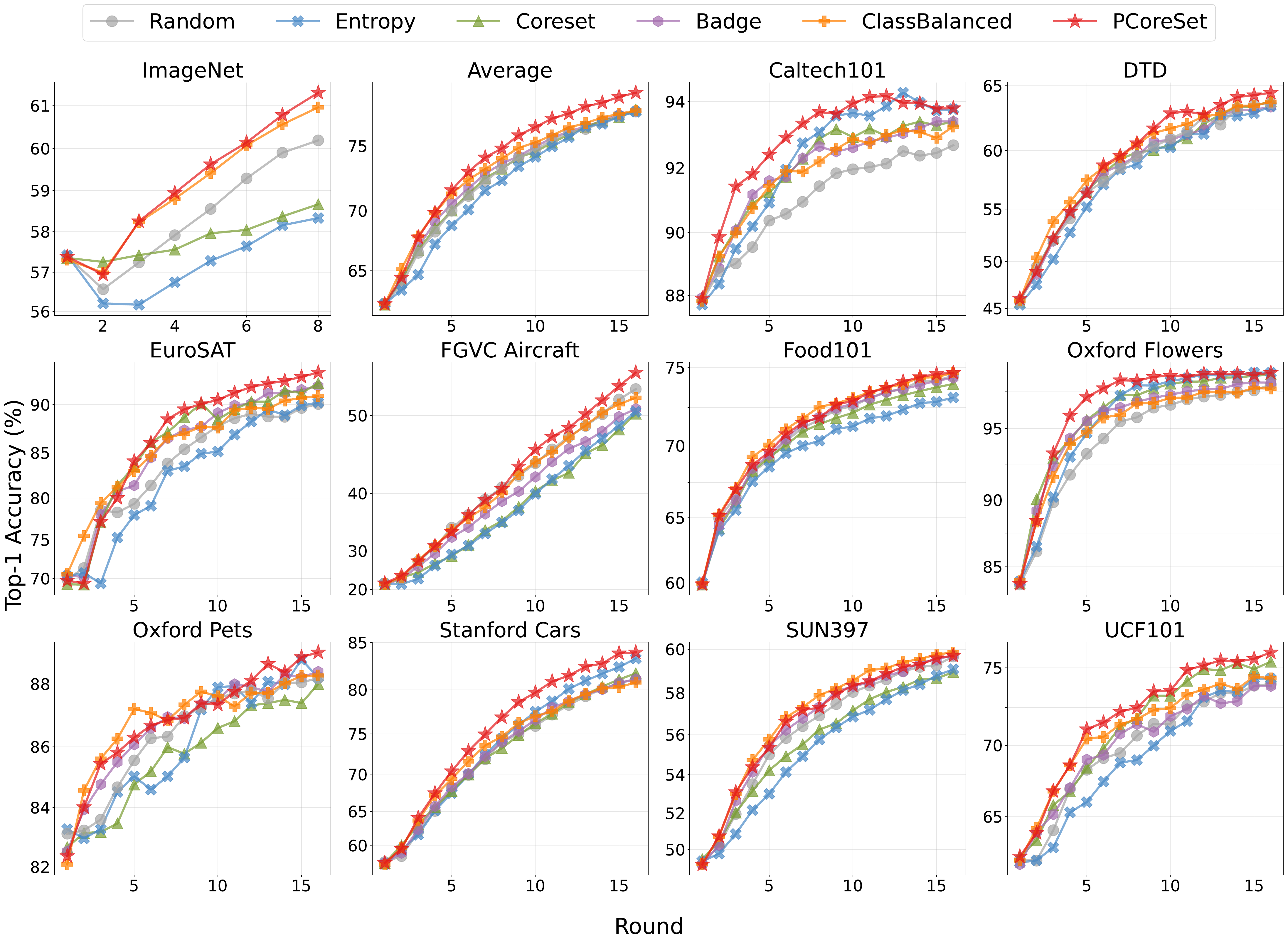}
    \caption{
        Results of \textbf{ResNet-18 student} on 11 datasets including average of 10 datasets under few-shot distillation.
        % Results of few-shot student on 11 datasets using under few-shot distillation across 16 rounds.
    }
    \label{fig:fewshot_student}
\end{figure}

\begin{figure}[htbp]
    \centering
    \includegraphics[width=0.99\textwidth]{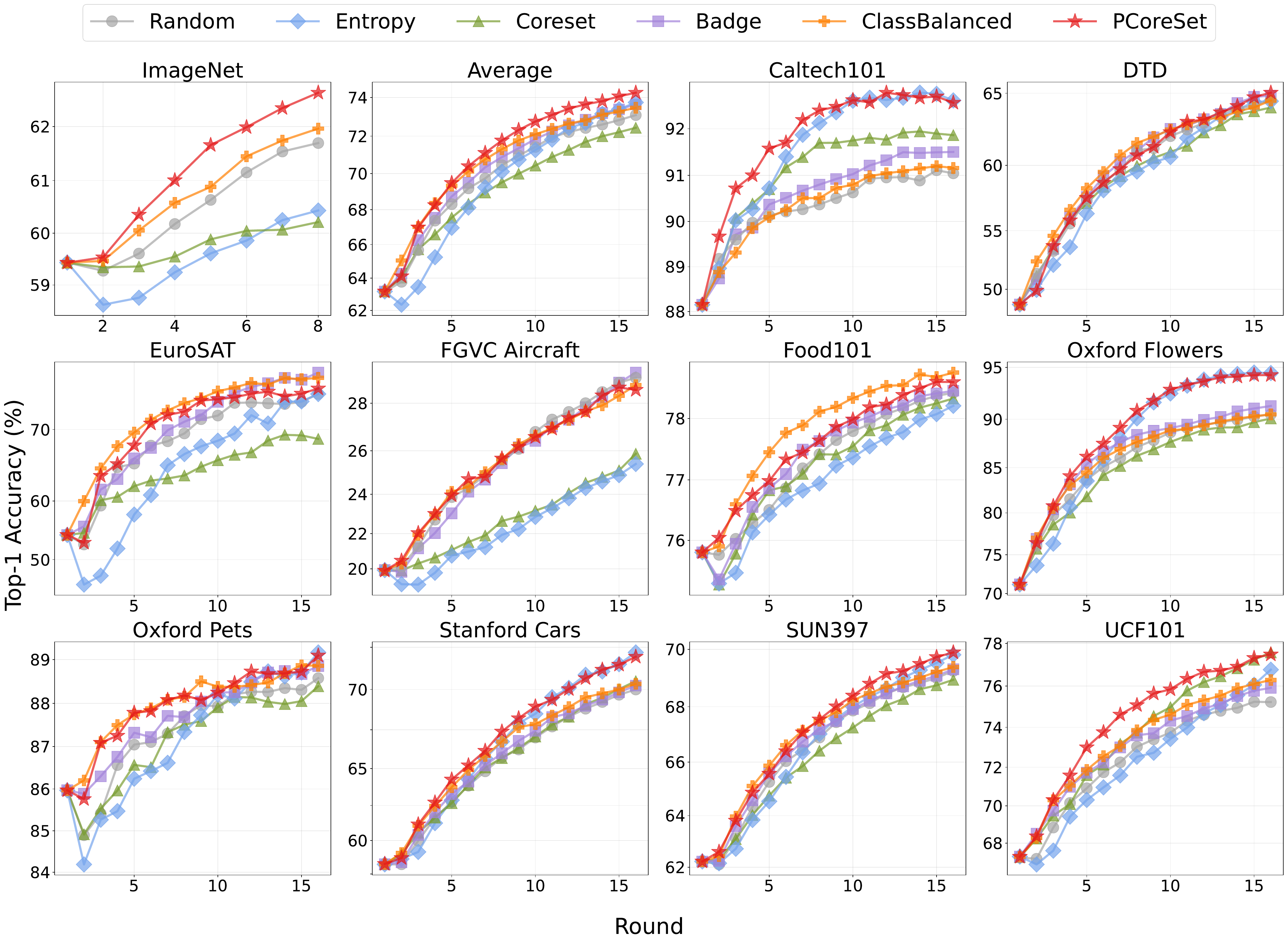}
    \caption{
        Results of \textbf{ResNet-50 few-shot teacher} on 11 datasets including average of 10 datasets under few-shot distillation.
        % Results of few-shot teacher on 11 datasets using under few-shot distillation across 16 rounds.
    }
    \label{fig:fewshot_teacher}
\end{figure}

In real-world applications, it is more natural to consider scenarios where both teacher and student models evolve concurrently.
Specifically, we can apply few-shot learning methods to the generalist foundational teacher model with additional labeled samples acquired in each round, though our main experiments focused on zero-shot teachers to validate the effectiveness of integrating knowledge distillation into the active learning framework.
Therefore, we extended our experiments to incorporate few-shot teachers that are updated with newly labeled samples in each acquisition round.
For this purpose, we employed CLAP \citep{silva2024closer}, a few-shot learning method built upon CLIP that does not require a validation set—an appropriate choice for our setting where we assume extremely limited data availability (e.g., 1-shot training datasets) without access to validation data.
The experimental results for student model performance are presented in \cref{fig:fewshot_student}, with corresponding teacher model performance shown in \cref{fig:fewshot_teacher}.
In the few-shot teacher scenario, we observed consistent trends where probabilistic diversity effectively leverages bias, resulting in enhanced performance.
Moreover, examining the few-shot teacher performance in \cref{fig:fewshot_teacher} reveals that our \textbf{\texttt{PCoreSet}} outperforms alternative selection methods for teacher models as well.
We attribute this to the fact that samples selected to leverage teacher bias naturally serve as beneficial examples for improving the teacher model, simultaneously enhancing performance on both the teacher and student sides.

%%%%%%%%%%%%%%%%%%%%%%%%%%%%%%%%%%%%%%%%%%%%%%%%
%%%%%%%%%%%%%%%%%%%%%%%%%%%%%%%%%%%%%%%%%%%%%%%%
%%%%%%%%%%%%%%%%%%%%%%%%%%%%%%%%%%%%%%%%%%%%%%%%

\subsection{Logit Adjustment for Calibrated Teacher Prediction}
\label{sec:logit_adjustment}

In this section, we investigate whether logit adjustment~\citep{menon2020long}, a technique designed for imbalanced learning, can mitigate the structured prediction bias in teacher models.
While logit adjustment (LA) provides a principled approach to handling class imbalance, we demonstrate that it is \textbf{orthogonal} to our \textbf{\texttt{PCoreSet}} method and can be used in conjunction to potentially enhance performance.

Logit adjustment~\citep{menon2020long} addresses \textbf{skewed, imbalanced} class distributions in training data by applying a \textbf{principled and effective} post-hoc adjustment based on empirical class frequencies.
Formally, $l' := l - \tau \log \pi \in \Delta^{C-1}$, where $l$ and $l'$ denote the original and adjusted logits, respectively.
$\pi \in \Delta^{C-1}$ is the class prior estimated from the empirical distribution, i.e., $\pi := \mathbb{E}_{x \sim p(x)}[p(y \mid x)] \approx \tfrac{1}{N}\sum_{n=1}^{N} y_n$, with $y_n \in \{0,1\}^C$ one-hot labels in the supervised training dataset $\mathcal{D}^{(l)}$.

Given that teacher predictions in our setting are often skewed and biased, we explore whether adjusting \textbf{teacher logits} can provide an \textbf{improved inductive bias}.
Our hypothesis is that \texttt{PCoreSet} can then leverage this calibrated teacher signal more effectively.

We consider two logit-adjustment variants:
\begin{enumerate}
    \item \textbf{Hard LA.} Let $\mathcal{D} = \mathcal{D}^{(l)} \cup \mathcal{D}^{(u)}$.
    Estimate the class prior by \textbf{pseudo-label frequencies} on $\mathcal{D}$ with teacher predictions $f(x)$:
    \begin{equation}
        \pi_c \approx \frac{1}{|\mathcal{D}|}\sum_{x \in \mathcal{D}}\mathbf{1} \left[\arg\max_{c'} [f(x)]_{c'} = c\right], \qquad \pi = [\pi_1, \ldots, \pi_C]^\top,
    \end{equation}
    where $\mathbf{1}$ is an indicator function.
    
    \item \textbf{Soft LA.} Estimate the class prior by the \textbf{average teacher probabilities}:
    \begin{equation}
        \pi \approx \frac{1}{|\mathcal{D}|}\sum_{x \in \mathcal{D}}f(x).
    \end{equation}
\end{enumerate}

We conducted experiments on 8 of the 10 datasets by integrating LA into teacher predictions.
Base results are averaged over 5 seeds, while Soft LA results are averaged over 3 seeds.

\begin{table}[h]
\centering
\caption{Logit adjustment experiments across different active learning strategies. Numbers in parentheses show the difference from the baseline (no LA).}
\label{tab:logit_adjustment}
\scriptsize
\setlength{\tabcolsep}{4pt}
\resizebox{\textwidth}{!}{%
\begin{tabular}{@{}lcccccccc@{}}
\toprule
\multirow{2}{*}{\textbf{Strategy}} & \multirow{2}{*}{\textbf{LA}} & \multicolumn{7}{c}{\textbf{Active Learning Round}} \\
\cmidrule(lr){3-9}
 & & 2 & 3 & 4 & 5 & 6 & 7 & 8 \\
\midrule
\multirow{3}{*}{Random} & - & 65.1 & 67.4 & 69.5 & 71.3 & 72.4 & 73.9 & 74.5 \\
 & Hard & 55.1{\color{red}(-10.1)} & 59.9{\color{red}(-7.4)} & 63.1{\color{red}(-6.5)} & 66.0{\color{red}(-5.3)} & 67.5{\color{red}(-5.0)} & 69.4{\color{red}(-4.5)} & 70.6{\color{red}(-3.9)} \\
 & Soft & 64.4{\color{red}(-0.7)} & 67.6{\color{green!70!black}(+0.2)} & 69.9{\color{green!70!black}(+0.3)} & 71.5{\color{green!70!black}(+0.2)} & 72.8{\color{green!70!black}(+0.4)} & 74.0{\color{green!70!black}(+0.1)} & 75.0{\color{green!70!black}(+0.5)} \\
\midrule
\multirow{3}{*}{Coreset} & - & 65.7 & 67.7 & 69.8 & 71.5 & 72.4 & 73.5 & 74.8 \\
 & Hard & 55.5{\color{red}(-10.2)} & 60.6{\color{red}(-7.1)} & 63.6{\color{red}(-6.2)} & 66.6{\color{red}(-4.9)} & 68.8{\color{red}(-3.6)} & 70.1{\color{red}(-3.4)} & 71.9{\color{red}(-2.9)} \\
 & Soft & 65.3{\color{red}(-0.4)} & 68.1{\color{green!70!black}(+0.4)} & 70.4{\color{green!70!black}(+0.6)} & 71.4{\color{red}(-0.1)} & 73.0{\color{green!70!black}(+0.6)} & 73.6{\color{green!70!black}(+0.2)} & 75.3{\color{green!70!black}(+0.5)} \\
\midrule
\multirow{3}{*}{Uncertainty} & - & 63.8 & 66.5 & 67.9 & 69.8 & 71.1 & 72.4 & 73.4 \\
 & Hard & 52.6{\color{red}(-11.2)} & 57.4{\color{red}(-9.1)} & 61.0{\color{red}(-6.9)} & 63.7{\color{red}(-6.1)} & 65.6{\color{red}(-5.5)} & 68.5{\color{red}(-3.9)} & 69.9{\color{red}(-3.5)} \\
 & Soft & 63.3{\color{red}(-0.5)} & 66.3{\color{red}(-0.2)} & 68.1{\color{green!70!black}(+0.2)} & 70.0{\color{green!70!black}(+0.2)} & 71.1{\color{gray}(+0.0)} & 72.3{\color{red}(-0.1)} & 73.2{\color{red}(-0.2)} \\
\midrule
\multirow{3}{*}{BADGE} & - & 65.3 & 68.1 & 70.3 & 72.1 & 73.3 & 74.3 & 75.1 \\
 & Hard & 54.6{\color{red}(-10.7)} & 59.6{\color{red}(-8.4)} & 63.7{\color{red}(-6.6)} & 66.4{\color{red}(-5.7)} & 68.0{\color{red}(-5.2)} & 69.9{\color{red}(-4.4)} & 71.1{\color{red}(-4.0)} \\
 & Soft & 64.5{\color{red}(-0.8)} & 67.9{\color{red}(-0.2)} & 70.2{\color{red}(-0.1)} & 72.2{\color{green!70!black}(+0.1)} & 73.4{\color{green!70!black}(+0.1)} & 74.5{\color{green!70!black}(+0.2)} & 75.6{\color{green!70!black}(+0.5)} \\
\midrule
\multirow{3}{*}{ClassBalanced} & - & 65.6 & 68.3 & 70.8 & 72.2 & 73.3 & 74.1 & 75.3 \\
 & Hard & 55.4{\color{red}(-10.2)} & 58.8{\color{red}(-9.4)} & 62.6{\color{red}(-8.2)} & 65.7{\color{red}(-6.5)} & 67.2{\color{red}(-6.1)} & 69.2{\color{red}(-4.9)} & 70.3{\color{red}(-5.0)} \\
 & Soft & 65.1{\color{red}(-0.5)} & 69.1{\color{green!70!black}(+0.8)} & 71.3{\color{green!70!black}(+0.5)} & 72.9{\color{green!70!black}(+0.7)} & 74.1{\color{green!70!black}(+0.8)} & 75.0{\color{green!70!black}(+0.9)} & 75.7{\color{green!70!black}(+0.4)} \\
\midrule
\multirow{3}{*}{\textbf{\texttt{PCoreSet}}} & - & 65.7 & 68.6 & 71.0 & 72.8 & 74.1 & 75.3 & 76.2 \\
 & Hard & 56.1{\color{red}(-9.5)} & 61.8{\color{red}(-6.8)} & 64.6{\color{red}(-6.4)} & 67.8{\color{red}(-5.1)} & 70.4{\color{red}(-3.6)} & 71.7{\color{red}(-3.6)} & 73.4{\color{red}(-2.8)} \\
 & Soft & \textbf{65.9}{\color{green!70!black}(+0.2)} & \textbf{69.2}{\color{green!70!black}(+0.6)} & \textbf{71.5}{\color{green!70!black}(+0.5)} & \textbf{73.1}{\color{green!70!black}(+0.3)} & \textbf{74.1}{\color{green!70!black}(+0.1)} & \textbf{75.3}{\color{green!70!black}(+0.01)} & \textbf{76.3}{\color{green!70!black}(+0.01)} \\
\bottomrule
\end{tabular}
}
\end{table}

As detailed in \Cref{tab:logit_adjustment}, we observe that Soft LA \textbf{slightly improves} \textbf{\texttt{PCoreSet}} on average, even without exhaustive tuning of the hyperparameter $\tau$.
These results indicate that while logit adjustment can provide marginal benefits, the core strength of \textbf{\texttt{PCoreSet}} lies in its ability to effectively leverage \textbf{structured teacher prediction bias} within the ActiveKD framework, yielding substantial gains across diverse datasets even without additional calibration techniques.

%%%%%%%%%%%%%%%%%%%%%%%%%%%%%%%%%%%%%%%%%%%%%%%%
%%%%%%%%%%%%%%%%%%%%%%%%%%%%%%%%%%%%%%%%%%%%%%%%
%%%%%%%%%%%%%%%%%%%%%%%%%%%%%%%%%%%%%%%%%%%%%%%%

\section{Additional Qualitative Results}
\label{sec:additional_results}

%%%%%%%%%%%%%%%%%%%%%%%%%%%%%%%%%%%%%%%%%%%%%%%%
%%%%%%%%%%%%%%%%%%%%%%%%%%%%%%%%%%%%%%%%%%%%%%%%
%%%%%%%%%%%%%%%%%%%%%%%%%%%%%%%%%%%%%%%%%%%%%%%%

\subsection{Heatmap of Selected Samples}
\label{sec:heatmap_of_selected_samples}

We visualize the distribution of selected samples using heatmaps in \Cref{fig:10datasets_heatmap}.

\begin{figure}[htbp]
    \centering
    \includegraphics[width=0.99\textwidth]{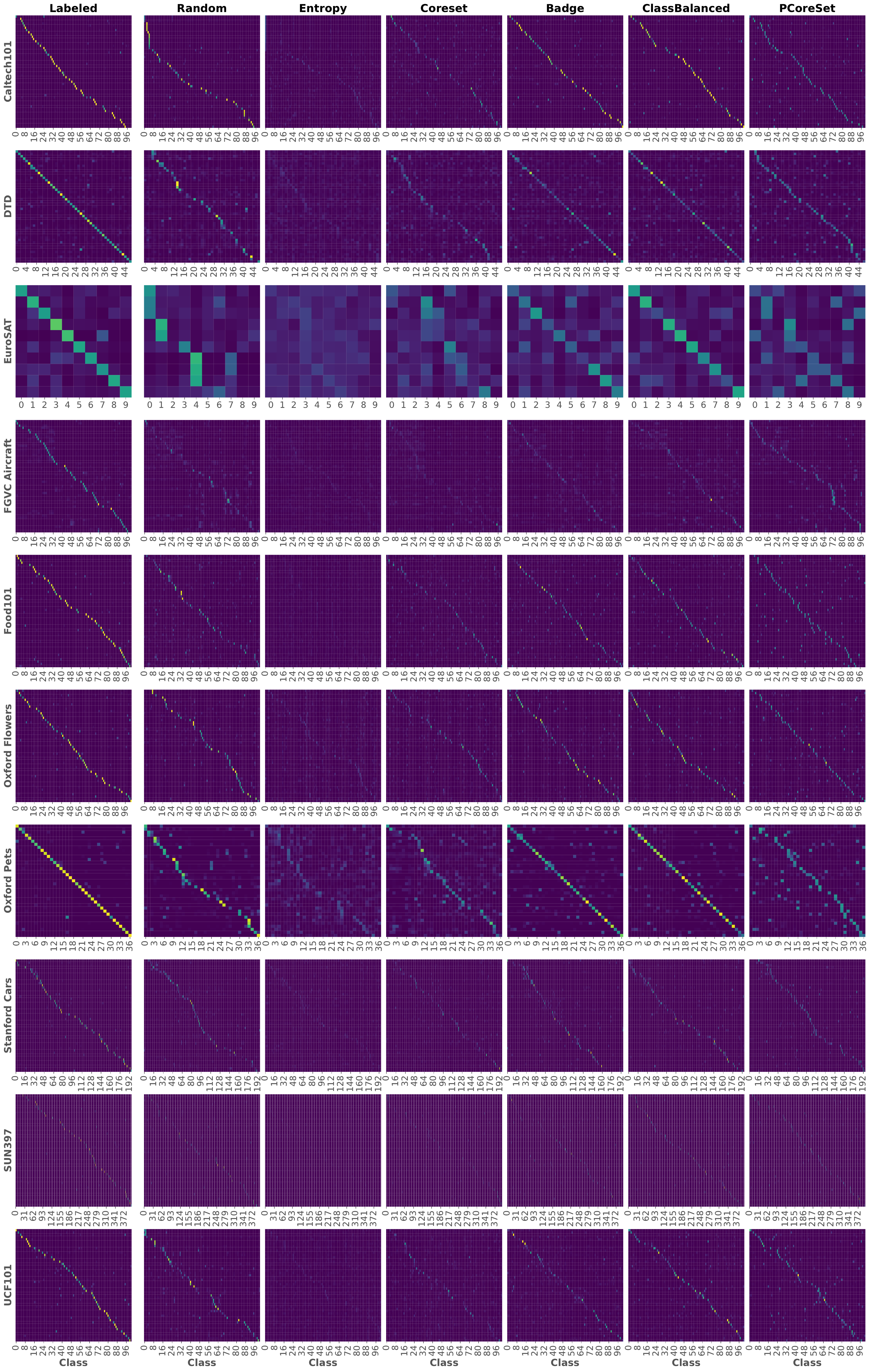}
    \caption{
        The heatmap of output probability vectors from different selection strategies in the first active learning round using 10 datasets.
    }
    \label{fig:10datasets_heatmap}
\end{figure}

\clearpage

%%%%%%%%%%%%%%%%%%%%%%%%%%%%%%%%%%%%%%%%%%%%%%%%
%%%%%%%%%%%%%%%%%%%%%%%%%%%%%%%%%%%%%%%%%%%%%%%%
%%%%%%%%%%%%%%%%%%%%%%%%%%%%%%%%%%%%%%%%%%%%%%%%

\subsection{Qualitative Results on Selected Samples from \textbf{\texttt{PCoreSet}}}
\label{sec:qualitative_results}

We visualize samples selected by \textbf{\texttt{PCoreSet}} from various datasets in Figures \ref{fig:selected_samples_caltech} through \ref{fig:selected_samples_pets}, showcasing how \textbf{\texttt{PCoreSet}} selects diverse and representative examples across different visual domains.
These visualizations provide qualitative evidence of \textbf{\texttt{PCoreSet}}'s effectiveness in identifying informative samples that help leverage the structured bias of the teacher model.

\begin{figure}[htbp]
    \centering
    \includegraphics[width=0.99\textwidth,trim={0 0 0 0.3in},clip]{Figures/caltech101__1_.png}
    \caption{Visualization of samples selected by \textbf{\texttt{PCoreSet}} for Caltech101 dataset.}
    \label{fig:selected_samples_caltech}
\end{figure}

\begin{figure}[htbp]
    \centering
    \includegraphics[width=0.99\textwidth,trim={0 0 0 0.3in},clip]{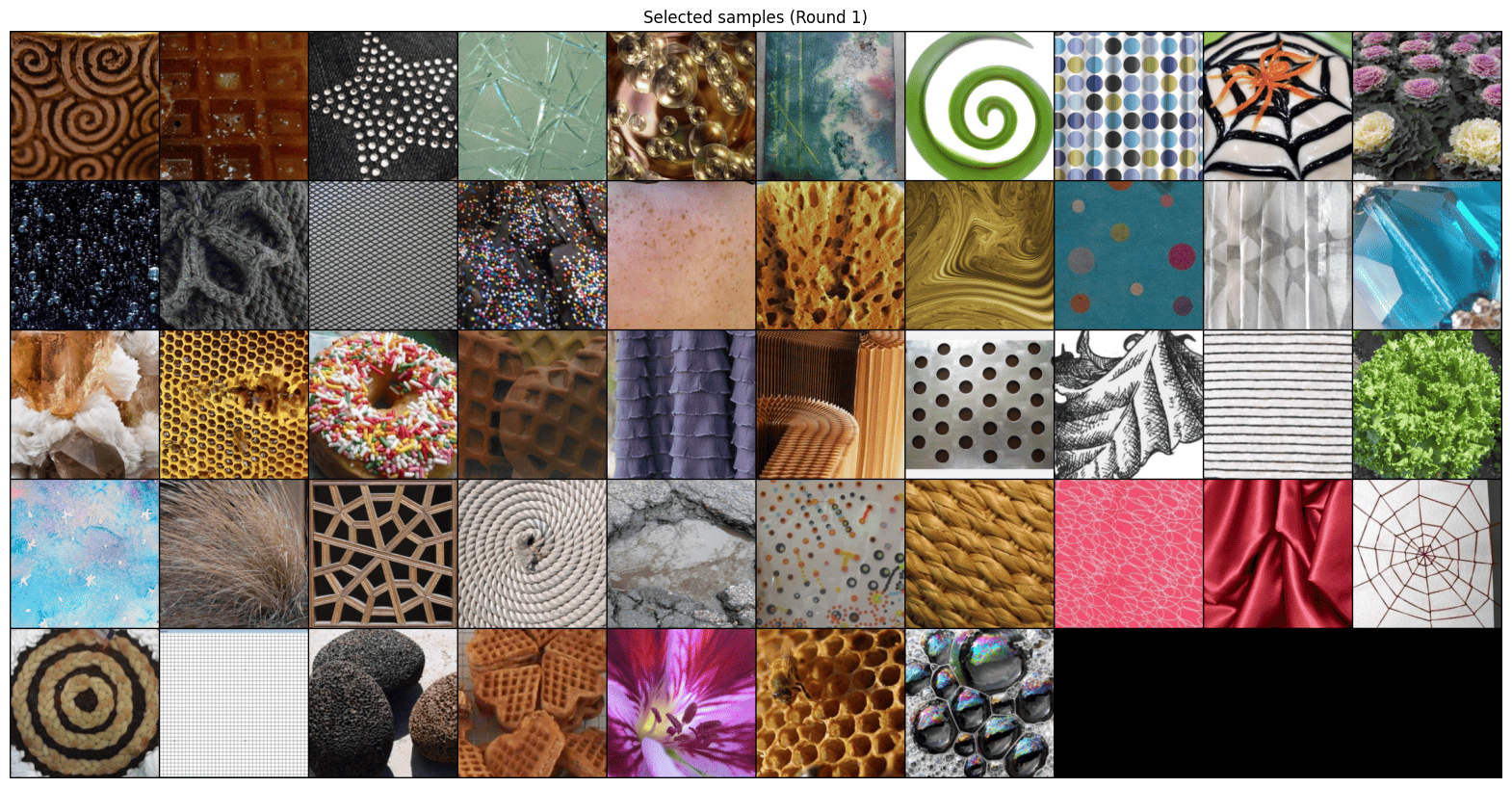}
    \caption{Visualization of samples selected by \textbf{\texttt{PCoreSet}} for DTD dataset.}
    \label{fig:selected_samples_dtd}
\end{figure}

\begin{figure}[htbp]
    \centering
    \includegraphics[width=0.99\textwidth,trim={0 0 0 0.3in},clip]{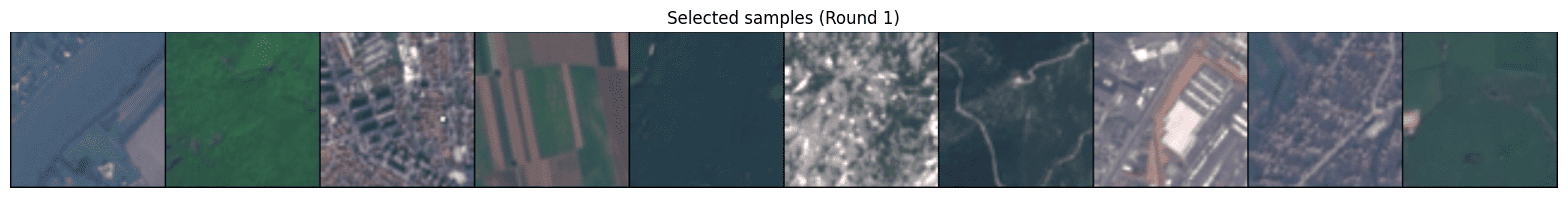}
    \caption{Visualization of samples selected by \textbf{\texttt{PCoreSet}} for EuroSAT dataset.}
    \label{fig:selected_samples_eurosat}
\end{figure}

\begin{figure}[htbp]
    \centering
    \includegraphics[width=0.99\textwidth,trim={0 0 0 0.3in},clip]{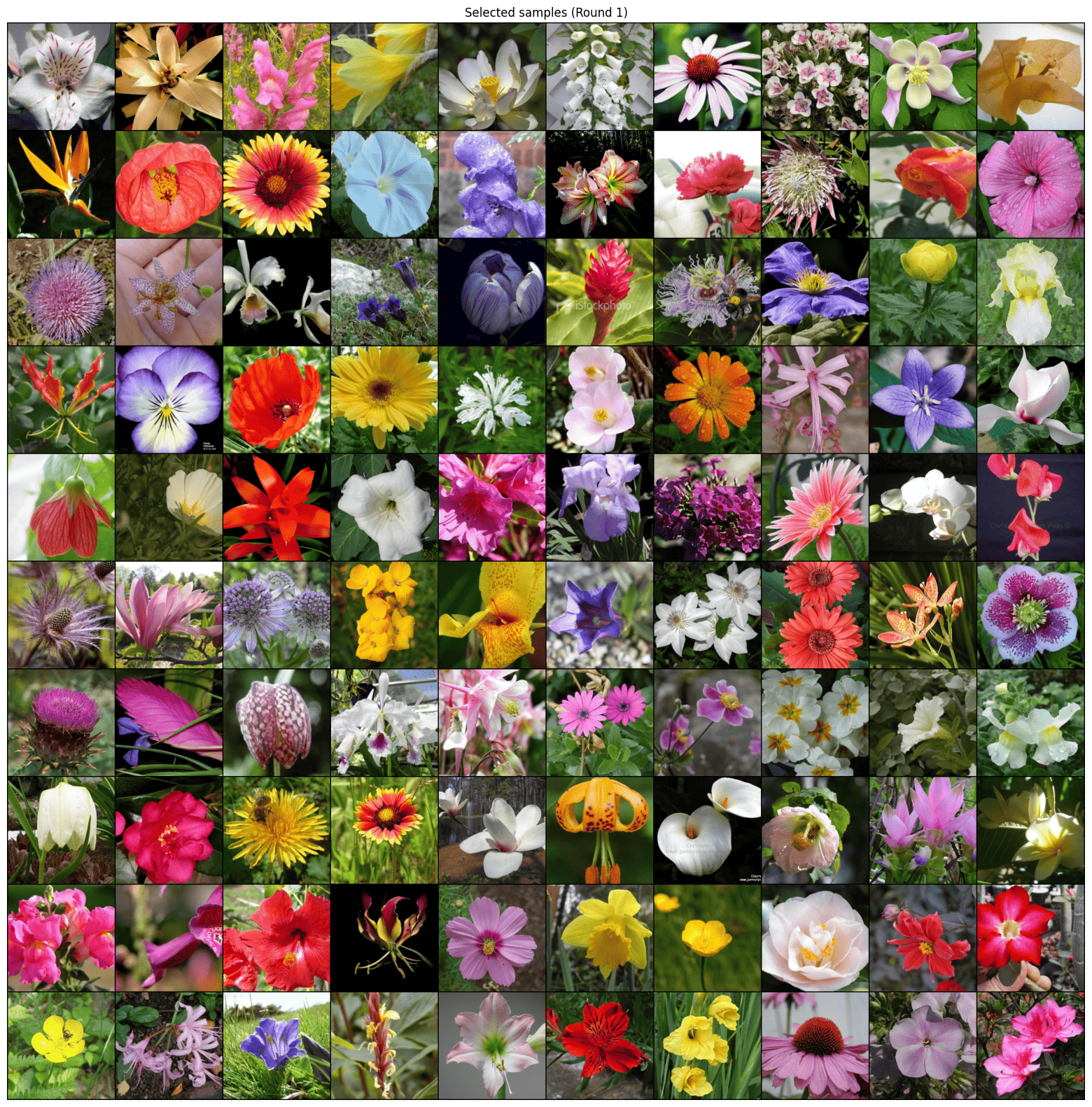}
    \caption{Visualization of samples selected by \textbf{\texttt{PCoreSet}} for Flowers102 dataset.}
    \label{fig:selected_samples_flowers}
\end{figure}

\begin{figure}[htbp]
    \centering
    \includegraphics[width=0.99\textwidth,trim={0 0 0 0.3in},clip]{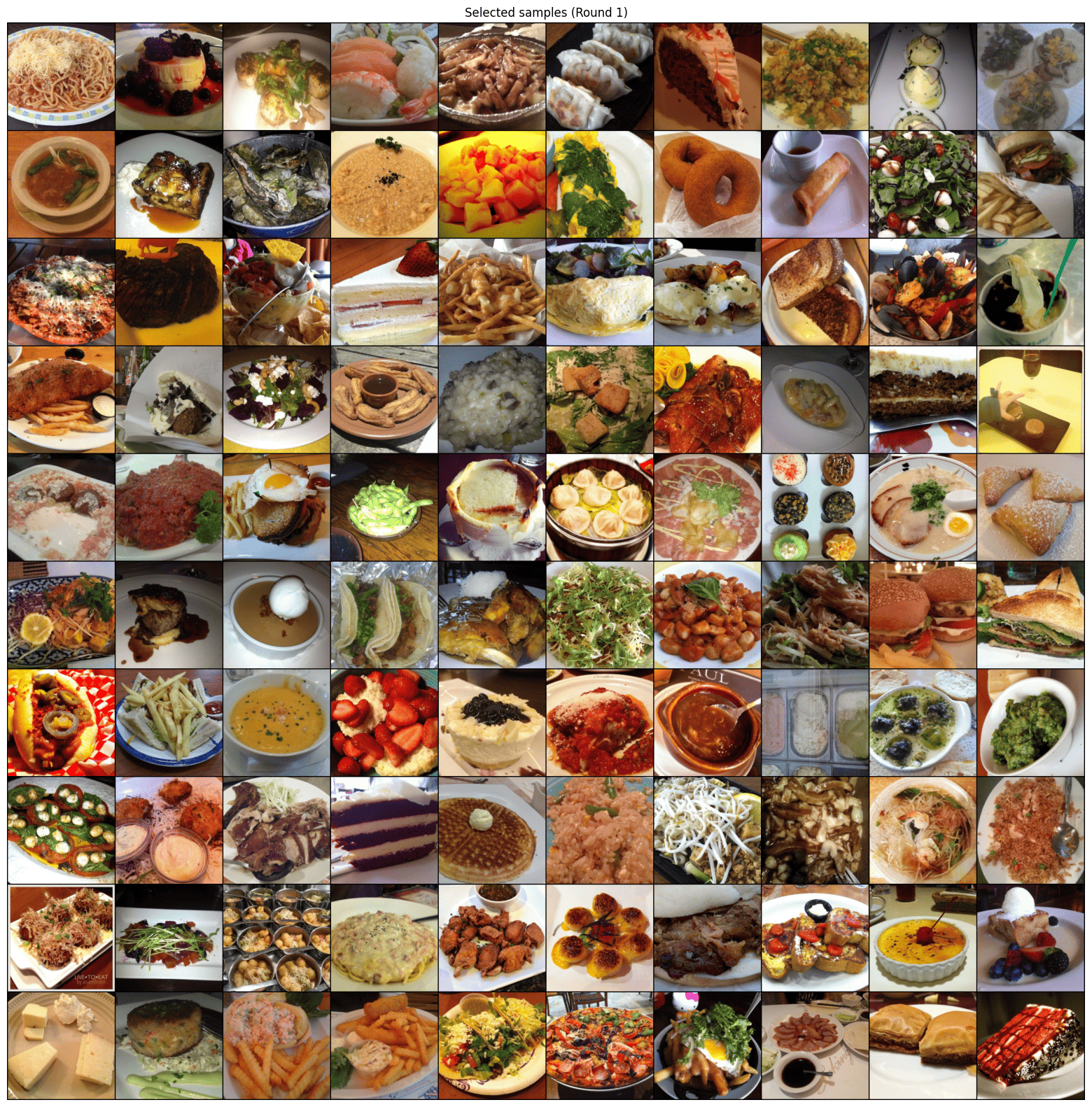}
    \caption{Visualization of samples selected by \textbf{\texttt{PCoreSet}} for Food-101 dataset.}
    \label{fig:selected_samples_food}
\end{figure}

\begin{figure}[htbp]
    \centering
    \includegraphics[width=0.99\textwidth,trim={0 0 0 0.3in},clip]{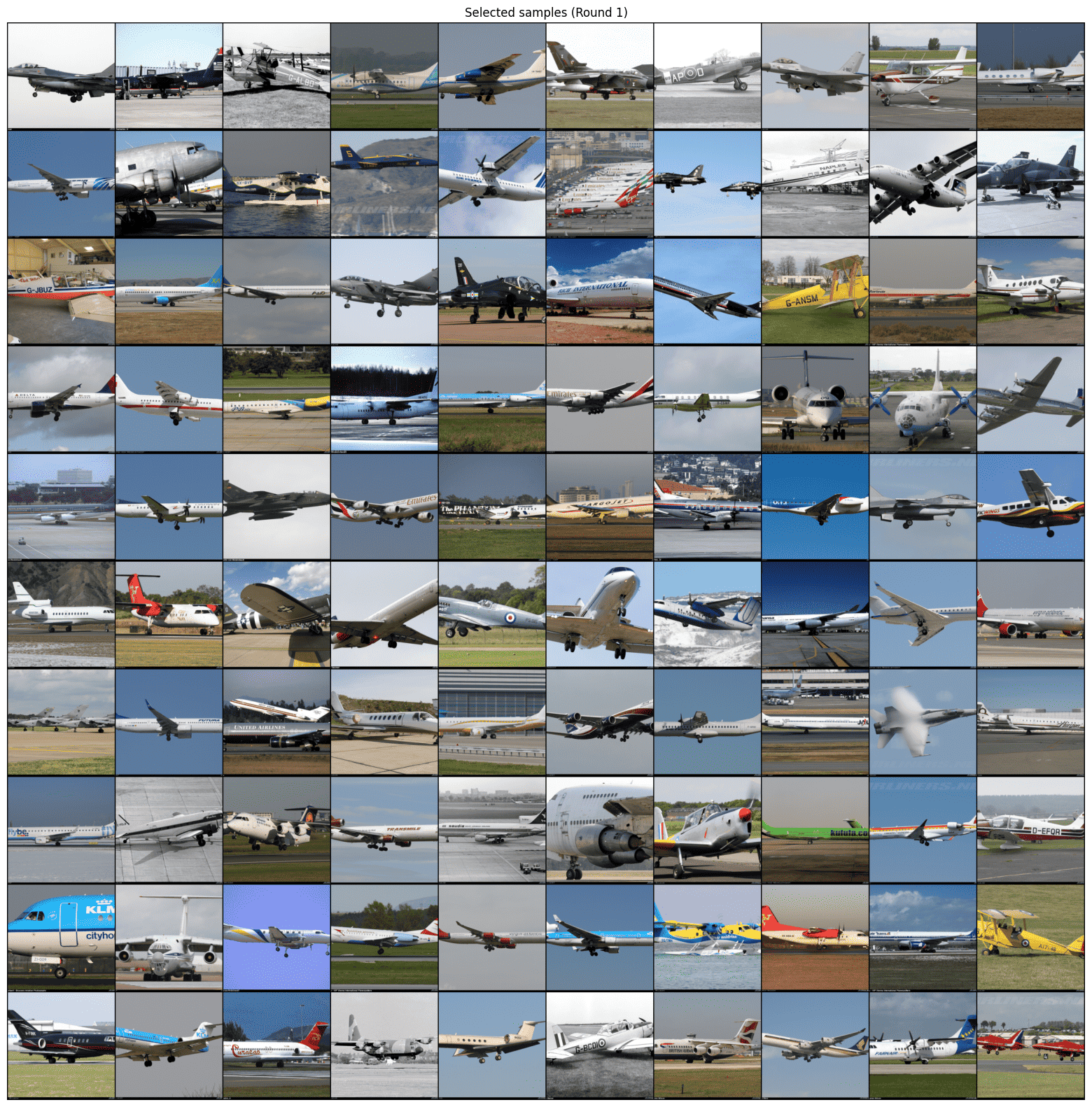}
    \caption{Visualization of samples selected by \textbf{\texttt{PCoreSet}} for FGVC dataset.}
    \label{fig:selected_samples_fgvc}
\end{figure}

\begin{figure}[htbp]
    \centering
    \includegraphics[width=0.99\textwidth,trim={0 0 0 0.3in},clip]{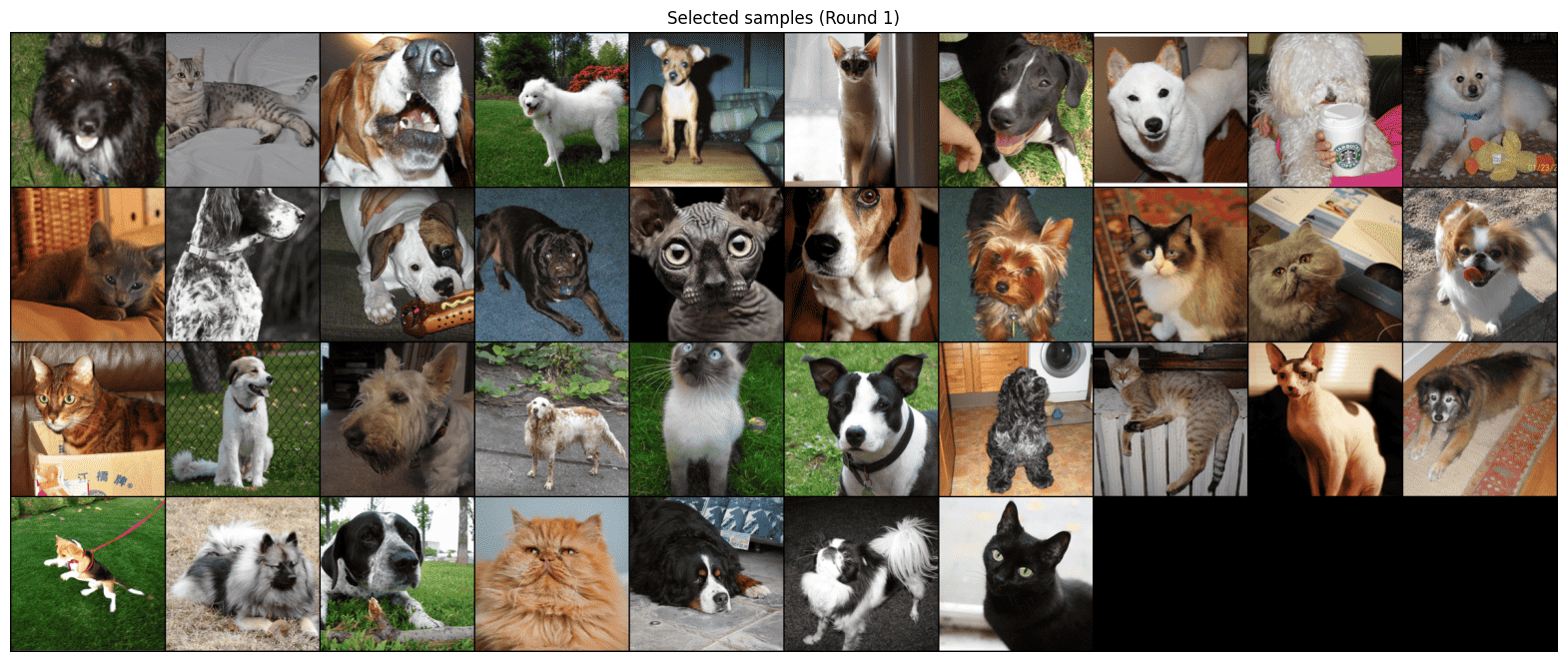}
    \caption{Visualization of samples selected by \textbf{\texttt{PCoreSet}} for Oxford-IIIT Pets dataset.}
    \label{fig:selected_samples_pets}
\end{figure}

\begin{figure}[htbp]
    \centering
    \includegraphics[width=0.99\textwidth,trim={0 0 0 0.3in},clip]{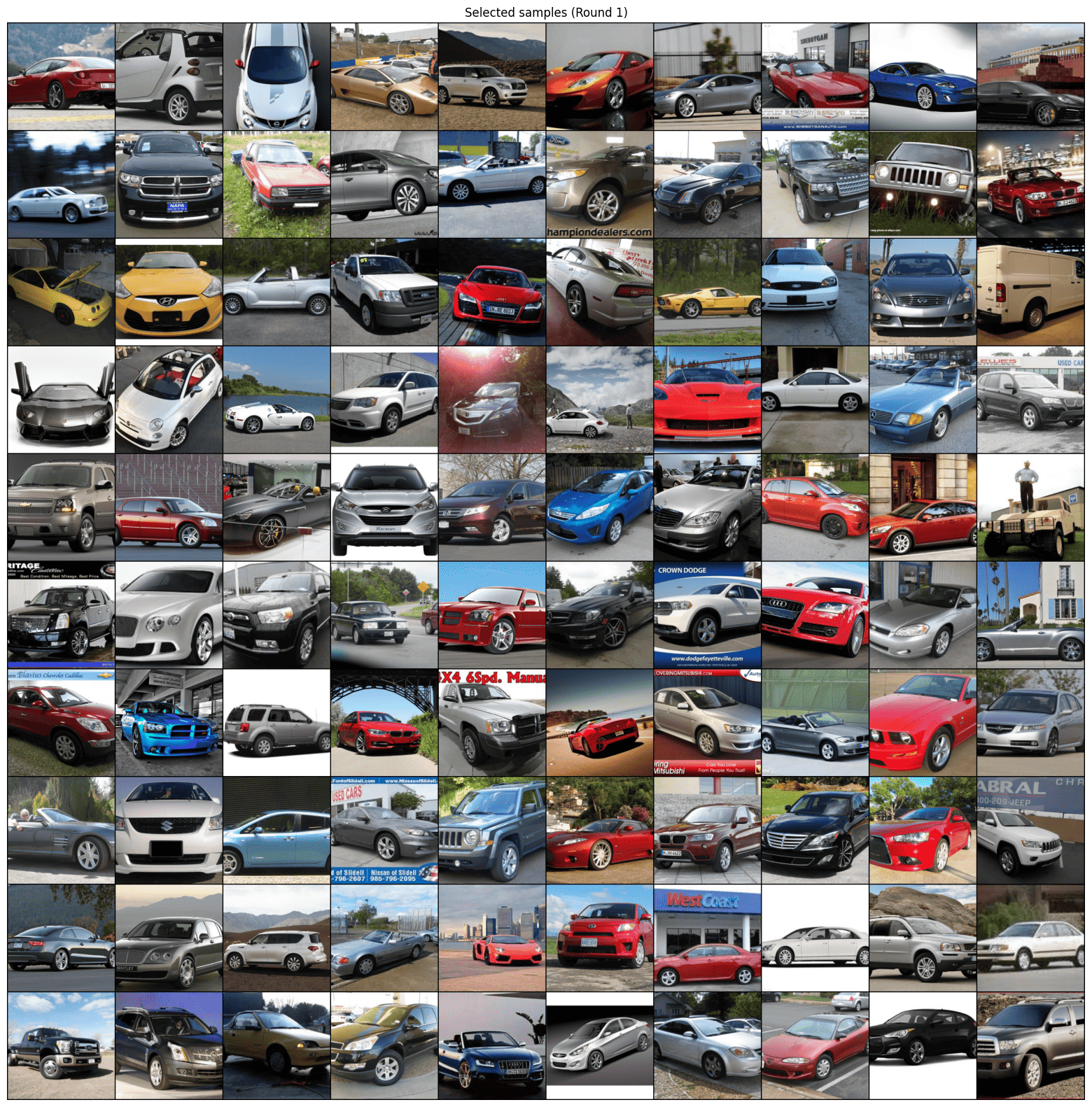}
    \caption{Visualization of samples selected by \textbf{\texttt{PCoreSet}} for Stanford Cars dataset.}
    \label{fig:selected_samples_cars}
\end{figure}

\begin{figure}[htbp]
    \centering
    \includegraphics[width=0.99\textwidth,trim={0 0 0 0.3in},clip]{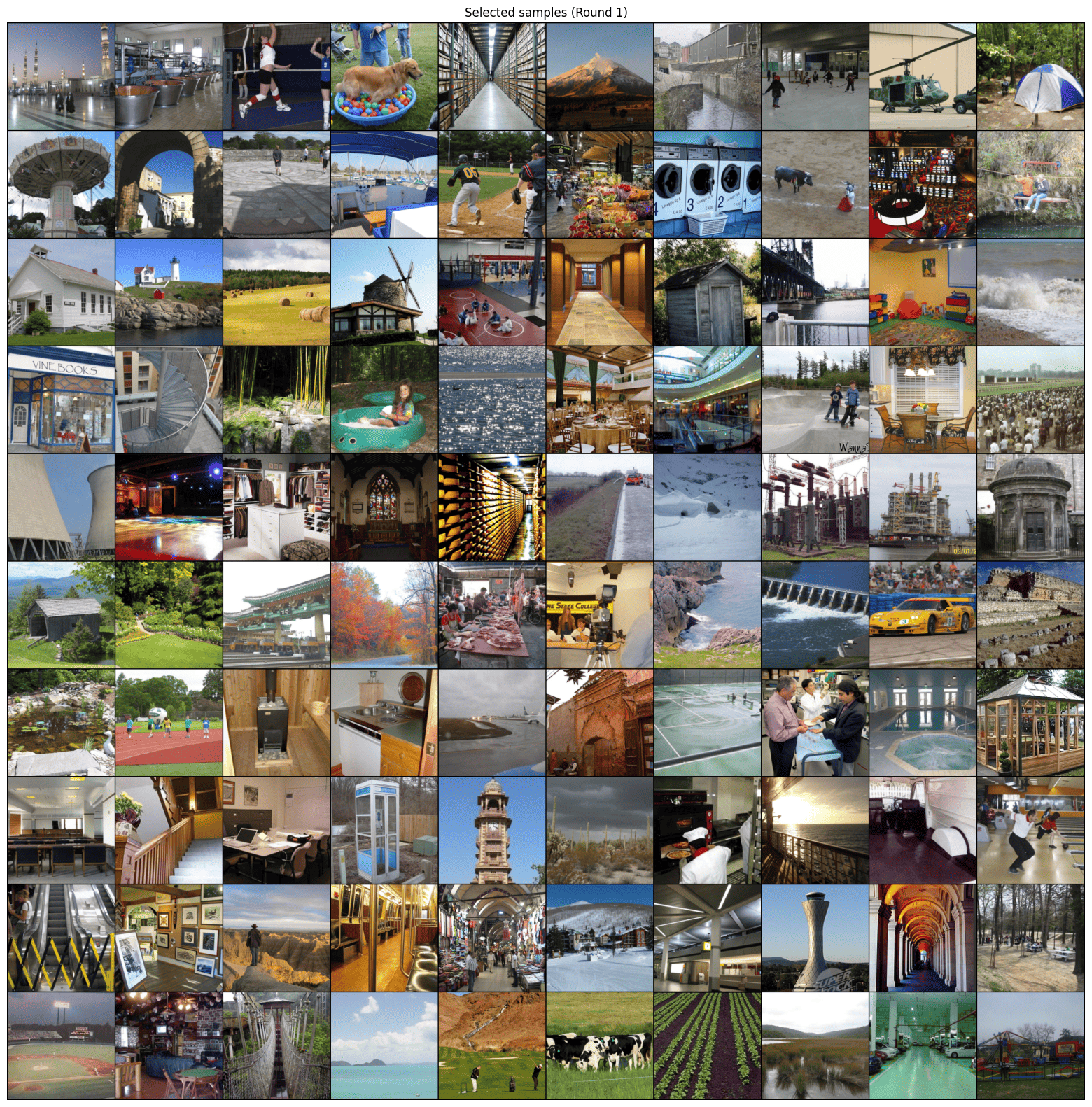}
    \caption{Visualization of samples selected by \textbf{\texttt{PCoreSet}} for SUN397 dataset.}
    \label{fig:selected_samples_sun}
\end{figure}

\begin{figure}[htbp]
    \centering
    \includegraphics[width=0.99\textwidth,trim={0 0 0 0.3in},clip]{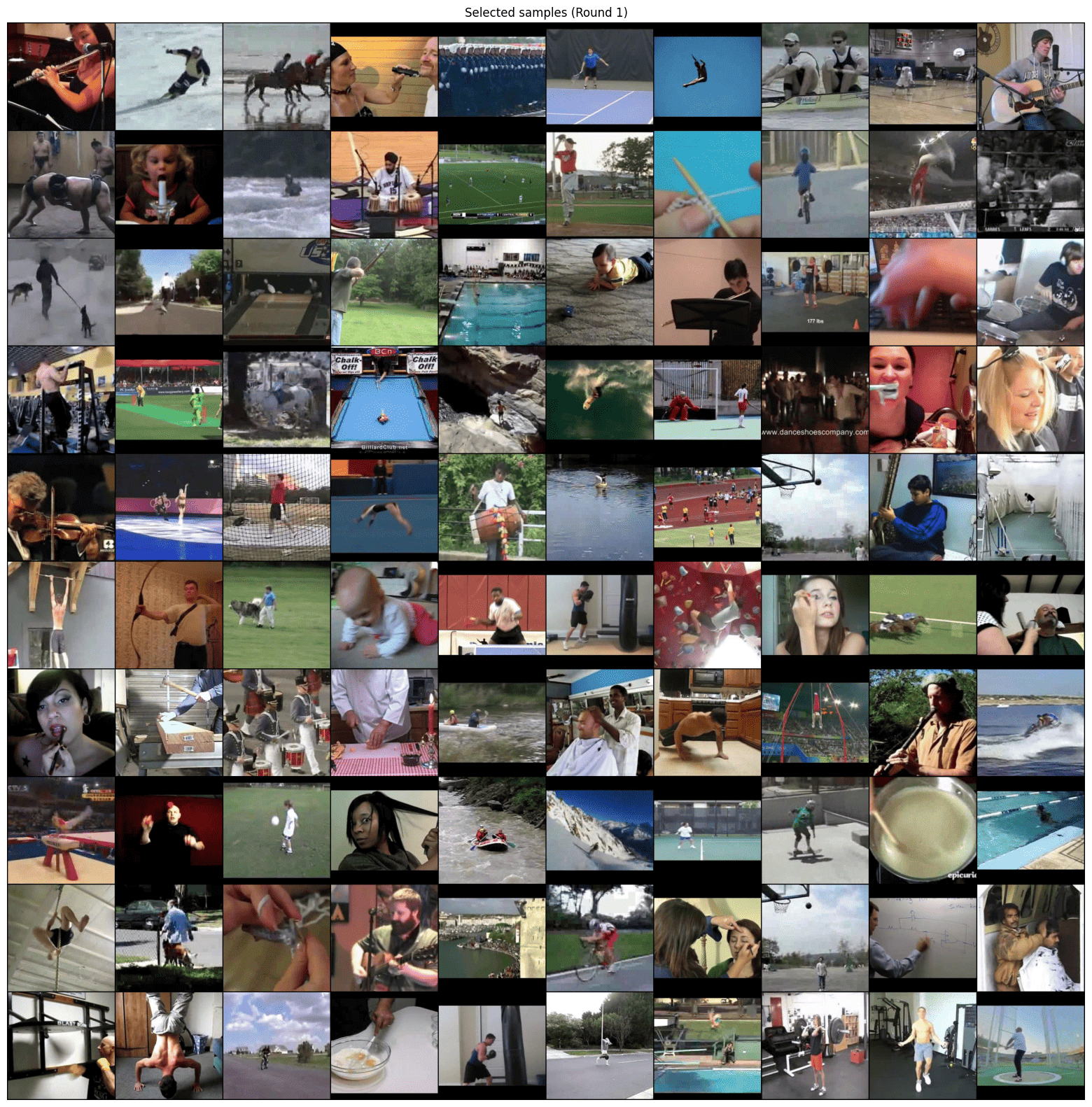}
    \caption{Visualization of samples selected by \textbf{\texttt{PCoreSet}} for UCF101 dataset.}
    \label{fig:selected_samples_ucf}
\end{figure}

%%%%%%%%%%%%%%%%%%%%%%%%%%%%%%%%%%%%%%%%%%%%%%%%
%%%%%%%%%%%%%%%%%%%%%%%%%%%%%%%%%%%%%%%%%%%%%%%%
%%%%%%%%%%%%%%%%%%%%%%%%%%%%%%%%%%%%%%%%%%%%%%%%

\clearpage
\section{The Use of LLMs}\label{sec:llm}
We used LLMs solely for light editing such as correcting grammatical errors and polishing some words. They did not contribute to research ideation, experiments, analysis, or substantive writing.

\newpage

\end{document}